\documentclass[a4paper]{cas-dc}

\PassOptionsToPackage{dvipsnames,table,xcdraw}{xcolor}
\usepackage[numbers]{natbib}
\usepackage{graphicx}
\usepackage[T1]{fontenc}
\usepackage{amsmath}
\usepackage{amsfonts}
\usepackage{amsthm}
\usepackage{booktabs}
\usepackage{tcolorbox}
\usepackage{algorithm}
\usepackage[noend]{algpseudocode}
\usepackage{subcaption}
\usepackage{multirow}
\usepackage{array}
\usepackage{makecell}
\usepackage{placeins}
\usepackage{balance}
\usepackage{diagbox}
\usepackage{float}

\def\tsc#1{\csdef{#1}{\textsc{\lowercase{#1}}\xspace}}
\tsc{WGM}
\tsc{QE}

\newtheoremstyle{problemstyle}%
  {1em}   
  {1em}   
  {}      
  {}      
  {\bfseries} 
  {}      
  {.5em}  
  {%
    \thmname{#1}%
    \thmnumber{ #2}%
    \thmnote{ #3}%
  }
\theoremstyle{problemstyle}
\newdefinition{definition}{Definition}
\newtheorem{problem}{Problem}
\newtheorem{theorem}{Theorem}

\newcommand{\methodacronym}{HyAL}

\begin{document}

\let\WriteBookmarks\relax
\def\floatpagepagefraction{1}
\def\textpagefraction{.001}

\shorttitle{}

\shortauthors{Printzios and Chatzilygeroudis}

\title [mode = title]{Hybrid Augmented Lagrangian Method for General Constrained Optimization via Evolutionary Algorithms}

\author[1]{Lampros Printzios}[orcid=0009-0009-9205-9715]
\ead{printzios_lampros@ac.upatras.gr}

\author[1]{Konstantinos Chatzilygeroudis}[orcid=0000-0003-3585-1027]
\ead{costashatz@upatras.gr}

\affiliation[1]{organization={Laboratory of Automation \& Robotics (LAR), Department of Electrical \& Computer Engineering, University of Patras},
                addressline={University Campus},
                city={Patras},
                postcode={GR-26504},
                state={Achaia},
                country={Greece}}

\begin{abstract}
    Constrained Optimization Problems are crucial in fields such as engineering, economics, and robotics, where high-dimensional search spaces and complex objectives and constraints are common. Numerical optimization methods, including Feasible Direction, Interior Point, and Sequential Quadratic Programming, have shown strong performance in finding feasible local optima, but require accurate analytical gradients and effective initialization, which can be challenging in real-world settings. Evolutionary Algorithms, on the other hand, offer gradient-free search and robustness to noisy landscapes, managing to detect global optima more often than numerical methods, but they often suffer from high computational costs and slow convergence. In this work, we propose a Hybrid Augmented Lagrangian (\methodacronym) method that integrates the AL framework's constraint-handling strengths with the exploratory power of population-based search. Our approach employs evolutionary techniques to solve subproblems within the AL iterations, promoting exploration and aiding in the escape from local optima. We conduct extensive experiments on benchmark optimization problems, comparing our method against state-of-the-art optimizers, including IPOPT and CMA-ES, and a standalone evolutionary optimization baseline (with constraint enforcement via penalties). In addition, we evaluate four population-based methods integrated within the AL framework to study the effect of different evolutionary solvers.
    Our results show that \methodacronym\ consistently produces high-quality solutions across the benchmark suite. It outperforms purely evolutionary approaches and scales more effectively to high-dimensional constrained problems, where evolutionary-only methods often struggle. \methodacronym\ also surpasses state-of-the-art numerical optimization algorithms on complex landscapes containing numerous local minima and saddle points.
\end{abstract}


\begin{keywords}
Constrained optimization \sep
Augmented Lagrangian method \sep
Evolutionary algorithms \sep
Hybrid optimization \sep
Constraint handling \sep
Particle Swarm Optimization
\end{keywords}

\maketitle

\begin{figure*}[t]
  \centering
  \includegraphics[width=\textwidth]{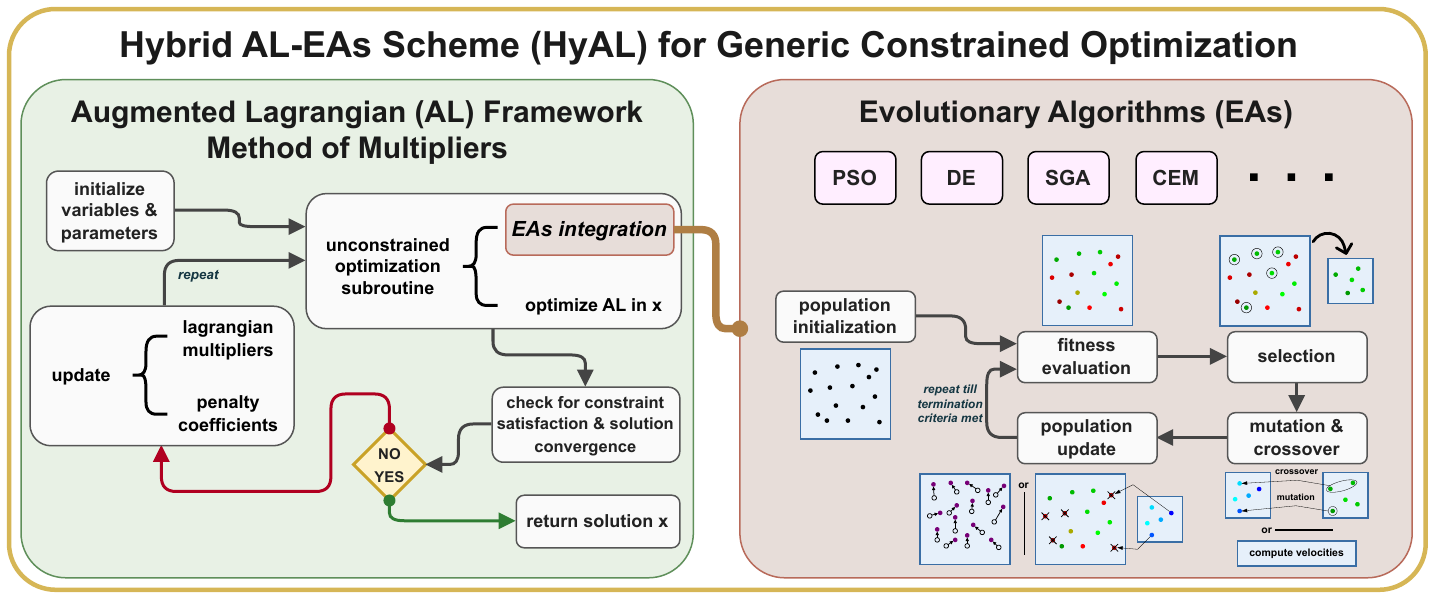}
  \caption{Hybrid Augmented Lagrangian (\methodacronym) Method Overview}
  \label{fig:HyAL_flowdiagram}
\end{figure*}

\section{Introduction \& Related Work} \label{sec:intro}
\emph{Constraint Optimization Problems} (COPs) appear in many diverse research fields and applications, including, among others, structural optimization, engineering design, VLSI design, economics, allocation and location problems, robotics, and optimal control problems \cite{floudas1990collection,himmelblau2018applied,tsiatsianas2025comparative,tsikelis2025ahmp,wensing2023optimization}. All these real-world problems are typically represented by a mathematical model that can contain both binary and continuous variables, as well as a set of linear and non-linear constraints, ranging from simple, low-dimensional numerical functions to high-dimensional noisy estimates.

The \emph{Numerical Optimization} literature \cite{himmelblau2018applied,nocedal2006numerical} has provided us with a wide range of powerful tools to address these problems. Methods such as \emph{Feasible Direction} (FD) \cite{BECK2022517}, \emph{Sequential Quadratic Programming} (SQP) \cite{nocedal2006numerical}, \emph{Generalized Gradient Descent} (GGD) \cite{norkin2020gradients}, and \emph{Interior Point} (IP) \cite{wachter2006implementation} are able to effectively solve COPs even in high dimensions, as well as handle problems in cases where the objective function or any of the constraints is nonlinear or/and non-convex. Additionally, Augmented Lagrangian (AL)-based techniques have emerged as powerful methods for handling constrained optimization problems, ensuring feasibility while maintaining efficient convergence \cite{bambade2022prox,birgin2014practical,jallet2025proxddp}. Despite their success, these methods require the availability of the exact gradient (and hessian) values of both the objective and constraint functions, which can be difficult to obtain in several real-world situations where only approximations are available. Moreover, it is well known that in practice, these methods require good initialization to achieve reasonable convergence rates.

On the other hand, many \emph{Evolutionary Algorithms} (EAs) have been proposed for solving general optimization and COPs \cite{chatzilygeroudis2021quality,chatzilygeroudis2023fast,chootinan2006constraint,d2021gga,elsayed2013particle,jain2013evolutionary,parsopoulos2005unified,sun2022particle,tsakonas2025vector} and have been applied successfully to numerous COPs, even in cases where the corresponding objective function values are corrupted by noise. Most of the methods do not require knowledge of the gradients \cite{jain2013evolutionary,parsopoulos2005unified}, while some of them attempt to improve convergence or performance by using the gradient information \cite{chatzilygeroudis2023fast,chootinan2006constraint}. Nevertheless, EAs are known to require many function evaluations to find high-performing solutions and can often fail to identify the global optimum.

The methods most closely related to our approach are those that integrate the AL framework within a Particle Swarm Optimization (PSO) procedure \cite{jansen2011_PSO,khoukhi2011non,sedlaczek2006_PSO}. However, existing studies are limited to small-scale problems and fail to provide insights into when and why this hybridization is effective. Moreover, these methods typically attempt to insert the AL framework into an evolutionary approach, while we deal with the problem from the opposite perspective.

In particular, we introduce a novel \textbf{Hybrid Augmented Lagrangian (\methodacronym) method} that combines the robustness of the AL framework with the exploratory power of population-based algorithms. Our approach (see Fig.~\ref{fig:HyAL_flowdiagram}) employs an evolutionary or population-based method to solve the unconstrained subproblem at each AL iteration. By leveraging the AL mechanism for constraint enforcement and utilizing stochastic search strategies to escape local optima, our method synergizes the strengths of both paradigms.

We evaluate our approach through extensive experiments on challenging benchmark problems. Furthermore, we compare four population-based methods combined with the AL framework and contrast them with established numerical methods (like IPOPT) and evolution strategies (e.g. CMA-ES). Our results provide a comprehensive analysis of the strengths and weaknesses of each approach, showcasing the conditions under which the proposed hybridization offers a significant performance advantage.

\section{Problem Formulation} \label{sec:problem_formulation}
The general constrained optimization problem is defined as the search for the decision-variable vector $\boldsymbol{x}\in\mathbb{R}^n$, that optimizes an objective function $f(\boldsymbol{x}):\mathbb{R}^n\to\mathbb{R}$. For $\boldsymbol{x}$ to be a feasible solution, it must satisfy a set of equality and inequality constraints, which can be written as vector functions $\boldsymbol{h}(\boldsymbol{x}) = \begin{bmatrix} h_1(\boldsymbol{x}), \ldots, h_l(\boldsymbol{x}) \end{bmatrix}^\top$ and $\boldsymbol{g}(\boldsymbol{x}) = \begin{bmatrix} g_1(\boldsymbol{x}), \ldots, g_m(\boldsymbol{x}) \end{bmatrix}^\top$ respectively. We are interested in constrained minimization problems, which are structured as shown below:
\begin{equation}
    \begin{aligned}
        \boldsymbol{x}_{\star} & = \arg\min_{\boldsymbol{x\in\mathbb{R}^n}} f(\boldsymbol{x}) \\
        \text{s.t.:} \quad     & h_i(\boldsymbol{x}) = 0, \quad i = 1, 2, \dots, l                  \\
                               & g_j(\boldsymbol{x}) \leq 0, \quad j = 1, 2, \dots, m               \\
    \end{aligned}
    \label{eq:constrained_optimization_problem}
\end{equation}

\noindent where $\bigl(\boldsymbol{x}_{\star}, f(\boldsymbol{x}_{\star})\bigr)$ is the global minimum point. We may also assume that $\boldsymbol{x}$ is bounded component-wise by lower limits $\boldsymbol{l_b} = \begin{bmatrix} l_{b,1}, \ldots, l_{b,n} \end{bmatrix}^\top \in (\mathbb{R}\cup\{-\infty\})^n$ and upper limits $\boldsymbol{u_b} = \begin{bmatrix} u_{b,1}, \ldots, u_{b,n} \end{bmatrix}^\top \in (\mathbb{R}\cup\{+\infty\})^n$, meaning that $x_i \in [l_{b,i}, u_{b,i}]$, $\forall i = 1, \ldots, n$, with the corresponding inequalities being $g_{\boldsymbol{l_b},i} = l_{b,i} - x_i \leq 0$ and $g_{\boldsymbol{u_b},i} = x_i - u_{b,i} \leq 0$, $\forall i = 1, \ldots, n$. In our implementation, these box constraints are enforced by projecting each variable onto its admissible interval, resulting in a computationally efficient procedure.

In order to tackle the COP, we construct the corresponding classical Lagrangian function as follows \cite{birgin2014practical,nocedal_numerical_optimization}:
\begin{equation}
    \mathcal{L}(\boldsymbol{x}, \boldsymbol{\lambda}, \boldsymbol{\mu}) = f(\boldsymbol{x}) + \boldsymbol{\lambda}^\top\boldsymbol{h}(\boldsymbol{x}) + \boldsymbol{\mu}^\top\boldsymbol{g}(\boldsymbol{x})
    \label{eq:lagrangian}
\end{equation}

\noindent where the Lagrange multipliers $\boldsymbol{\lambda}=[\lambda_1, \ldots, \lambda_l]^\top\in\mathbb{R}^l$ and $\boldsymbol{\mu}=[\mu_1, \ldots, \mu_m]^\top\in\mathbb{R}^m$ are the dual variables for the equality and inequality constraints.

\begin{theorem}[Karush-Kuhn-Tucker (or Saddle Point) Theorem for constrained optimization]
    If $(\boldsymbol{x}_{\star}, \boldsymbol{\lambda}_{\star}, \boldsymbol{\mu}_{\star})$ is a saddle point for the Lagrangian (local minimum along the primal direction of $\boldsymbol{x}$ and local maximum along the dual directions of $\boldsymbol{\lambda}$, $\boldsymbol{\mu}$), then it is also a local optimum for the constrained optimization problem~\eqref{eq:constrained_optimization_problem} \cite{Tabak1971OptimalCB}.
\end{theorem}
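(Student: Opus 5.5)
The plan is the classical saddle-point argument. The hypothesis has to be read with the dual domain that makes the Lagrangian meaningful: $\boldsymbol{\lambda}\in\mathbb{R}^l$ is free and $\boldsymbol{\mu}\ge \boldsymbol{0}$ componentwise. The saddle property then reads
\begin{equation*}
\mathcal{L}(\boldsymbol{x}_\star,\boldsymbol{\lambda},\boldsymbol{\mu}) \le \mathcal{L}(\boldsymbol{x}_\star,\boldsymbol{\lambda}_\star,\boldsymbol{\mu}_\star) \le \mathcal{L}(\boldsymbol{x},\boldsymbol{\lambda}_\star,\boldsymbol{\mu}_\star)
\end{equation*}
for all admissible $(\boldsymbol{\lambda},\boldsymbol{\mu})$ and all $\boldsymbol{x}$ in a neighbourhood $U$ of $\boldsymbol{x}_\star$. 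Since $\mathcal{L}$ is affine in the duals, the dual inequality is global in $(\boldsymbol{\lambda},\boldsymbol{\mu})$ even if it is stated only locally. An affine function that has a local maximum on a convex set also attains its global maximum there.

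\textbf{Step 1: the left inequality gives feasibility and complementary slackness.} Using the formula for $\mathcal{L}$ in \eqref{eq:lagrangian}, the left inequality becomes
\begin{equation*}
(\boldsymbol{\lambda}-\boldsymbol{\lambda}_\star)^\top \boldsymbol{h}(\boldsymbol{x}_\star) + (\boldsymbol{\mu}-\boldsymbol{\mu}_\star)^\top \boldsymbol{g}(\boldsymbol{x}_\star) \le 0 .
\end{equation*}
Choose $\boldsymbol{\mu}=\boldsymbol{\mu}_\star$ and $\boldsymbol{\lambda}=\boldsymbol{\lambda}_\star \pm t\,\boldsymbol{h}(\boldsymbol{x}_\star)$, and let $t>0$ grow. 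This forces $\boldsymbol{h}(\boldsymbol{x}_\star)=\boldsymbol{0}$.

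Next choose $\boldsymbol{\lambda}=\boldsymbol{\lambda}_\star$ and $\boldsymbol{\mu}=\boldsymbol{\mu}_\star+\boldsymbol{e}_j$. This gives $g_j(\boldsymbol{x}_\star)\le 0$ for every $j$, so $\boldsymbol{x}_\star$ is feasible.

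Finally, taking $\boldsymbol{\mu}=\boldsymbol{0}$ gives $\boldsymbol{\mu}_\star^\top\boldsymbol{g}(\boldsymbol{x}_\star)\ge 0$. Since $\boldsymbol{\mu}_\star\ge\boldsymbol{0}$ and $\boldsymbol{g}(\boldsymbol{x}_\star)\le\boldsymbol{0}$, this product is also $\le 0$. Hence $\boldsymbol{\mu}_\star^\top\boldsymbol{g}(\boldsymbol{x}_\star)=0$, and so $\mathcal{L}(\boldsymbol{x}_\star,\boldsymbol{\lambda}_\star,\boldsymbol{\mu}_\star)=f(\boldsymbol{x}_\star)$.

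\textbf{Step 2: the right inequality gives local optimality.} Take any feasible $\boldsymbol{x}\in U$. Then $\boldsymbol{h}(\boldsymbol{x})=\boldsymbol{0}$ and $\boldsymbol{\mu}_\star^\top\boldsymbol{g}(\boldsymbol{x})\le 0$, so
\begin{equation*}
f(\boldsymbol{x}_\star)=\mathcal{L}(\boldsymbol{x}_\star,\boldsymbol{\lambda}_\star,\boldsymbol{\mu}_\star)\le \mathcal{L}(\boldsymbol{x},\boldsymbol{\lambda}_\star,\boldsymbol{\mu}_\star)= f(\boldsymbol{x})+\boldsymbol{\mu}_\star^\top \boldsymbol{g}(\boldsymbol{x})\le f(\boldsymbol{x}).
\end{equation*}
Thus $\boldsymbol{x}_\star$ is a local minimizer of \eqref{eq:constrained_optimization_problem}. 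If the box constraints are present, they are simply absorbed into $\boldsymbol{g}$, and nothing in the argument changes.

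\textbf{Expected obstacle.} The only delicate point is interpretive, not computational. The statement does not impose $\boldsymbol{\mu}_\star\ge\boldsymbol{0}$ or restrict the dual domain. Without that restriction, the "maximum in $\boldsymbol{\mu}$" would force $\boldsymbol{g}(\boldsymbol{x}_\star)=\boldsymbol{0}$, which is too strong; the sign of the multiplier is also what Step 2 needs. I would therefore state explicitly at the outset that the saddle point is taken over $\mathbb{R}^n\times\mathbb{R}^l\times\mathbb{R}^m_{\ge 0}$, as in the cited source. I would also note the affine-in-duals remark above, so that a merely local dual maximum suffices for Step 1. No differentiability or constraint qualification is needed.
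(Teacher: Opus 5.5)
Your proof is correct; the paper states this result without proof and only cites \cite{Tabak1971OptimalCB}, and your argument is the standard saddle-point sufficiency proof behind that citation (the dual-side inequality gives feasibility and complementary slackness, and the primal-side inequality gives local optimality). Your insistence on the dual domain $\mathbb{R}^l\times\mathbb{R}^m_{\ge 0}$ is genuinely necessary, not merely interpretive: with $\boldsymbol{\mu}$ unrestricted, taking $f(x)=x$ and $g(x)=x\le 0$ makes $(x_\star,\mu_\star)=(0,-1)$ a saddle point of $\mathcal{L}=(1+\mu)x$, yet $x_\star=0$ is not a local minimizer.
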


The Augmented Lagrangian \cite{birgin2014practical} can be written as:
\begin{equation}
    \begin{split}
        & \mathcal{L}_A(\boldsymbol{x}, \boldsymbol{\lambda}, \boldsymbol{\mu}; \boldsymbol{\rho}^{\lambda}, \boldsymbol{\rho}^\mu) = \mathcal{L}(\boldsymbol{x}, \boldsymbol{\lambda}, \boldsymbol{\mu})\; + \\
        & + \frac{1}{2}\sum_{i=1}^{l} \rho^\lambda_i h_i^2(\boldsymbol{x}) + \frac{1}{2}\sum_{j=1}^{m} \rho^\mu_j\max(0, g_j(\boldsymbol{x}))^2
    \end{split} \label{eq:augmented_lagrangian}
\end{equation}

\noindent where $\boldsymbol{\rho}^\lambda = \begin{bmatrix} \rho_1^\lambda, \ldots, \rho_l^\lambda \end{bmatrix}^\top \in \mathbb{R}^l$, $\boldsymbol{\rho}^\mu = \begin{bmatrix} \rho_1^\mu, \ldots, \rho_m^\mu \end{bmatrix}^\top\in\mathbb{R}^m$ denote the penalty parameters associated with the equality and inequality constraints, respectively. For sufficiently large penalty coefficients $\boldsymbol{\rho}$, these terms strongly penalize constraint violations and encourage feasible solutions.

\section{Hybrid Augmented Lagrangian Method} \label{sec:HyAL_method}

Our \methodacronym\ approach consists of two main parts: a) the \textbf{Augmented Lagrangian Framework} (AL) \emph{outer loop}, and b) the \textbf{Evolutionary Algorithms} (EAs) \emph{inner loop}. These are shown in Fig.~\ref{fig:HyAL_flowdiagram}, and are further discussed in the following sections. Contrary to previous methods \cite{chatzilygeroudis2023fast,jansen2011_PSO,khoukhi2011non,sedlaczek2006_PSO}, instead of incorporating ideas from the numerical optimization literature inside the evolutionary pipeline, we take the inverse approach and we embed evolutionary algorithms inside the AL framework.

In more detail, while the first step of the AL framework \eqref{eq:solution_estimation} is typically solved using a numerical optimization method, in \methodacronym, we propose to employ an evolutionary algorithm to address the unconstrained minimization subproblems (see Algo.~\ref{alg:AL_outer_loop_pseudocode}). Our approach leverages the fundamental property of the AL framework: \emph{if the unconstrained minimization subproblems are solved accurately, the overall method is guaranteed to converge to a feasible and, under appropriate assumptions, optimal point of the original constrained minimization problem} \cite{birgin2014practical}. We will experiment with four different evolutionary methods for solving the unconstrained subproblems.


\subsection{Augmented Lagrangian Framework} \label{sec:al_framework}
For the construction of our Hybrid Augmented Lagrangian Framework, we will rely on the Method of Multipliers \cite{birgin2014practical,hestenes1969_AL,nocedal_numerical_optimization,powell1978_AL}. Having the already stated Augmented Lagrangian formula, \eqref{eq:augmented_lagrangian}, we start by choosing some initial values $\boldsymbol{x}_0$, $\boldsymbol{\lambda}_0$, and $\boldsymbol{\mu}_0$ for the decision and dual variables. Then, $\forall k = 0, \ldots, iter^{AL}_{max}$, where $iter^{AL}_{max}$ is a predetermined maximum number of iterations, we:
\begin{enumerate}
    \item Calculate a new estimate for $\boldsymbol{x}$, solving the following unconstrained minimization subproblem. It can be approximately solved by a sufficiently accurate (meta)heuristic method, whose population may be initialized near the previous solution $\boldsymbol{x}_k$ to leverage \emph{warm-start} information.
        \begin{equation}
            \boldsymbol{x}_{k+1} \approx \arg\min_{\boldsymbol{x}}\mathcal{L}_A(\boldsymbol{x}, \boldsymbol{\lambda}_k, \boldsymbol{\mu}_k; \boldsymbol{\rho}_k^{\lambda}, \boldsymbol{\rho}_k^{\mu})
            \label{eq:solution_estimation}
        \end{equation}

        \noindent \item Update the Lagrange multipliers for equality and inequality constraints ($\odot$ denotes component-wise multiplication):
        \begin{equation}
          \begin{aligned}
            \boldsymbol{\lambda}_{k+1} & = \boldsymbol{\lambda}_k + \boldsymbol{\rho}_k^{\lambda} \odot \boldsymbol{h}(\boldsymbol{x}_{k+1})                \\
            \boldsymbol{\mu}_{k+1}     & = \max(\boldsymbol{0},\ \boldsymbol{\mu}_k + \boldsymbol{\rho}_k^{\mu} \odot \boldsymbol{g}(\boldsymbol{x}_{k+1}))
          \end{aligned} \label{eq:lagrange_multipliers_update}
        \end{equation}

        \noindent \item Update the penalty parameters using a scaling factor $\beta > 1$ and a predefined constant $\varepsilon_{\rho} > 0$ to check for violations of equality and inequality constraints (we also want to keep them bounded between some $\rho_{min}$ and $\rho_{max}$ limits).

        \begin{equation}
            \begin{aligned}
                \rho_{i,k+1}^{\lambda} & = \begin{cases} \beta \cdot \rho_{i,k}^{\lambda},   & \lvert h_i(\boldsymbol{x}_{k+1}) \rvert > \varepsilon_{\rho}   \\ 
                  \frac{\rho_{i,k}^{\lambda}}{\beta}, & \lvert h_i(\boldsymbol{x}_{k+1}) \rvert \le \varepsilon_{\rho} \\ 
                                           \end{cases} \\
                \rho_{j,k+1}^{\mu}     & = \begin{cases} \beta \cdot \rho_{j,k}^{\mu},   & \max(0,\ g_j(\boldsymbol{x}_{k+1})) > \varepsilon_{\rho}    \\ 
                  \frac{\rho_{j,k}^{\mu}}{\beta}, & \max(0,\ g_j(\boldsymbol{x}_{k+1})) \leq \varepsilon_{\rho} \\ 
                                           \end{cases} \\
                &\forall i = 1, \ldots, l \quad \text{and} \quad \forall j = 1, \ldots, m
            \end{aligned} \label{eq:penalty_parameters_update}
        \end{equation}
        
\end{enumerate}

The AL outer loop is terminated either when a maximum number of iterations $iter^{AL}_{max}$ is reached, or in a number of $k \leq iter^{AL}_{max}$ steps when the practical conditions below hold for $stag^{AL}_{max}$ consecutive iterations:
\begin{equation}
    \left|f(\boldsymbol{x}_k)-f(\boldsymbol{x}_{k-1})\right| \leq \varepsilon_s \quad (\text{or}\; \|\boldsymbol{x}_k-\boldsymbol{x}_{k-1}\|_2 \leq \varepsilon_s)
    \label{eq:solution_convergence_conditions}
\end{equation}
\begin{equation}
    \begin{aligned}
        \mathcal{V}_{c}(\boldsymbol{x}_k) & =
        \left\| \begin{bmatrix} \boldsymbol{h}(\boldsymbol{x}_k) \\ \max(\boldsymbol{0}, \boldsymbol{g}(\boldsymbol{x}_k)) \end{bmatrix} \right\|_2 \leq \\
        & \leq \sum_{i=1}^{l}\left|h_i(\boldsymbol{x}_k)\right|
        +
        \sum_{j=1}^{m}\max\!\left(0,g_j(\boldsymbol{x}_k)\right)
        \leq \varepsilon_c \\
    \end{aligned} \label{eq:constraint_violation_conditions}
\end{equation}

\noindent Here, $\varepsilon_s$ and $\varepsilon_c$ are sufficiently small tolerances to ensure that objective / solution - based convergence is achieved, Eq.~\eqref{eq:solution_convergence_conditions}, and that constraint violations do not exceed a certain threshold, Eq.~\eqref{eq:constraint_violation_conditions}. To avoid premature termination, a stagnation counter is used: when the convergence conditions are met, the counter is incremented, and the algorithm stops only if this occurs $stag^{AL}_{max}$ times in a row. An analogous stagnation mechanism also exists in the EAs inner loop.

To understand how the AL method of multipliers can lead to a candidate solution $\boldsymbol{x}_{\star}$ of the original COP in \eqref{eq:constrained_optimization_problem}, we need to look at the \textit{Karush-Kuhn-Tucker (KKT) first-order necessary optimality conditions}, and how our \methodacronym\ approach attempts to enforce them for a point $\boldsymbol{x}_{\star}$ inside the search space\footnote{In the theoretical KKT formulation below, the box bounds are treated explicitly as ordinary inequality constraints. In our practical implementation, however, they are enforced by projection onto the admissible intervals, while preserving the underlying theoretical formulation.}.

\begin{itemize}
    \item {\textbf{Stationarity} (at convergence to a feasible point, the subproblem \eqref{eq:solution_estimation} is solved to sufficient accuracy, so the Lagrangian gradient / stationarity residual vanishes):}
    $$
        \nabla_{\boldsymbol{x}} \mathcal{L}(\boldsymbol{x}_{\star}, \boldsymbol{\lambda}_{\star}, \boldsymbol{\mu}_{\star})
        = \nabla_{\boldsymbol{x}} f + \nabla_{\boldsymbol{x}} \boldsymbol{h} \cdot \boldsymbol{\lambda}_{\star} + \nabla_{\boldsymbol{x}} \boldsymbol{g} \cdot \boldsymbol{\mu}_{\star} \Big|_{\boldsymbol{x}_{\star}}
        = \boldsymbol{0}
    $$
    \item {\textbf{Primal Feasibility} (expresses constraint satisfaction):}
    $$
        \boldsymbol{h}(\boldsymbol{x}_{\star}) = \boldsymbol{0}, \qquad
        \boldsymbol{g}(\boldsymbol{x}_{\star}) \leq \boldsymbol{0}
    $$
    \item {\textbf{Dual Feasibility} (it is maintained by the projection in the update of $\boldsymbol{\mu}$ in \eqref{eq:lagrange_multipliers_update}):}
    $$
        \boldsymbol{\mu}_{\star} \geq \boldsymbol{0}
    $$
    \item {\textbf{Complementary Slackness} (inactive inequalities have zero multipliers, enforced via updates of $\boldsymbol{\mu}$ and $\boldsymbol{\rho^\mu}$ in \eqref{eq:lagrange_multipliers_update},~\eqref{eq:penalty_parameters_update}, as constraints are gradually satisfied):}
    $$
        \boldsymbol{\mu}_{\star} \odot \boldsymbol{g}(\boldsymbol{x}_{\star}) = \boldsymbol{0}
    $$
\end{itemize}

\begin{theorem}[KKT first-order optimality conditions under regularity conditions]
    Let $\boldsymbol{x}_\star$ be a local minimizer of the COP \eqref{eq:constrained_optimization_problem}, and assume that all objective and constraint functions are continuously differentiable. If a suitable regularity condition holds (such as LICQ, see Appendix~\ref{appendix:regularity_conditions}) at $\boldsymbol{x}_\star$, then there exist multipliers $\boldsymbol{\lambda}_\star\in\mathbb{R}^l$ and $\boldsymbol{\mu}_\star\in\mathbb{R}^m$ such that the KKT first-order optimality conditions hold \cite{birgin2014practical,nocedal_numerical_optimization}.
\end{theorem}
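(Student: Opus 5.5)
The plan is the classical two-stage argument. First, a geometric necessary condition at the local minimizer: no feasible first-order descent direction exists. Second, an algebraic theorem of the alternative that turns this into the multipliers. Throughout, let $\mathcal{A}(\boldsymbol{x}_\star)=\{j: g_j(\boldsymbol{x}_\star)=0\}$ be the active inequality set. I take LICQ as the regularity condition: the vectors $\{\nabla h_i(\boldsymbol{x}_\star)\}_{i=1}^{l}\cup\{\nabla g_j(\boldsymbol{x}_\star)\}_{j\in\mathcal{A}}$ are linearly independent.

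\textbf{Step 1 (tangent cone equals linearized cone).} Define the linearized cone
$$\mathcal{F}=\{\boldsymbol{d}:\nabla h_i(\boldsymbol{x}_\star)^\top\boldsymbol{d}=0\ \forall i,\ \nabla g_j(\boldsymbol{x}_\star)^\top\boldsymbol{d}\le 0\ \forall j\in\mathcal{A}\}.$$
I will show that every $\boldsymbol{d}\in\mathcal{F}$ is tangent to the feasible set, i.e.\ realized by a feasible curve $\boldsymbol{x}(t)$ with $\boldsymbol{x}(0)=\boldsymbol{x}_\star$ and $\boldsymbol{x}'(0)=\boldsymbol{d}$.

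The construction goes as follows.
\begin{itemize}
\item Let $A$ be the matrix whose rows are the gradients of the equalities and of the active inequalities. By LICQ it has full row rank, so choose $Z$ whose columns span $\ker A$.
\item Define
$$R(\boldsymbol{z},t)=\begin{bmatrix}\boldsymbol{c}(\boldsymbol{z})-tA\boldsymbol{d}\\ Z^\top(\boldsymbol{z}-\boldsymbol{x}_\star-t\boldsymbol{d})\end{bmatrix},$$
where $\boldsymbol{c}$ collects $\boldsymbol{h}$ and the active $g_j$.
\item The Jacobian $\begin{bmatrix}A\\Z^\top\end{bmatrix}$ is nonsingular, so the implicit function theorem (valid under $C^1$) gives a $C^1$ solution $\boldsymbol{z}(t)$ with $\boldsymbol{z}'(0)=\boldsymbol{d}$.
\item Along this curve, $h_i(\boldsymbol{z}(t))=0$ and $g_j(\boldsymbol{z}(t))=t\nabla g_j^\top\boldsymbol{d}\le 0$ for active $j$. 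Inactive constraints remain strictly negative for small $t$ by continuity.
\end{itemize}

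\textbf{Step 2 (first-order necessary condition).} Since $\boldsymbol{x}_\star$ is a local minimizer, $f(\boldsymbol{z}(t))\ge f(\boldsymbol{x}_\star)$ for small $t>0$. Dividing by $t$ and letting $t\to 0^+$ gives $\nabla f(\boldsymbol{x}_\star)^\top\boldsymbol{d}\ge 0$ for all $\boldsymbol{d}\in\mathcal{F}$.

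\textbf{Step 3 (Farkas).} Step 2 says that the system
$$\nabla h_i^\top\boldsymbol{d}=0,\qquad \nabla g_j^\top\boldsymbol{d}\le 0\ (j\in\mathcal{A}),\qquad \nabla f^\top\boldsymbol{d}<0$$
has no solution. Write each equality as two opposite inequalities and apply Farkas' lemma. This yields $\boldsymbol{\lambda}_\star\in\mathbb{R}^l$ and $\mu_j\ge 0$ for $j\in\mathcal{A}$ with $-\nabla f=\sum_i\lambda_i\nabla h_i+\sum_{j\in\mathcal{A}}\mu_j\nabla g_j$.

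Set $\mu_j=0$ for inactive $j$. Then:
\begin{itemize}
\item stationarity is exactly the Farkas identity above;
\item dual feasibility $\boldsymbol{\mu}_\star\ge\boldsymbol{0}$ comes from Farkas;
\item complementary slackness holds because each $\mu_j g_j$ has one factor equal to zero;
\item primal feasibility holds because $\boldsymbol{x}_\star$ is feasible.
\end{itemize}
LICQ additionally makes the multipliers unique, though that is not required.

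\textbf{Main obstacle.} I expect the main obstacle to be Step 1. It is the only place where regularity is used: without LICQ (or MFCQ) the linearized cone can strictly contain the tangent cone, and KKT multipliers may then fail to exist. For Farkas' lemma I plan to cite the standard separating-hyperplane proof, which rests on the closedness of finitely generated cones, rather than reprove it.
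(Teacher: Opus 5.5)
Your proposal is correct and follows essentially the argument the paper relies on. The paper gives no proof of its own and instead cites Nocedal--Wright, whose proof is your construction: LICQ plus the implicit-function step shows every linearized-cone direction is tangent to the feasible set, and Farkas' lemma then yields the multipliers.

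One remark on scope. Appendix~\ref{appendix:regularity_conditions} uses MFCQ instead of LICQ for Problem~\ref{prob:quadratic}, where the active gradients are linearly dependent, so your full-row-rank step is unavailable there. In that case you would run Step 2 on $\boldsymbol{d}+\epsilon\bar{\boldsymbol{d}}$, with $\bar{\boldsymbol{d}}$ the MFCQ direction, which makes $\boldsymbol{x}_\star+t(\boldsymbol{d}+\epsilon\bar{\boldsymbol{d}})$ feasible for small $t>0$ in that inequality-only setting; letting $\epsilon\to 0$ recovers the conclusion of Step 2, and Step 3 goes through unchanged.
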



So, our \methodacronym\ approach aims to drive the iterates toward a KKT point of the original COP \eqref{eq:constrained_optimization_problem}. However, it is important to note that the KKT conditions are only necessary and not generally sufficient for local optimality. A KKT point may correspond to a local minimizer, but it may also be a local maximizer, or even a saddle point.
Our proposed approach is motivated by the observation that the AL outer loop progressively enforces feasibility, while the EAs inner loop provides effective exploration and exploitation of the search space. In practice, \methodacronym\ method often produces feasible stationary points, and the experimental results in Sec.~\ref{sec:results} show that these solutions usually coincide with locally optimal, and even globally optimal, benchmark solutions.

\begin{algorithm}
    \small
    \caption{\textcolor{blue}{Augmented Lagrangian (AL)} outer loop}
    \label{alg:AL_outer_loop_pseudocode}
    \begin{algorithmic}[1]
        \State \textbf{Input:} $\boldsymbol{x}_0,\ \boldsymbol{\lambda}_0,\ \boldsymbol{\mu}_0,\ \boldsymbol{\rho}^\lambda_0,\ \boldsymbol{\rho}^\mu_0$,\ \textcolor{ForestGreen}{\textbf{\textit{EA}}}
        \State \textbf{init.:} $(\boldsymbol{x}, \boldsymbol{\lambda}, \boldsymbol{\mu}) \gets (\boldsymbol{x}_0, \boldsymbol{\lambda}_0, \boldsymbol{\mu}_0),\ (\boldsymbol{\rho}^\lambda, \boldsymbol{\rho}^\mu) \gets (\boldsymbol{\rho}^\lambda_0, \boldsymbol{\rho}^\mu_0),\ stag \gets 0$
        \State \textbf{set:} $iter^{AL}_{max},\ stag^{AL}_{max},\ \beta,\ \varepsilon_\rho,\ \varepsilon_s,\ \varepsilon_c$
        \For{$k = 0$ to $iter^{AL}_{max}$}
            \State \textcolor{purple}{$\boldsymbol{x}_{new} \gets \textcolor{ForestGreen}{\textbf{EA}} \Big[\mathcal{L}_A(\boldsymbol{x}, \boldsymbol{\lambda}, \boldsymbol{\mu}; \boldsymbol{\rho}^\lambda, \boldsymbol{\rho}^\mu)\Big]$} $\quad \rightarrow \text {to solve \eqref{eq:solution_estimation}}$
            \State $\boldsymbol{\lambda}_{new},\ \boldsymbol{\mu}_{new} \gets \text {updated via \eqref{eq:lagrange_multipliers_update}}$
            \State $\boldsymbol{\rho}_{new}^{\lambda},\; \boldsymbol{\rho}_{new}^{\mu} \gets \text {updated via \eqref{eq:penalty_parameters_update}}$
            \If{conditions \eqref{eq:solution_convergence_conditions} and \eqref{eq:constraint_violation_conditions} are satisfied}
                \State $stag \gets stag + 1$
                \If{$stag \geq stag^{AL}_{max}$}
                    \State \textbf{break} \quad $\rightarrow$ \textit{convergence \& feasibility achieved}
                \EndIf
            \Else
                \State $stag \gets 0$
            \EndIf
            \State \textbf{update}: $\boldsymbol{x} \gets \boldsymbol{x}_{new},\ \boldsymbol{\lambda} \gets \boldsymbol{\lambda}_{new},\ \boldsymbol{\rho}^{\lambda} \gets \boldsymbol{\rho}_{new}^{\lambda},\ \boldsymbol{\rho}^{\mu} \gets \boldsymbol{\rho}_{new}^{\mu}$
        \EndFor
        \State \textbf{return} $\boldsymbol{x}_{new}$
    \end{algorithmic}
\end{algorithm}

\begin{algorithm}
    \small
    \caption{\textcolor{ForestGreen}{Evolutionary Algorithm (EA)} inner loop}
    \label{alg:EAs_inner_loop_pseudocode}
    \begin{algorithmic}[1]
        \State \textbf{Input:} $\textcolor{blue}{\mathcal{L}_A}(\boldsymbol{x})$
        \State \textbf{init.}: $P \gets \{\boldsymbol{x}_1,\ldots,\boldsymbol{x}_M\},\ (\boldsymbol{x}_{best},\phi) \gets (\boldsymbol{x}_1,+\infty),\ stag \gets 0$
        \State \textbf{set}: $iter^{EA}_{max},\ stag^{EA}_{max},\ \varepsilon_{EA}$
        \For{$k = 0$ to $iter^{EA}_{max}$}
            \If{$\min_{\boldsymbol{x}\in P}\textcolor{blue}{\mathcal{L}_A}(\boldsymbol{x})$ < $\phi$}
                \State \textcolor{purple}{$\boldsymbol{x}_{best} \gets \arg\min_{\boldsymbol{x}\in P}\textcolor{blue}{\mathcal{L}_A}(\boldsymbol{x})$} $\quad \rightarrow \text {to solve \eqref{eq:solution_estimation}}$
            \EndIf
            \State $\phi_{best} \gets \textcolor{blue}{\mathcal{L}_A}(\boldsymbol{x}_{best})$
            \If{$k > 0$ and $|\phi_{best} - \phi| \leq \varepsilon_{EA}$}
                \State $stag \gets stag + 1$
                \If{$stag \geq stag^{EA}_{max}$}
                    \State \textbf{break} \quad $\rightarrow$ \textit{\textcolor{ForestGreen}{EA} convergence achieved}
                \EndIf
            \Else
                \State $stag \gets 0$
            \EndIf
            \State \textbf{update}: $\phi \gets \phi_{best},\ P \gets {\text{\textcolor{ForestGreen}{EA} rule on current P}}$
        \EndFor
        \State \textbf{return} $\boldsymbol{x}_{best}$
    \end{algorithmic}
\end{algorithm}

\subsection{Evolutionary Algorithms} \label{sec:evolutionary_algorithms}

In the following, we briefly analyze the main aspects and properties of the Evolutionary Algorithms (EAs), namely SGA, CEM, DE, and PSO, that will be utilized to solve the inner loop's unconstrained optimization subroutines \eqref{eq:solution_estimation}. Our choices were made to include strong representatives from various subfields of evolutionary computation. The unconstrained subproblem arising in each AL iteration is solved only approximately by the EAs, as these methods are metaheuristics and cannot provide an exact, analytically derived minimizer.

In our implementation (see Algo.~\ref{alg:EAs_inner_loop_pseudocode}), the EAs inner loop is terminated when the best ever found AL objective, also called fitness value $\phi$, changes by less than a small tolerance $\varepsilon_{EA}$ for a fixed number of consecutive iterations $stag^{EA}_{max}$, or when a maximum number $iter^{EA}_{max}$ of iterations is reached.

\subsubsection{\textbf{Simple Genetic Algorithm}} \label{sec:SGA}

The \emph{Simple Genetic Algorithm} (SGA)~\cite{reed2000_SGA} is a stochastic EA inspired by the natural selection theory. It begins with a random $N$-dimensional, $M$-sized population of candidate solutions $P=\{\boldsymbol{x}_1(0), \ldots, \boldsymbol{x}_M(0)\}$. Then, it evaluates their fitness, and applies selection, mutation, and crossover processes to produce new individuals, favoring fitter members, while exploring new areas of the solution space via random variations. For the $k$-th iteration the algorithmic steps are:
\begin{enumerate}
  \item Evaluate each individual in the population using a fitness function (the Augmented Lagrangian $\mathcal{L}_A$ here), and construct the evaluation set $$\mathcal{F} = \{\mathcal{L}_A(\boldsymbol{x}_1(k)), \dots, \mathcal{L}_A(\boldsymbol{x}_M(k))\}.$$
  \item Sort the evaluation set $\mathcal{F}$. Then, select the best individuals ${P}_{elite} = \{\boldsymbol{x}_1^{elite}(k), \dots, \boldsymbol{x}_\Xi^{elite}(k)\}$ (smallest evaluations) for reproduction and the worst ones ${P}_{weak} = \{\boldsymbol{x}_1^{weak}(k), \dots, \boldsymbol{x}_\Xi^{weak}(k)\}$ (largest evaluations) for substitution, where $\Xi \leq M$.
  \item For each individual $\boldsymbol{x}_i^{elite}(k) \in \mathcal{P}_{elite}$:
        \begin{enumerate}
          \item Randomly select a different $\boldsymbol{x_j^{elite}}(k) \in \mathcal{P}_{elite}$ ($i \neq j \in \{1,2,\ldots,\Xi\}$).
          \item Calculate an offspring child individual by additive mutation and subtractive crossover on the chosen elite parent individuals (we use mutation and crossover operators inspired by \cite{vassiliades2018discovering}):
                \[
                    \begin{aligned}
                        \boldsymbol{x}_i^{offspring}(k) = &\ \boldsymbol{x}_i^{elite}(k) + \boldsymbol{\varepsilon}_{mut,k} + \\
                        & + \varepsilon_{cross,k} \; (\boldsymbol{x_j}^{elite}(k) - \boldsymbol{x}_i^{elite}(k))
                    \end{aligned}
                \]
                where $\boldsymbol{\varepsilon}_{mut,k} \in \mathbb{R}^N \sim \mathcal{N}(\boldsymbol{0}, \boldsymbol{\sigma}^2_{mut,k})$ is the mutation strength, and $\varepsilon_{cross,k} \in \mathbb{R} \sim \mathcal{N}(0, \sigma^2_{cross,k})$ scales the crossover.
          \item Replace $\boldsymbol{x}_i^{weak}$ with $\boldsymbol{x}_i^{offspring}$.
        \end{enumerate}
  \item Increase $k$ by 1 and repeat the steps until the stopping criterion is met.
\end{enumerate}
SGA excels at exploring the search space and producing diverse solutions. Adjusting the standard deviations $\boldsymbol{\sigma}_{mut} \in \mathbb{R}^N$, and $\sigma_{cross} \in \mathbb{R}$ we can control the amount of variation and, consequently, the exploration range for each iteration.

\subsubsection{\textbf{Cross Entropy Method}} \label{sec:CEM}

The \emph{Cross Entropy Method} (CEM)~\cite{botev2013_CEM} is a distribution-shaping evolutionary method that generates a completely new population each iteration, with stochastic properties determined by the previous generation's elite sampled members. The algorithmic process for the $k$-th iteration is:
\begin{enumerate}
  \item Generate a random population $P=\{\boldsymbol{x}_1(k), \ldots, \boldsymbol{x}_M(k)\}$ of dimension $N$ and size $M$ obeying the normal distribution $\mathcal{N}(\boldsymbol{\mu}_k, \boldsymbol{\sigma}^2_k)$, with mean value vector $\boldsymbol{\mu}_k$ and standard deviation vector $\boldsymbol{\sigma}_k$.
  \item Evaluate each individual in the population using a fitness function (the Augmented Lagrangian $\mathcal{L}_A$ here), and construct the evaluation set $$\mathcal{F} = \{\mathcal{L}_A(\boldsymbol{x}_1(k)), \dots, \mathcal{L}_A(\boldsymbol{x}_M(k))\}.$$
  \item Sort the evaluation set $\mathcal{F}$. Then, select the best individuals ${P}_{elite} = \{\boldsymbol{x}_1^{elite}(k), \dots, \boldsymbol{x}_\Xi^{elite}(k)\}$ with the smallest evaluations (where $\Xi \leq M$).
  \item Update the mean values vector for the next generation: $\boldsymbol{\mu}_{k+1} = \frac{1}{\Xi} \sum_{i}^{\Xi} \boldsymbol{x}_i^{elite}(k)$ (initially we have $\boldsymbol{\mu}_0 = \boldsymbol{x}_0$).
  \item Update (element-wise) the standard deviation for the next generation: $\boldsymbol{\sigma}_{k+1} = \sqrt{\frac{1}{\Xi} \sum_{i}^{\Xi} (\boldsymbol{x}_i^{elite}(k) - \boldsymbol{\mu}_k)^2}$.
  \item Increase $k$ by 1 and repeat the steps until the stopping criterion is met.
\end{enumerate}

The primary advantage of CEM is its focused search, favoring the exploitation of high-quality solutions over broad exploration, which enables precise identification of local minima. But this is also a limitation, since CEM needs to start close enough to global minima in order to detect them.

\subsubsection{\textbf{Differential Evolution}} \label{sec:DE}

\emph{Differential Evolution} (DE)~\cite{deng2021_DE} is a simple and effective population-based EA designed for globally optimizing continuous functions. It iteratively evolves the population by first generating trial vectors via mutation, using scaled differences between candidate solutions and other population members. It then applies crossover between each candidate and its corresponding trial vector, followed by a greedy selection step that retains only the improved solutions.

Formally, we start with a random initial population of $M$ candidate solutions $\boldsymbol{x}_q(0) \in \mathbb{R}^N$. At iteration $k$ they have positions $\boldsymbol{x}_q(k) \in \mathbb{R}^N$, with $q\in\{1, 2, \ldots, M\}$. For the mutation step we may follow many approaches. Here, we chose a \emph{DE/rand-to-best/1}-style update \cite{deng2021_DE}, that generates a new trial vector for each candidate by combining the current candidate and the best ever candidate, together with two other randomly selected individuals:
\[
    \boldsymbol{y}_q(k) = \boldsymbol{x}_q(k) + \lambda \left(\boldsymbol{x}^{best}(k) - \boldsymbol{x}_q(k)\right) + w \left(\boldsymbol{x}_{r_1}(k) - \boldsymbol{x}_{r_2}(k)\right)
\]
where $\boldsymbol{x}^{best}(k)$ denotes the best ever individual found so far, $r_1, r_2 \neq q$ are distinct random indices, $\lambda \in \mathbb{R}$ controls the pull toward the current best ever solution, and $w \in \mathbb{R}$ is the differential weight.

In the crossover step, a new trial vector $\boldsymbol{v}_q(k)$ is produced by recombining the mutant vector with the current solution:
\[
    v_{q,j}(k) = \begin{cases}
    y_{q,j}(k), & u_j \leq c_r \text{ or } j = j_r \\ x_{q,j}(k), & \text{otherwise}
  \end{cases},\quad j=0,1,\ldots,N
\]
where $c_r \in [0,1]$ is the crossover probability, $u_j \sim \mathcal{U}(0,1)$ is an independently sampled uniform random variable for each dimension $j$, and $j_r \in \{1,2,\ldots,N\}$ is a randomly selected index that ensures that at least one component is inherited from the mutant trial vector $\boldsymbol{y}_q(k)$.

Finally, the selection step updates the population by evaluating the fitness of the crossover trial $\boldsymbol{v}_q(k)$ and retaining the better solution, using a greedy replacement rule:
\[
  \boldsymbol{x}_q(k+1) = \begin{cases}
    \boldsymbol{v}_q(k), & f(\boldsymbol{v}_q(k)) \leq f(\boldsymbol{x}_q(k)) \\ \boldsymbol{x}_q(k), & \text{otherwise}
  \end{cases},\quad k=0,1,\ldots
\]

The above iterative process balances exploration and exploitation, allowing DE to effectively search complex optimization landscapes. The algorithm's performance can be fine-tuned by adjusting the parameters $\lambda$, $w$ and $c_r$ to suit the problem's domain.

\subsubsection{\textbf{Particle Swarm Optimization}} \label{sec:PSO}

\emph{Particle Swarm Optimization} (PSO)~\cite{vrahatispso,vrahatisbook} is inspired by collective swarming behavior, in which randomly initialized particles iteratively evaluate the objective function and update their positions based on their own experience and that of neighboring particles or the entire swarm. The two main strategies are \emph{local PSO} (PSO-LS) and \emph{global PSO} (PSO-GS), where particles track their personal best ever position and also, either their neighborhood best ever or the global best ever position, respectively.

Formally, the algorithm initializes $M$ particles with random positions $\boldsymbol{x}_q(0)\in\mathbb{R}^N$ and velocities $\boldsymbol{v}_q(0)\in\mathbb{R}^N$, with $q\in\{1, 2, \ldots, M\}$, that can explore the search space. The position update for particle $q$ at iteration $k+1$ is defined as:
\[
  \boldsymbol{x}_q(k+1) = \boldsymbol{x}_q(k) + \boldsymbol{v}_q(k+1),\quad k=0,1,\ldots
\]

\noindent In \emph{local PSO}, the velocity update is:
{\small{
    $$
    \boldsymbol{v}^l_q(k+1) = \chi\left[\boldsymbol{v}_q(k) + c_1 r^l_1(\boldsymbol{x}^b_q(k)-\boldsymbol{x}_q(k)) + c_2 r^l_2(\boldsymbol{x}^{lb}_q(k)-\boldsymbol{x}_q(k))\right]
    $$
}}

\noindent whereas for \emph{global PSO}, the velocity update becomes:
{\small{
    $$
        \boldsymbol{v}^g_q(k+1) = \chi\left[\boldsymbol{v}_q(k) + c_1 r^g_1(\boldsymbol{x}^b_q(k)-\boldsymbol{x}_q(k)) + c_2 r^g_2(\boldsymbol{x}^{gb}(k)-\boldsymbol{x}_q(k))\right]
    $$
}}

\noindent Here, $\boldsymbol{x}^b_q(k)$ denotes the particle’s personal best position, $\boldsymbol{x}^{lb}_q(k)$ its neighborhood best position, and $\boldsymbol{x}^{gb}(k)$ the global best position ever found by the entire swarm. The parameters $c_1, c_2 \in \mathbb{R}_{>0}$ are the cognitive and social acceleration coefficients respectively, $r^l_1, r^l_2, r^g_1, r^g_2 \sim U(0,1)$ are uniformly distributed random variables, and $\chi \in \mathbb{R}_{>0}$ is an inertia/constriction factor analogous to a gradient-descent learning rate.

The \emph{Unified Particle Swarm Optimizer} (UPSO) \cite{parsopoulos2005unified} merges local and global strategies into a single framework, so the total velocity update is computed as the following convex combination:
\[
  \boldsymbol{v}_q(k+1) = u \cdot \boldsymbol{v}^g_q(k+1) + (1-u) \cdot \boldsymbol{v}^l_q(k+1)
\]
where $u \in [0,1]$ is a parameter that blends the two strategies ($u = 0$ yields PSO-LS, while $u = 1$ yields PSO-GS). In our implementation, we add a small Gaussian noise term to avoid stagnation, and we clamp the velocities within reasonable bounds to ensure stability. We adopt the unified approach to balance exploration and exploitation.

\section{Experimental Setup} \label{sec:experiments}
Our experimental and evaluation framework is designed to investigate three main questions:
\begin{enumerate}
  \item Can \methodacronym\ effectively tackle diverse, challenging, and high-dimensional constrained optimization problems, in terms of both solution quality and runtime?
  \item Which EA performs best in this \methodacronym\ setting?
  \item Can \methodacronym\ outperform state-of-the-art numerical optimization and evolution strategies on complex COPs?
\end{enumerate}

\subsection{Benchmark Problems} \label{sec:benchmark_problems}

To address the evaluation objectives listed above, we assess \methodacronym\ on a diverse suite of 10 constrained optimization benchmark problems drawn from the literature \cite{handbook_optimization_test_problems,sfu_optimization_test_functions,wiki_optimization_test_functions}, as they are written in Appendix~\ref{appendix:benchmark_problems}, where both the global minima and the optimal Lagrange multipliers associated with the non-box inequality constraints are reported for each COP. We report only these multipliers because variable bounds are handled separately through projection onto the prescribed intervals. The selected problems span a wide range of difficulties and structural properties, exhibiting challenging and complex search spaces due to:
\begin{itemize}
  \item highly nonlinear and nonconvex objectives (e.g. Prob. \ref{prob:rosenbrock}, \ref{prob:mishra_bird}, \ref{prob:egg_holder}),
  \item nonconvex feasible regions arising from high-order polynomials or trigonometric constraints (e.g. Prob. \ref{prob:quadratic}, \ref{prob:linear_polynomial}, \ref{prob:gomez_levy}),
  \item highly multimodal landscapes with multiple local minima (e.g. Prob.~\ref{prob:drop_wave},~\ref{prob:griewank}),
  \item equality-constrained manifolds restricting the search (e.g. Prob.~\ref{prob:power},~\ref{prob:drop_wave},~\ref{prob:double_integrator}),
  \item high-dimensionality with strongly coupled variables (e.g. Prob.~\ref{prob:double_integrator}).
\end{itemize}




\subsection{Evaluation Methodology} \label{sec:testing_evaluation_methodology}

For each constrained optimization problem, we evaluate the proposed evolutionary methods and compare them against established baselines. In particular, we consider:

\begin{itemize}
    \item \textbf{AL-EAs comparison:} the four evolutionary optimizers used within the augmented Lagrangian framework, namely \textit{SGA-AL}, \textit{CEM-AL}, \textit{DE-AL}, and \textit{PSO-AL}, in order to identify the most effective evolutionary search strategy among the chosen representative EAs.
    \item \textbf{Penalty-based baseline:} a standalone PSO implementation with a penalty formulation, \textit{PSO-PEN}, to check the evolutionary search behavior when the Augmented Lagrangian mechanism is not leveraged.
    \item \textbf{Classical Numerical baseline:} the \textit{IPOPT} (Interior Point Optimizer) solver \cite{wachter2006_IPOPT}, included as a representative state-of-the-art gradient-based and nonlinear interior point numerical optimization method.
    \item \textbf{Evolution Strategy baseline:} \textit{CMA-ES} \cite{hansen2006cma}
    (Covariance Matrix Adaptation - Evolution Strategy), included to contrast AL-EAs against a state-of-the-art derivative-free and evolutionary optimizer.
\end{itemize}

The comparison focuses on convergence to the global minimum and satisfaction of the constraints. Each constrained problem is evaluated over 100 independent replicates/runs initialized from distinct starting points, except for Problem~\ref{prob:double_integrator}, which we evaluate over 25 independent runs. For two-dimensional problems, initial points are arranged on a rectangular grid of evenly spaced values, whereas for higher-dimensional problems, they are randomly sampled from a Gaussian distribution within the prescribed bounds.

Convergence is assessed using the thresholds $\varepsilon_s = 10^{-3}$ and $\varepsilon_c = 10^{-4}$, reflecting acceptable solution accuracy while prioritizing feasibility (in many cases, the final solution accuracy and constraint satisfaction exceed these nominal thresholds before termination, due to the inserted stagnation phases $stag^{AL}_{max} = 50$ and $stag^{EA}_{max} = 100$). For EAs we set the population size to 4096 individuals and the convergence tolerance to $\varepsilon_{EA} = 10^{-6}$. The maximum number of iterations for the AL outer loop and EAs inner loop are set to $iter_{\max}^{\text{AL}} = 2{,}000$ and $iter_{\max}^{\text{EA}} = 5{,}000$, respectively (even though EAs inner loop usually terminates much earlier, especially for the simpler COPs). Any further tunable parameters mentioned in Sec.~\ref{sec:evolutionary_algorithms} related to the exact implementations of the EAs, as well as the \methodacronym\ framework implementation, can be found in the source codes available, respectively, at \url{https://github.com/NOSALRO/algevo} and \url{https://github.com/NOSALRO/halm}. Additional details regarding the other test optimizers considered in our comparisons, are provided in Sec.~\ref{sec:comparing_diff_optimizers}. Crucially, all methods are evaluated using the same external success criteria $(\varepsilon_s, \varepsilon_c)$, ensuring that error metrics, convergence times, and success rates are directly comparable across all optimizers.

Experiments were conducted on a workstation equipped with an Intel Core Ultra 9 285K processor (24 physical cores, 36 MiB L3 cache, 64GB RAM)
running Arch Linux. All methods were evaluated under identical hardware and software conditions to ensure a fair comparison.

\section{Results} \label{sec:results}

Using the parameters and criteria defined in Sec.~\ref{sec:testing_evaluation_methodology}, we can now evaluate the various optimizers on all benchmark problems in Sec.~\ref{sec:benchmark_problems}, with known global minima $\boldsymbol{x}_\star$. An estimated solution $\bar{\boldsymbol{x}}_{opt}$ will be considered $\varepsilon_s$-optimal and $\varepsilon_c$-feasible, if the conditions \eqref{eq:solution_convergence_conditions} ($\lvert f(\boldsymbol{x}_\star) - f(\bar{\boldsymbol{x}}_{opt}) \rvert \leq \varepsilon_s$ or $\|\boldsymbol{x}_\star - \bar{\boldsymbol{x}}_{opt}\|_2 \leq \varepsilon_s$, depending on the chosen convergence criterion) and \eqref{eq:constraint_violation_conditions} ($\mathcal{V}_c(\bar{\boldsymbol{x}}_{opt}) \leq \varepsilon_c$) are satisfied, respectively. For the comparisons coming next, we also define $\boldsymbol{x}_{f}$ as the final solution estimation computed by the current optimizer.


\subsection{Comparing EAs inside the AL Framework} \label{sec:comparing_AL_EAs}

\begin{figure}
    \centering
    \begin{subfigure}{0.49\columnwidth}
        \centering
        \includegraphics[width=\linewidth]{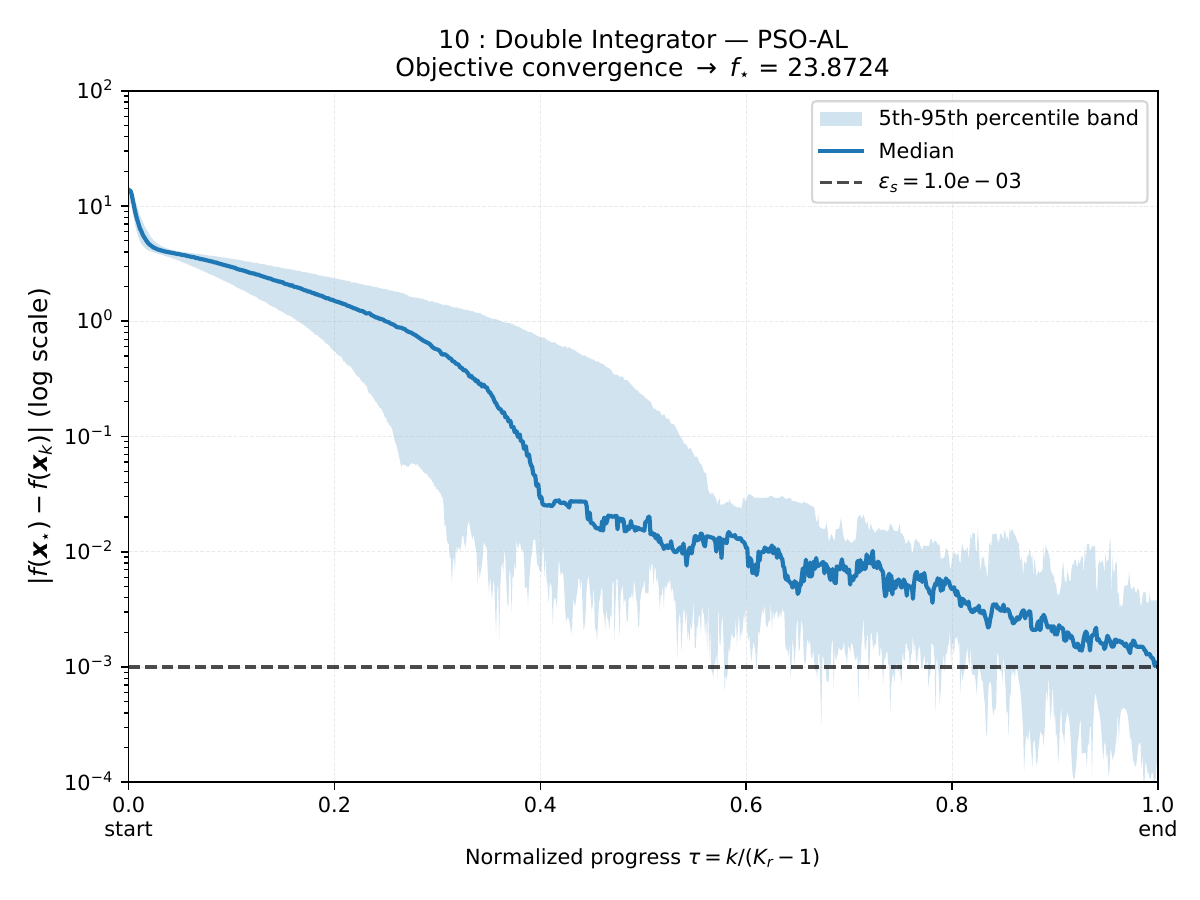}
        \caption{Objective error}
    \end{subfigure}
    \begin{subfigure}{0.49\columnwidth}
        \centering
        \includegraphics[width=\linewidth]{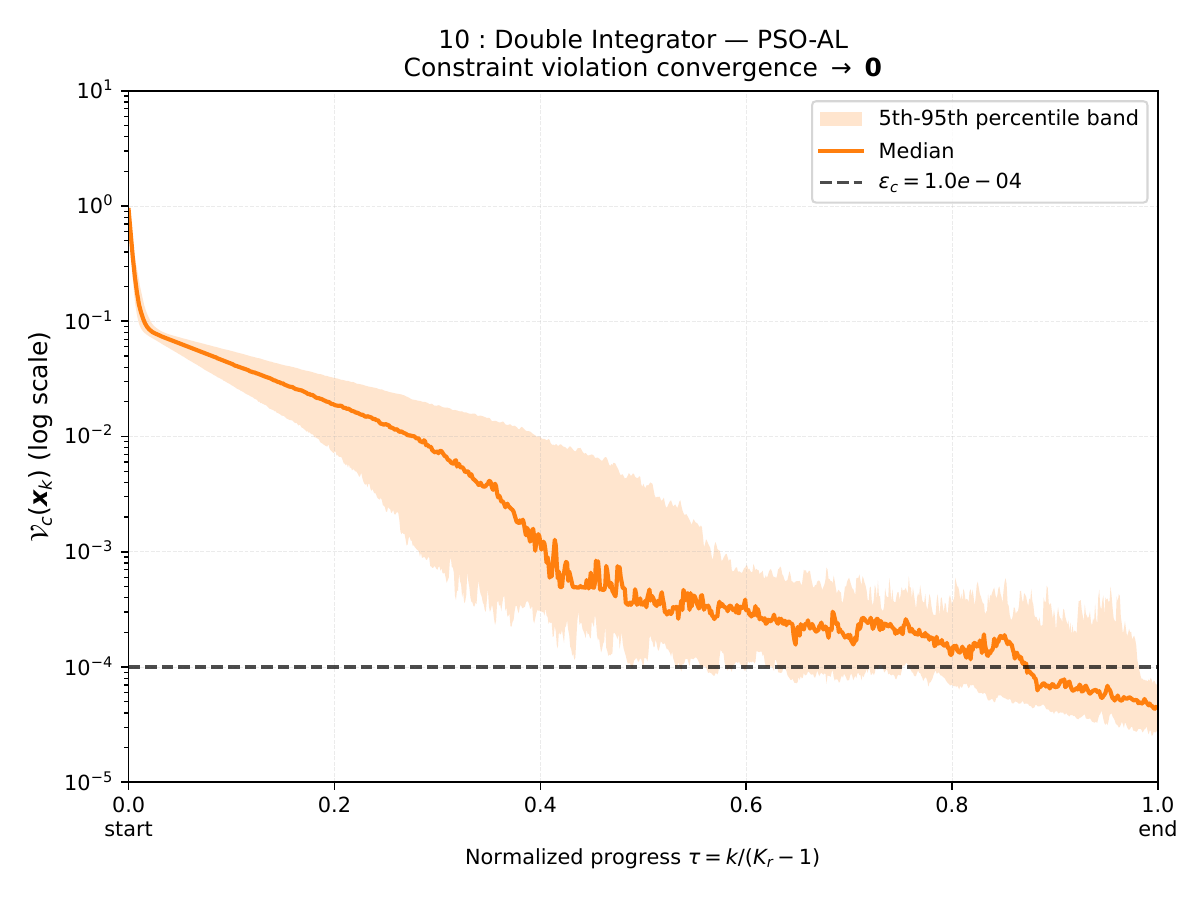}
        \caption{Constraint violation}
    \end{subfigure}
    \caption{Convergence behavior of PSO-AL on Prob.~\ref{prob:double_integrator}. The plots show the evolution of the objective error and constraint violation versus normalized progress, $\tau=k/(K_r-1)\in[0,1]$, where $k$ denotes the current AL outer loop iteration and $K_r$ the final iteration of run $r$, thus, $\tau=0$ denotes the start and $\tau=1$ the end of a run. The solid line shows the median across runs, the shaded region the 5th-95th percentile band, and the dashed horizontal line the nominal convergence threshold.}
    \label{fig:double_integrator_convergence}
\end{figure}

\begin{table*}[t]
\centering
\caption{\textbf{Aggregated performance comparison of all AL-EA optimizers (PSO-AL, DE-AL, SGA-AL, CEM-AL)} on the whole benchmark set (10 COPs), reporting the \textit{5th}, \textit{median}, and \textit{95th} percentiles of \textit{final solution absolute error}, \textit{objective absolute error} and \textit{constraint violation}. In addition, \textit{objective-value success} is evaluated under tolerances $\varepsilon_s=10^{-3}$ and $\varepsilon_c=10^{-4}$: \emph{Ever Hits} counts runs that satisfy the criteria at least once, \emph{$1^{\text{st}}$-Hit Med. Time [msec]} reports the median first-satisfaction time, and \emph{Final Hits} counts runs meeting the criteria at the final iteration. Best and worst results per problem are highlighted in green and red, respectively. For presentation purposes, numbers with absolute value less than $10^{-10}$ are considered insignificantly small and reported as $0$.}
\footnotesize
\setlength{\tabcolsep}{2pt}
\renewcommand{\arraystretch}{1.0}
\begin{tabular}{c|c|ccc|ccc|ccc|ccc}
\toprule
\multirow{2}{*}[-3pt]{\textbf{Problem}} & \multirow{2}{*}[-3pt]{\textbf{Optimizer}} & \multicolumn{3}{c|}{$\boldsymbol{\|x_\star - x_f\|_2}$} & \multicolumn{3}{c|}{$\boldsymbol{|f(x_\star) - f(x_f)|}$} & \multicolumn{3}{c|}{$\boldsymbol{\mathcal{V}_c(x_f)}$} & \multicolumn{3}{c}{\makecell{\textbf{\quad Obj. value Success \quad} \\ \textbf{$(\varepsilon_s = 10^{-3},\; \varepsilon_c = 10^{-4})$}}} \\
\cmidrule(lr){3-5}\cmidrule(lr){6-8}\cmidrule(lr){9-11}\cmidrule(lr){12-14}
& & \textbf{5\%}  & \textbf{Med.} & \textbf{95\%}
& \textbf{5\%}    & \textbf{Med.} & \textbf{95\%}
& \textbf{5\%}    & \textbf{Med.} & \textbf{95\%}
& \makecell{\textbf{Ever} \\ \textbf{Hits}} & \makecell{\textbf{$1^{\text{st}}$-Hit Med.} \\ \textbf{Time [msec]}} & \makecell{\textbf{Final} \\ \textbf{Hits}} \\
\midrule
\multirow{4}{*}{\makecell{\textbf{(\ref{prob:rosenbrock})}\\\textbf{Rosenbrock}\\\textbf{2D}}} & \textbf{PSO-AL} & 0 & \textcolor{ForestGreen}{\textbf{0}} & 0 & 0 & \textcolor{ForestGreen}{\textbf{0}} & 0 & 0 & \textcolor{ForestGreen}{\textbf{0}} & 2.8e-10 & \textcolor{ForestGreen}{\textbf{100/100}} & 7 & \textcolor{ForestGreen}{\textbf{100/100}} \\
 & \textbf{DE-AL} & 0 & \textcolor{ForestGreen}{\textbf{0}} & 0 & 0 & \textcolor{ForestGreen}{\textbf{0}} & 0 & 0 & \textcolor{ForestGreen}{\textbf{0}} & 0 & \textcolor{ForestGreen}{\textbf{100/100}} & \textcolor{ForestGreen}{\textbf{6}} & \textcolor{ForestGreen}{\textbf{100/100}} \\
 & \textbf{SGA-AL} & 0 & \textcolor{red}{\textbf{5.0e-6}} & 3.2e-5 & 0 & \textcolor{red}{\textbf{1.9e-9}} & 1.3e-8 & 1.9e-7 & \textcolor{red}{\textbf{4.8e-6}} & 3.4e-5 & \textcolor{ForestGreen}{\textbf{100/100}} & \textcolor{red}{\textbf{89.8}} & \textcolor{ForestGreen}{\textbf{100/100}} \\
 & \textbf{CEM-AL} & 0 & \textcolor{ForestGreen}{\textbf{0}} & 0 & 0 & \textcolor{ForestGreen}{\textbf{0}} & 0 & 0 & 6.1e-9 & 3.8e-8 & \textcolor{ForestGreen}{\textbf{100/100}} & 58.7 & \textcolor{ForestGreen}{\textbf{100/100}} \\
\midrule
\multirow{4}{*}{\makecell{\textbf{(\ref{prob:quadratic})}\\\textbf{Quadratic}}} & \textbf{PSO-AL} & 0 & \textcolor{ForestGreen}{\textbf{0}} & 6 & 0 & \textcolor{ForestGreen}{\textbf{0}} & 16 & 0 & \textcolor{ForestGreen}{\textbf{0}} & 0 & 78/100 & 20.4 & 78/100 \\
 & \textbf{DE-AL} & 0 & \textcolor{ForestGreen}{\textbf{0}} & 0 & 0 & \textcolor{ForestGreen}{\textbf{0}} & 0 & 0 & \textcolor{ForestGreen}{\textbf{0}} & 0 & \textcolor{ForestGreen}{\textbf{100/100}} & \textcolor{ForestGreen}{\textbf{20.2}} & \textcolor{ForestGreen}{\textbf{100/100}} \\
 & \textbf{SGA-AL} & 2.2e-5 & \textcolor{red}{\textbf{1.2e-4}} & 0.001 & 0.003 & \textcolor{red}{\textbf{0.016}} & 0.031 & 0 & \textcolor{ForestGreen}{\textbf{0}} & 1.1e-5 & \textcolor{red}{\textbf{2/100}} & \textcolor{red}{\textbf{2.0e+3}} & \textcolor{red}{\textbf{2/100}} \\
 & \textbf{CEM-AL} & 0 & \textcolor{ForestGreen}{\textbf{0}} & 11.225 & 0 & \textcolor{ForestGreen}{\textbf{0}} & 146 & 0 & \textcolor{red}{\textbf{4.5e-8}} & 4.4e-7 & 53/100 & 1.2e+3 & 53/100 \\
\midrule
\multirow{4}{*}{\makecell{\textbf{(\ref{prob:power})}\\\textbf{Power-Terms}\\\textbf{Objective}}} & \textbf{PSO-AL} & 3.3e-6 & \textcolor{ForestGreen}{\textbf{3.3e-6}} & 3.3e-6 & 0 & \textcolor{ForestGreen}{\textbf{0}} & 0 & 1.4e-9 & 4.3e-8 & 2.3e-7 & \textcolor{ForestGreen}{\textbf{100/100}} & 17.3 & \textcolor{ForestGreen}{\textbf{100/100}} \\
 & \textbf{DE-AL} & 3.3e-6 & \textcolor{ForestGreen}{\textbf{3.3e-6}} & 3.3e-6 & 0 & \textcolor{ForestGreen}{\textbf{0}} & 0 & 3.6e-10 & \textcolor{ForestGreen}{\textbf{1.5e-9}} & 2.0e-9 & \textcolor{ForestGreen}{\textbf{100/100}} & \textcolor{ForestGreen}{\textbf{16}} & \textcolor{ForestGreen}{\textbf{100/100}} \\
 & \textbf{SGA-AL} & 3.3e-6 & \textcolor{red}{\textbf{2.0e-5}} & 4.2e-5 & 0 & \textcolor{red}{\textbf{5.5e-5}} & 1.6e-4 & 6.9e-6 & \textcolor{red}{\textbf{3.6e-5}} & 9.3e-5 & \textcolor{ForestGreen}{\textbf{100/100}} & \textcolor{red}{\textbf{501.8}} & \textcolor{ForestGreen}{\textbf{100/100}} \\
 & \textbf{CEM-AL} & 3.3e-6 & \textcolor{ForestGreen}{\textbf{3.3e-6}} & 3.3e-6 & 0 & \textcolor{ForestGreen}{\textbf{0}} & 0 & 1.8e-8 & 2.0e-7 & 1.2e-6 & \textcolor{ForestGreen}{\textbf{100/100}} & 228.7 & \textcolor{ForestGreen}{\textbf{100/100}} \\
\midrule
\multirow{4}{*}{\makecell{\textbf{(\ref{prob:linear_polynomial})}\\\textbf{Linear}\\\textbf{Objective}}} & \textbf{PSO-AL} & 0 & \textcolor{ForestGreen}{\textbf{0}} & 0 & 0 & \textcolor{ForestGreen}{\textbf{0}} & 0 & 1.5e-10 & 1.6e-9 & 6.4e-9 & \textcolor{ForestGreen}{\textbf{100/100}} & 13.5 & \textcolor{ForestGreen}{\textbf{100/100}} \\
 & \textbf{DE-AL} & 0 & \textcolor{ForestGreen}{\textbf{0}} & 0 & 0 & \textcolor{ForestGreen}{\textbf{0}} & 0 & 0 & \textcolor{ForestGreen}{\textbf{4.2e-10}} & 1.2e-9 & \textcolor{ForestGreen}{\textbf{100/100}} & \textcolor{ForestGreen}{\textbf{11.4}} & \textcolor{ForestGreen}{\textbf{100/100}} \\
 & \textbf{SGA-AL} & 1.0e-5 & \textcolor{red}{\textbf{2.0e-5}} & 4.0e-5 & 9.5e-6 & \textcolor{red}{\textbf{2.0e-5}} & 5.0e-5 & 5.8e-6 & \textcolor{red}{\textbf{2.6e-5}} & 6.0e-5 & \textcolor{ForestGreen}{\textbf{100/100}} & \textcolor{red}{\textbf{299.3}} & \textcolor{ForestGreen}{\textbf{100/100}} \\
 & \textbf{CEM-AL} & 0 & \textcolor{ForestGreen}{\textbf{0}} & 0 & 0 & \textcolor{ForestGreen}{\textbf{0}} & 0 & 8.1e-9 & 3.2e-8 & 6.8e-8 & \textcolor{ForestGreen}{\textbf{100/100}} & 93.5 & \textcolor{ForestGreen}{\textbf{100/100}} \\
\midrule
\multirow{4}{*}{\makecell{\textbf{(\ref{prob:mishra_bird})}\\\textbf{Mishra's}\\\textbf{Bird}}} & \textbf{PSO-AL} & 3.9e-6 & \textcolor{ForestGreen}{\textbf{3.9e-6}} & 3.9e-6 & 4.6e-4 & \textcolor{ForestGreen}{\textbf{4.6e-4}} & 4.6e-4 & 0 & \textcolor{ForestGreen}{\textbf{0}} & 0 & \textcolor{ForestGreen}{\textbf{100/100}} & 8.3 & \textcolor{ForestGreen}{\textbf{100/100}} \\
 & \textbf{DE-AL} & 3.9e-6 & \textcolor{ForestGreen}{\textbf{3.9e-6}} & 3.9e-6 & 4.6e-4 & \textcolor{ForestGreen}{\textbf{4.6e-4}} & 4.6e-4 & 0 & \textcolor{ForestGreen}{\textbf{0}} & 0 & \textcolor{ForestGreen}{\textbf{100/100}} & \textcolor{ForestGreen}{\textbf{7}} & \textcolor{ForestGreen}{\textbf{100/100}} \\
 & \textbf{SGA-AL} & 3.9e-6 & \textcolor{ForestGreen}{\textbf{3.9e-6}} & 1.0e-5 & 4.6e-4 & \textcolor{ForestGreen}{\textbf{4.6e-4}} & 4.6e-4 & 0 & \textcolor{ForestGreen}{\textbf{0}} & 0 & \textcolor{ForestGreen}{\textbf{100/100}} & \textcolor{red}{\textbf{74}} & \textcolor{ForestGreen}{\textbf{100/100}} \\
 & \textbf{CEM-AL} & 3.9e-6 & \textcolor{ForestGreen}{\textbf{3.9e-6}} & 5.928 & 4.6e-4 & \textcolor{ForestGreen}{\textbf{4.6e-4}} & 85.928 & 0 & \textcolor{ForestGreen}{\textbf{0}} & 2.8e-9 & \textcolor{red}{\textbf{79/100}} & 57.3 & \textcolor{red}{\textbf{79/100}} \\
\midrule
\multirow{4}{*}{\makecell{\textbf{(\ref{prob:gomez_levy})}\\\textbf{Constrained}\\\textbf{Gomez-Levy}}} & \textbf{PSO-AL} & 4.0e-7 & \textcolor{ForestGreen}{\textbf{4.0e-7}} & 4.0e-7 & 1.5e-6 & \textcolor{ForestGreen}{\textbf{1.5e-6}} & 1.5e-6 & 0 & \textcolor{ForestGreen}{\textbf{0}} & 0 & \textcolor{ForestGreen}{\textbf{100/100}} & 8.5 & \textcolor{ForestGreen}{\textbf{100/100}} \\
 & \textbf{DE-AL} & 4.0e-7 & \textcolor{ForestGreen}{\textbf{4.0e-7}} & 4.0e-7 & 1.5e-6 & \textcolor{ForestGreen}{\textbf{1.5e-6}} & 1.5e-6 & 0 & \textcolor{ForestGreen}{\textbf{0}} & 0 & \textcolor{ForestGreen}{\textbf{100/100}} & \textcolor{ForestGreen}{\textbf{7.2}} & \textcolor{ForestGreen}{\textbf{100/100}} \\
 & \textbf{SGA-AL} & 9.9e-7 & \textcolor{red}{\textbf{4.0e-6}} & 9.2e-6 & 1.5e-6 & \textcolor{ForestGreen}{\textbf{1.5e-6}} & 1.5e-6 & 0 & \textcolor{ForestGreen}{\textbf{0}} & 0 & \textcolor{ForestGreen}{\textbf{100/100}} & \textcolor{red}{\textbf{86.4}} & \textcolor{ForestGreen}{\textbf{100/100}} \\
 & \textbf{CEM-AL} & 4.0e-7 & \textcolor{ForestGreen}{\textbf{4.0e-7}} & 4.0e-7 & 1.5e-6 & \textcolor{ForestGreen}{\textbf{1.5e-6}} & 1.5e-6 & 0 & \textcolor{ForestGreen}{\textbf{0}} & 0 & \textcolor{ForestGreen}{\textbf{100/100}} & 61.9 & \textcolor{ForestGreen}{\textbf{100/100}} \\
\midrule
\multirow{4}{*}{\makecell{\textbf{(\ref{prob:drop_wave})}\\\textbf{Drop-Wave}}} & \textbf{PSO-AL} & 0 & \textcolor{ForestGreen}{\textbf{0}} & 0 & 2.0e-6 & \textcolor{ForestGreen}{\textbf{2.0e-6}} & 2.0e-6 & 2.7e-10 & \textcolor{ForestGreen}{\textbf{5.3e-9}} & 2.2e-8 & \textcolor{ForestGreen}{\textbf{100/100}} & 8.4 & \textcolor{ForestGreen}{\textbf{100/100}} \\
 & \textbf{DE-AL} & 0 & \textcolor{ForestGreen}{\textbf{0}} & 0 & 2.0e-6 & \textcolor{ForestGreen}{\textbf{2.0e-6}} & 2.0e-6 & 8.7e-10 & 6.6e-9 & 1.8e-8 & \textcolor{ForestGreen}{\textbf{100/100}} & \textcolor{ForestGreen}{\textbf{8}} & \textcolor{ForestGreen}{\textbf{100/100}} \\
 & \textbf{SGA-AL} & 0 & \textcolor{red}{\textbf{2.2e-5}} & 9.3e-5 & 2.0e-6 & \textcolor{ForestGreen}{\textbf{2.0e-6}} & 2.0e-6 & 2.0e-6 & \textcolor{red}{\textbf{1.8e-5}} & 7.3e-5 & \textcolor{ForestGreen}{\textbf{100/100}} & \textcolor{red}{\textbf{87}} & \textcolor{ForestGreen}{\textbf{100/100}} \\
 & \textbf{CEM-AL} & 0 & \textcolor{ForestGreen}{\textbf{0}} & 0 & 2.0e-6 & \textcolor{ForestGreen}{\textbf{2.0e-6}} & 2.0e-6 & 1.6e-9 & 2.8e-8 & 1.4e-7 & \textcolor{ForestGreen}{\textbf{100/100}} & 68.6 & \textcolor{ForestGreen}{\textbf{100/100}} \\
\midrule
\multirow{4}{*}{\makecell{\textbf{(\ref{prob:egg_holder})}\\\textbf{Egg-Holder}}} & \textbf{PSO-AL} & 2.0e-4 & \textcolor{ForestGreen}{\textbf{2.0e-4}} & 2.0e-4 & 3.4e-4 & \textcolor{ForestGreen}{\textbf{3.4e-4}} & 3.4e-4 & 0 & \textcolor{ForestGreen}{\textbf{0}} & 0 & \textcolor{ForestGreen}{\textbf{100/100}} & 11.3 & \textcolor{ForestGreen}{\textbf{100/100}} \\
 & \textbf{DE-AL} & 2.0e-4 & \textcolor{ForestGreen}{\textbf{2.0e-4}} & 2.0e-4 & 3.4e-4 & \textcolor{ForestGreen}{\textbf{3.4e-4}} & 3.4e-4 & 0 & \textcolor{ForestGreen}{\textbf{0}} & 0 & \textcolor{ForestGreen}{\textbf{100/100}} & \textcolor{ForestGreen}{\textbf{6.5}} & \textcolor{ForestGreen}{\textbf{100/100}} \\
 & \textbf{SGA-AL} & 2.0e-4 & \textcolor{ForestGreen}{\textbf{2.0e-4}} & 2.0e-4 & 3.4e-4 & \textcolor{ForestGreen}{\textbf{3.4e-4}} & 3.4e-4 & 0 & \textcolor{ForestGreen}{\textbf{0}} & 0 & \textcolor{ForestGreen}{\textbf{100/100}} & \textcolor{red}{\textbf{82.5}} & \textcolor{ForestGreen}{\textbf{100/100}} \\
 & \textbf{CEM-AL} & 2.0e-4 & \textcolor{red}{\textbf{190.203}} & 334.287 & 3.4e-4 & \textcolor{red}{\textbf{70.692}} & 652.473 & 0 & \textcolor{ForestGreen}{\textbf{0}} & 0 & \textcolor{red}{\textbf{14/100}} & 57.3 & \textcolor{red}{\textbf{14/100}} \\
\midrule
\multirow{4}{*}{\makecell{\textbf{(\ref{prob:griewank})}\\\textbf{Griewank}\\\textbf{2D}}} & \textbf{PSO-AL} & 3.8e-9 & \textcolor{ForestGreen}{\textbf{1.2e-8}} & 1.7e-8 & 0 & \textcolor{ForestGreen}{\textbf{0}} & 0 & 0 & \textcolor{ForestGreen}{\textbf{0}} & 0 & \textcolor{ForestGreen}{\textbf{100/100}} & \textcolor{ForestGreen}{\textbf{8.5}} & \textcolor{ForestGreen}{\textbf{100/100}} \\
 & \textbf{DE-AL} & 6.3e-6 & 3.3e-5 & 7.4e-5 & 0 & 3.9e-10 & 1.9e-9 & 0 & \textcolor{ForestGreen}{\textbf{0}} & 0 & \textcolor{ForestGreen}{\textbf{100/100}} & 11.5 & \textcolor{ForestGreen}{\textbf{100/100}} \\
 & \textbf{SGA-AL} & 5.2e-6 & 1.7e-5 & 3.6e-5 & 0 & 1.0e-10 & 4.6e-10 & 0 & \textcolor{ForestGreen}{\textbf{0}} & 0 & \textcolor{ForestGreen}{\textbf{100/100}} & \textcolor{red}{\textbf{105.6}} & \textcolor{ForestGreen}{\textbf{100/100}} \\
 & \textbf{CEM-AL} & 6.2e-9 & \textcolor{red}{\textbf{5.437}} & 16.441 & 0 & \textcolor{red}{\textbf{0.007}} & 0.068 & 0 & \textcolor{ForestGreen}{\textbf{0}} & 0 & \textcolor{red}{\textbf{45/100}} & 87.7 & \textcolor{red}{\textbf{45/100}} \\
\midrule
\multirow{4}{*}{\makecell{\textbf{(\ref{prob:double_integrator})}\\\textbf{Double}\\\textbf{Integrator}}} & \textbf{PSO-AL} & 0.011 & \textcolor{ForestGreen}{\textbf{0.016}} & 0.024 & 1.0e-4 & \textcolor{ForestGreen}{\textbf{0.001}} & 0.004 & 2.8e-5 & \textcolor{ForestGreen}{\textbf{4.4e-5}} & 7.0e-5 & \textcolor{ForestGreen}{\textbf{25/25}} & \textcolor{ForestGreen}{\textbf{4.7e+4}} & \textcolor{ForestGreen}{\textbf{11/25}} \\
 & \textbf{DE-AL} & 32.121 & \textcolor{red}{\textbf{41.193}} & 45.908 & 76.963 & \textcolor{red}{\textbf{286.209}} & 607.748 & 4.0266 & \textcolor{red}{\textbf{5.7035}} & 7.6766 & \textcolor{red}{\textbf{0/25}} & - & \textcolor{red}{\textbf{0/25}} \\
 & \textbf{SGA-AL} & 2.074 & 2.301 & 3.001 & 0.03 & 0.352 & 0.914 & 0.0654 & 0.076 & 0.0857 & \textcolor{red}{\textbf{0/25}} & - & \textcolor{red}{\textbf{0/25}} \\
 & \textbf{CEM-AL} & 5.328 & 7.875 & 10.9 & 1.187 & 3.048 & 6.414 & 0.0018 & 0.0039 & 0.0098 & \textcolor{red}{\textbf{0/25}} & - & \textcolor{red}{\textbf{0/25}} \\
\bottomrule
\end{tabular}
\label{tab:metrics_stats_AL_EAs}
\end{table*}

Tab.~\ref{tab:metrics_stats_AL_EAs} summarizes the performance of all AL-EA optimizers across the whole benchmark set, presenting useful quantitative metrics. For each problem, we report the $5^{\text{th}}$, $50^{\text{th}}$ (median), and $95^{\text{th}}$ percentiles of key metrics, such as the final \textit{solution error} $\|\boldsymbol{x}_\star - \boldsymbol{x}_{f}\|_2$, the \textit{objective error} $\lvert f(\boldsymbol{x}_\star) - f(\boldsymbol{x}_{f}) \rvert$, and the \textit{total constraint violation} $\mathcal{V}_c(\boldsymbol{x}_{f})$ \eqref{eq:constraint_violation_conditions}. Additionally, Tab.~\ref{tab:metrics_stats_AL_EAs} provides objective-value success metrics under the nominal $(\varepsilon_s, \varepsilon_c) = (10^{-3}, 10^{-4})$ thresholds. Specifically, we record the number of runs that satisfy the convergence criteria at least once (\emph{Ever Hits}), the median time required to first meet these criteria (\emph{First-Hit Median Time}), and the number of runs that satisfy the criteria until the final iteration (\emph{Final Hits}). Best and worst results for each problem-metric pair are highlighted in green and red, respectively, facilitating immediate visual comparison across all AL-EA optimizers.

In general, we conclude that PSO-AL and DE-AL achieve the smallest solution and constraint errors, with very low variance and almost perfect success rates across all benchmarks (exceptions include Prob.~\ref{prob:quadratic} for PSO-AL and Prob.~\ref{prob:double_integrator} for DE-AL). Meanwhile, CEM-AL and SGA-AL typically yield higher solution errors and lower convergence success (Prob.~\ref{prob:quadratic},~\ref{prob:mishra_bird},~\ref{prob:egg_holder},~\ref{prob:griewank},~\ref{prob:double_integrator} being characteristic examples for CEM-AL and Prob.~\ref{prob:quadratic},~\ref{prob:double_integrator} for SGA). This is to be expected, since SGA-AL is a less sophisticated algorithm than PSO-AL and DE-AL, featuring simpler crossover and mutation rules, and CEM-AL is inherently more exploitative than exploratory, making it prone to local minima. Notably, in the high-dimensional case of Problem~\ref{prob:double_integrator}, only PSO-AL achieves decent performance, revealing its superior exploration capabilities relative to the other AL-EAs. In this case, the lower number of \textit{Final Hits} relative to \textit{Ever Hits} is caused only by the solution accuracy requirement, as the constraint violation threshold is satisfied in all runs. More generally, a gap between \textit{Ever Hits} and \textit{Final Hits} may reflect oscillatory or unstable behavior, where the optimizer identifies the global optimum but loses track of it and fails to retain it until termination. This phenomenon is not observed for the proposed AL-EAs in any of the considered benchmark problems (with the only exception as we said of Prob.~\ref{prob:double_integrator} for PSO-AL, where still the solution accuracy remains very good), further showcasing their robust and stable convergence behavior.

\begin{table*}
\centering
\caption{\textbf{Aggregated time comparison of all AL-EA optimizers} on the whole benchmark set (10 COPs), reporting the \textit{median total running time [msec]} for each optimizer-problem pair. Best and worst times per problem are highlighted in green and red, respectively.}
\footnotesize
\setlength{\tabcolsep}{4pt}
\renewcommand{\arraystretch}{1.0}
\resizebox{\textwidth}{!}{%
\begin{tabular}{l|l|cccccccccc}
\toprule
\multirow{5}{*}{\rotatebox[origin=c]{90}{\makecell{\textbf{Total Running}\\\textbf{Times [msec]}\\\textbf{(Median)}}}}
& \diagbox{\textbf{Optimizer}}{\textbf{Problem}} & \makecell{\textbf{(\ref{prob:rosenbrock})}\\\textbf{Rosenbrock}\\\textbf{2D}} & \makecell{\textbf{(\ref{prob:quadratic})}\\\textbf{Quadratic}} & \makecell{\textbf{(\ref{prob:power})}\\\textbf{Power-Terms}\\\textbf{Objective}} & \makecell{\textbf{(\ref{prob:linear_polynomial})}\\\textbf{Linear}\\\textbf{Objective}} & \makecell{\textbf{(\ref{prob:mishra_bird})}\\\textbf{Mishra's}\\\textbf{Bird}} & \makecell{\textbf{(\ref{prob:gomez_levy})}\\\textbf{Constrained}\\\textbf{Gomez-Levy}} & \makecell{\textbf{(\ref{prob:drop_wave})}\\\textbf{Drop-Wave}} & \makecell{\textbf{(\ref{prob:egg_holder})}\\\textbf{Egg-Holder}} & \makecell{\textbf{(\ref{prob:griewank})}\\\textbf{Griewank}\\\textbf{2D}} & \makecell{\textbf{(\ref{prob:double_integrator})}\\\textbf{Double}\\\textbf{Integrator}} \\
\cmidrule(l){2-12}
& \makecell{\textbf{PSO-AL}} & 158.2 & \textcolor{ForestGreen}{\textbf{208.7}} & \textcolor{ForestGreen}{\textbf{200.1}} & \textcolor{ForestGreen}{\textbf{199.8}} & 184.5 & 200.5 & \textcolor{ForestGreen}{\textbf{184.1}} & 205.4 & \textcolor{ForestGreen}{\textbf{184.9}} & \textcolor{ForestGreen}{\textbf{5.4e+4}} \\
& \makecell{\textbf{DE-AL}} & \textcolor{ForestGreen}{\textbf{147}} & 307.3 & 214 & 205 & \textcolor{ForestGreen}{\textbf{170.4}} & \textcolor{ForestGreen}{\textbf{175.7}} & 190.8 & \textcolor{ForestGreen}{\textbf{158}} & 279.5 & 3.8e+5 \\
& \makecell{\textbf{SGA-AL}} & \textcolor{red}{2.0e+3} & \textcolor{red}{3.4e+3} & \textcolor{red}{6.9e+3} & \textcolor{red}{2.0e+3} & \textcolor{red}{1.7e+3} & \textcolor{red}{2.1e+3} & \textcolor{red}{2.0e+3} & \textcolor{red}{1.9e+3} & \textcolor{red}{1.7e+3} & 9.9e+5 \\
& \makecell{\textbf{CEM-AL}} & 1.4e+3 & 3.4e+3 & 5.1e+3 & 1.6e+3 & 1.4e+3 & 1.5e+3 & 1.5e+3 & 1.3e+3 & 1.4e+3 & \textcolor{red}{5.9e+6} \\
\bottomrule
\end{tabular}
}
\label{tab:running_times_AL_EAs}
\end{table*}

\begin{table*}[!htb]
\centering
\caption{\textbf{Aggregated AL framework performance comparison of all AL-EA optimizers} on all 10 COPs, reporting the \textit{median final Lagrange-multipliers $\ell_2$-norm errors}. Best and worst median errors per problem are highlighted in green and red, respectively.}
\footnotesize
\setlength{\tabcolsep}{4pt}
\renewcommand{\arraystretch}{1.0}
\resizebox{\textwidth}{!}{%
\begin{tabular}{l|l|cccccccccc}
\toprule
\multirow{5}{*}{\rotatebox[origin=c]{90}{\makecell{\textbf{$\ell^2$-Norm Errors}\\\textbf{(Median) of}\\\textbf{Lag. Multipliers}}}}
& \diagbox{\textbf{Optimizer}}{\textbf{Problem}} & \makecell{\textbf{(\ref{prob:rosenbrock})}\\\textbf{Rosenbrock}\\\textbf{2D}} & \makecell{\textbf{(\ref{prob:quadratic})}\\\textbf{Quadratic}} & \makecell{\textbf{(\ref{prob:power})}\\\textbf{Power-Terms}\\\textbf{Objective}} & \makecell{\textbf{(\ref{prob:linear_polynomial})}\\\textbf{Linear}\\\textbf{Objective}} & \makecell{\textbf{(\ref{prob:mishra_bird})}\\\textbf{Mishra's}\\\textbf{Bird}} & \makecell{\textbf{(\ref{prob:gomez_levy})}\\\textbf{Constrained}\\\textbf{Gomez-Levy}} & \makecell{\textbf{(\ref{prob:drop_wave})}\\\textbf{Drop-Wave}} & \makecell{\textbf{(\ref{prob:egg_holder})}\\\textbf{Egg-Holder}} & \makecell{\textbf{(\ref{prob:griewank})}\\\textbf{Griewank}\\\textbf{2D}} & \makecell{\textbf{(\ref{prob:double_integrator})}\\\textbf{Double}\\\textbf{Integrator}} \\
\cmidrule(l){2-12}
& \makecell{\textbf{PSO-AL}} & 4.5e-9 & \textcolor{red}{2.3e+3} & \textcolor{ForestGreen}{\textbf{0.011}} & \textcolor{ForestGreen}{\textbf{5.8e-4}} & \textcolor{ForestGreen}{\textbf{0}} & \textcolor{ForestGreen}{\textbf{0}} & 7.1e-9 & \textcolor{ForestGreen}{\textbf{0}} & \textcolor{ForestGreen}{\textbf{0}} & \textcolor{ForestGreen}{\textbf{0.213}} \\
& \makecell{\textbf{DE-AL}} & \textcolor{ForestGreen}{\textbf{0}} & \textcolor{red}{2.3e+3} & \textcolor{ForestGreen}{\textbf{0.011}} & \textcolor{ForestGreen}{\textbf{5.8e-4}} & \textcolor{ForestGreen}{\textbf{0}} & \textcolor{ForestGreen}{\textbf{0}} & \textcolor{ForestGreen}{\textbf{3.6e-9}} & \textcolor{ForestGreen}{\textbf{0}} & \textcolor{ForestGreen}{\textbf{0}} & \textcolor{red}{4.1e+5} \\
& \makecell{\textbf{SGA-AL}} & \textcolor{red}{0.001} & \textcolor{red}{2.3e+3} & \textcolor{red}{0.467} & \textcolor{red}{0.087} & \textcolor{ForestGreen}{\textbf{0}} & \textcolor{ForestGreen}{\textbf{0}} & \textcolor{red}{6.6e-4} & \textcolor{ForestGreen}{\textbf{0}} & \textcolor{ForestGreen}{\textbf{0}} & 1.0e+4 \\
& \makecell{\textbf{CEM-AL}} & 1.5e-6 & \textcolor{ForestGreen}{\textbf{15.246}} & \textcolor{ForestGreen}{\textbf{0.011}} & 7.4e-4 & \textcolor{ForestGreen}{\textbf{0}} & \textcolor{ForestGreen}{\textbf{0}} & 3.2e-8 & \textcolor{ForestGreen}{\textbf{0}} & \textcolor{ForestGreen}{\textbf{0}} & 1.1e+3 \\
\bottomrule
\end{tabular}
}
\label{tab:multipliers_errors_AL_EAs}
\end{table*}

Regarding the execution times of the AL-EAs, we observe that, first of all, EAs reach the desired solution precision rapidly, in a matter of milliseconds, as indicated by the \textit{$1^{\text{st}}$-Hit Time Median} metric in Tab.~\ref{tab:metrics_stats_AL_EAs}. However, their overall running times, as reported in Tab.~\ref{tab:running_times_AL_EAs}, tend to be much higher due to the stagnation phases incorporated in both the AL outer loop and the EAs inner loop. Median total times highlight that PSO-AL and DE-AL not only converge more reliably, but also consistently faster than the other evolutionary approaches SGA-AL and CEM-AL, typically in around 0.2 seconds (for all COPs instead of Prob.~\ref{prob:double_integrator}, where the \textit{"curse of dimensionality"} increases the required fitness function evaluations and the computational cost).

We have established that the evolutionary methods are effective at recovering the global minimum, some of them more efficiently than the others. However, we have not yet verified whether our AL-EAs scheme itself is functioning as intended. In particular, we must assess whether the AL outer loop contributes meaningfully to the optimization process by accurately estimating the Lagrange multipliers and driving them toward their optimal values. This is examined in Tab.~\ref{tab:multipliers_errors_AL_EAs}, which reports the $\ell^2$-norm errors $\left\|\begin{bmatrix} \boldsymbol{\lambda}_{\star} - \boldsymbol{\lambda}_{f} \\ \boldsymbol{\mu}_{\star} - \boldsymbol{\mu}_{f} \end{bmatrix}\right\|_2$ between the final estimated multipliers $(\boldsymbol{\lambda}_f, \boldsymbol{\mu}_f)$ and their optimal counterparts $(\boldsymbol{\lambda}_{\star}, \boldsymbol{\mu}_{\star})$. We report only the multiplier errors associated with the active non-box constraints, since we dealt with variable bounds via projection onto the prescribed intervals. The results indicate that the AL-EAs estimate the Lagrange multipliers with very small errors, even in the most challenging cases where the optimal multiplier vector is nonzero (Prob.~\ref{prob:quadratic},~\ref{prob:power},~\ref{prob:linear_polynomial}, and~\ref{prob:double_integrator}), with the only exception of Quadratic Prob.~\ref{prob:quadratic}. For this COP, the comparatively larger multiplier estimation errors are consistent with the lower overall success rates observed for the majority of EAs relative to the remaining benchmark problems, suggesting that accurate multiplier estimation is closely linked to successful convergence. It's important to note that, for Prob.~\ref{prob:double_integrator}, the total multiplier estimation error of PSO-AL is an order of magnitude smaller than any individual component of the high-dimensional optimal multiplier vector, demonstrating strong numerical accuracy. Overall, PSO-AL and DE-AL once again outperform the remaining AL-EAs. It should also be noted that the accuracy of the estimated Lagrange multipliers is influenced by the update strategy of the AL framework, including the initialization, scaling factor, and maximum values of the penalty parameters in \eqref{eq:penalty_parameters_update}, which affect the conditioning of the subproblems and the convergence of the multiplier estimates. In our experiments we have used the same update setup for all COPs and AL-EAs.

To further inspect the internal behavior of our \methodacronym\ scheme, Fig.~\ref{fig:double_integrator_convergence} shows the convergence of the best AL-EA optimizer, PSO-AL, on the Double Integrator Prob.~\ref{prob:double_integrator}, with respect to two metrics: the \textit{objective error} and \textit{constraint violation}. In these plots, the horizontal axis is the normalized progress of each run, $\tau \in [0,1]$, where $\tau=0$ and $\tau=1$ denote the start and end of the run, respectively. To enable comparison across runs of different lengths, all trajectories are linearly interpolated onto a common normalized progress grid before computing the median and the 5th-95th percentile band. The solid curve represents the median across runs at each $\tau$ value, while the shaded region indicates the corresponding 5th-95th percentile band. The dashed horizontal line marks the nominal convergence threshold ($\varepsilon_s$ or $\varepsilon_c$). The plots reveal a consistent convergence trend across both metrics, with the objective error and constraint violation decreasing rapidly during the early stages of the optimization, and then almost flattening near their limiting values as the runs approach completion. These observations are consistent with the statistics reported in Tab.~\ref{tab:metrics_stats_AL_EAs}.

\subsection{Comparing different Kinds of Optimizers} \label{sec:comparing_diff_optimizers}

Before evaluating the last set of benchmark optimizers, we need to summarize briefly their main functionality. First, we have \textit{\textbf{PSO-PEN}}, which is the penalty-based alternative to \textit{PSO-AL}. We chose to use PSO, as it was experimentally (Sec.~\ref{sec:comparing_AL_EAs}) the best AL-EA. PSO-PEN maintains the usual PSO movement rule (Sec.~\ref{sec:PSO}) with the same exact hyper-parameters, but the population's quality at $k$-th iteration is judged by a cost function $J$ consisting of the objective function plus an increasing penalty term, computed from the equality and inequality violations (this is inspired from~\cite{parsopoulos2002particle}): $J(\boldsymbol{x}) = f(\boldsymbol{x}) + t(k) \cdot p(\boldsymbol{x})$, where $p(\boldsymbol{x}) = \left(\sum_{i=1}^{l} \xi(|h_i(\boldsymbol{x})|) + \sum_{j=1}^{m} \xi(\max(0,\ g_j(\boldsymbol{x})))\right)$, $t(k)$ is an iteration-dependent growing weight and $\xi(\cdot)$ is some monotonous increasing function. So, the only key difference from PSO-AL is that there is no AL outer loop, instead, feasibility is handled directly inside the penalty-shaped fitness function in a single PSO run.

The second optimizer is \textit{\textbf{IPOPT}} \cite{wachter2006_IPOPT}, a deterministic primal-dual barrier interior-point algorithm with a filter line search, for nonlinear programming. We worked with python's \texttt{cyipopt} (https://github.com/mechmotum/cyipopt) interface, where we passed, for each COP, the objective function and its gradient, the constraints and their Jacobians (in dense form), and the variable bounds. We configured the solver with a strict internal convergence tolerance of $10^{-7}$ (for the bounds of the scaled KKT residual, that combines stationarity and equality/inequality feasibility violation terms) and a maximum of $2{,}000$ iterations to ensure reliable and stable termination.

The final benchmark optimizer is \textit{\textbf{CMA-ES}} \cite{hansen2006cma}, which is a Covariance Matrix Adaptation evolution strategy. We used the standard Python's \texttt{pycma} (https://github.com/CMA-ES/pycma) library, developed by the authors of the method. In general, CMA-ES samples a Gaussian population around a current mean, evaluates the candidates, and then adapts the mean, step size (standard deviation), and covariance matrix from the best solutions, similarly to the CEM optimizer (Sec.~\ref{sec:CEM}). Here, we employed the Augmented-Lagrangian approach AL-CMA-ES, proposed in \cite{dufosse2021augmented,girardin2025augmented}, in order to handle the COP constraints. Following the recommended default choice in \cite{hansen2006cma}, we used the $(\mu/\mu_w,\lambda)$-CMA-ES version, with initial step size $\sigma_0 = 0.25$ and offsprings population size $\lambda = 4 + \lfloor 3\ln(n) \rfloor$ ($n$ is the COP's dimension), while the remaining strategy parameters were automatically derived from $\lambda$, including the parent population size $\mu$ (typically $\lfloor \lambda/2 \rfloor$), the recombination weights $w$ which determine the variance-effective selection mass $\mu_w$, and the step-size adaptation. We capped the base run at 5{,}000 generations and used an IPOP-CMA-ES restart scheme \cite{auger2005restart}, sharing the corresponding evaluation budget across the initial run and up to $3$ restarts. A restart was launched only if the current run terminated early according to CMA-ES's stopping criteria, and each restart used the remaining budget with a doubled $\lambda$. Additionally, we scaled the decision variables to comparable ranges in order for the step size to act more uniformly across coordinates and to prevent ill-conditioned covariances. Furthermore, we decided to follow a "best-feasible tracking" approach, retaining the best feasible point observed across all generations and restarts. Without it, CMA-ES's reported solutions were considerably less robust, i.e. the \textit{Final Hits} column would be far lower than the \textit{Ever Hits} column in Tab.~\ref{tab:metrics_stats_diff_optimizers}, indicating loss of global minima tracking. We avoided $(1+1)$-CMA-ES, as it is ''a poor global optimizer and not usefully applied to multimodal problems'' \cite{arnold20121}, and also ``on unconstrained multimodal problems $(\mu/\mu_w,\lambda)$-CMA-ES vastly outperforms the $(1+1)$-CMA-ES'' \cite{dufosse2021augmented}.

\begin{table*}
\centering
\caption{\textbf{Aggregated performance comparison of optimizers PSO-AL (best of the AL-EAs), PSO-PEN, IPOPT, and CMA-ES}, on the whole benchmark set (10 COPs). The metrics reported here and the presentation of the results are the same as in Tab.~\ref{tab:metrics_stats_AL_EAs}.}
\footnotesize
\setlength{\tabcolsep}{2pt}
\renewcommand{\arraystretch}{1.0}
\begin{tabular}{c|c|ccc|ccc|ccc|ccc}
\toprule
\multirow{2}{*}[-3pt]{\textbf{Problem}} & \multirow{2}{*}[-3pt]{\textbf{Optimizer}} & \multicolumn{3}{c|}{$\boldsymbol{\|x_\star - x_f\|_2}$} & \multicolumn{3}{c|}{$\boldsymbol{|f(x_\star) - f(x_f)|}$} & \multicolumn{3}{c|}{$\boldsymbol{\mathcal{V}_c(x_f)}$} & \multicolumn{3}{c}{\makecell{\textbf{\quad Obj. value Success \quad} \\ \textbf{$(\varepsilon_s = 10^{-3},\; \varepsilon_c = 10^{-4})$}}} \\
\cmidrule(lr){3-5}\cmidrule(lr){6-8}\cmidrule(lr){9-11}\cmidrule(lr){12-14}
& & \textbf{5\%}  & \textbf{Med.} & \textbf{95\%}
& \textbf{5\%}    & \textbf{Med.} & \textbf{95\%}
& \textbf{5\%}    & \textbf{Med.} & \textbf{95\%}
& \makecell{\textbf{Ever} \\ \textbf{Hits}} & \makecell{\textbf{$1^{\text{st}}$-Hit Med.} \\ \textbf{Time [msec]}} & \makecell{\textbf{Final} \\ \textbf{Hits}} \\
\midrule
\multirow{4}{*}{\makecell{\textbf{(\ref{prob:rosenbrock})}\\\textbf{Rosenbrock}\\\textbf{2D}}} & \textbf{PSO-AL} & 0 & \textcolor{ForestGreen}{\textbf{0}} & 0 & 0 & \textcolor{ForestGreen}{\textbf{0}} & 0 & 0 & \textcolor{ForestGreen}{\textbf{0}} & 2.8e-10 & \textcolor{ForestGreen}{\textbf{100/100}} & 7 & \textcolor{ForestGreen}{\textbf{100/100}} \\
 & \textbf{PSO-PEN} & 0 & 1.0e-6 & 3.0e-6 & 0 & 5.0e-10 & 3.2e-9 & 0 & \textcolor{ForestGreen}{\textbf{0}} & 2.9e-7 & \textcolor{ForestGreen}{\textbf{100/100}} & 9.2 & \textcolor{ForestGreen}{\textbf{100/100}} \\
 & \textbf{IPOPT} & 1.2e-6 & \textcolor{red}{\textbf{1.411}} & 1.411 & 7.3e-10 & \textcolor{red}{\textbf{0.999}} & 0.999 & 0 & 9.9e-9 & 1.0e-8 & \textcolor{red}{\textbf{18/100}} & \textcolor{ForestGreen}{\textbf{2.6}} & \textcolor{red}{\textbf{18/100}} \\
 & \textbf{CMA-ES} & 7.6e-10 & 1.411 & 1.411 & 0 & 0.999 & 0.999 & 0 & \textcolor{red}{\textbf{9.7e-5}} & 1.0e-4 & 35/100 & \textcolor{red}{\textbf{514.4}} & 35/100 \\
\midrule
\multirow{4}{*}{\makecell{\textbf{(\ref{prob:quadratic})}\\\textbf{Quadratic}}} & \textbf{PSO-AL} & 0 & \textcolor{ForestGreen}{\textbf{0}} & 6 & 0 & \textcolor{ForestGreen}{\textbf{0}} & 16 & 0 & \textcolor{ForestGreen}{\textbf{0}} & 0 & \textcolor{ForestGreen}{\textbf{78/100}} & 20.4 & \textcolor{ForestGreen}{\textbf{78/100}} \\
 & \textbf{PSO-PEN} & 12.317 & \textcolor{red}{\textbf{24.657}} & 38.931 & 39.408 & \textcolor{red}{\textbf{1.5e+3}} & 6.6e+3 & 0.1241 & \textcolor{red}{\textbf{16.2035}} & 46.5129 & \textcolor{red}{\textbf{0/100}} & - & \textcolor{red}{\textbf{0/100}} \\
 & \textbf{IPOPT} & 3.8 & 7.874 & 12.728 & 11.4 & 126 & 162.6 & 9.3e-9 & 1.4e-8 & 1.4e-8 & 5/100 & \textcolor{ForestGreen}{\textbf{3.1}} & 5/100 \\
 & \textbf{CMA-ES} & 7.0e-5 & 6 & 6.126 & 0.01 & 11.988 & 70.968 & 0 & 9.6e-5 & 1.0e-4 & 7/100 & \textcolor{red}{\textbf{964.8}} & \textcolor{red}{\textbf{0/100}} \\
\midrule
\multirow{4}{*}{\makecell{\textbf{(\ref{prob:power})}\\\textbf{Power-Terms}\\\textbf{Objective}}} & \textbf{PSO-AL} & 3.3e-6 & 3.3e-6 & 3.3e-6 & 0 & \textcolor{ForestGreen}{\textbf{0}} & 0 & 1.4e-9 & 4.3e-8 & 2.3e-7 & \textcolor{ForestGreen}{\textbf{100/100}} & \textcolor{ForestGreen}{\textbf{17.3}} & \textcolor{ForestGreen}{\textbf{100/100}} \\
 & \textbf{PSO-PEN} & 6.7e-6 & 6.7e-6 & 18.128 & 1.0e-5 & 1.0e-5 & 69.437 & 3.8e-6 & 5.0e-6 & 46.7052 & 76/100 & 64 & 76/100 \\
 & \textbf{IPOPT} & 5.1e-8 & \textcolor{ForestGreen}{\textbf{5.7e-8}} & 1.856 & 1.7e-6 & 1.7e-6 & 1.631 & 0 & \textcolor{ForestGreen}{\textbf{7.5e-9}} & 1.0e-8 & 91/100 & 70 & 91/100 \\
 & \textbf{CMA-ES} & 4.7e-5 & \textcolor{red}{\textbf{4.216}} & 4.216 & 1.5e-4 & \textcolor{red}{\textbf{4.514}} & 4.514 & 7.5e-5 & \textcolor{red}{\textbf{9.8e-5}} & 1.0e-4 & \textcolor{red}{\textbf{47/100}} & \textcolor{red}{\textbf{412.8}} & \textcolor{red}{\textbf{47/100}} \\
\midrule
\multirow{4}{*}{\makecell{\textbf{(\ref{prob:linear_polynomial})}\\\textbf{Linear}\\\textbf{Objective}}} & \textbf{PSO-AL} & 0 & \textcolor{ForestGreen}{\textbf{0}} & 0 & 0 & \textcolor{ForestGreen}{\textbf{0}} & 0 & 1.5e-10 & \textcolor{ForestGreen}{\textbf{1.6e-9}} & 6.4e-9 & \textcolor{ForestGreen}{\textbf{100/100}} & 13.5 & \textcolor{ForestGreen}{\textbf{100/100}} \\
 & \textbf{PSO-PEN} & 1.0e-5 & 1.0e-5 & 0.823 & 1.0e-5 & 1.0e-5 & 0.609 & 3.4e-6 & 5.0e-6 & 2.3383 & 88/100 & 29.5 & 88/100 \\
 & \textbf{IPOPT} & 3.1e-6 & \textcolor{red}{\textbf{0.813}} & 1.738 & 3.3e-6 & \textcolor{red}{\textbf{1.088}} & 1.454 & 1.4e-8 & 1.4e-8 & 1.4e-8 & \textcolor{red}{\textbf{42/100}} & \textcolor{ForestGreen}{\textbf{1.7}} & \textcolor{red}{\textbf{42/100}} \\
 & \textbf{CMA-ES} & 6.3e-5 & 7.3e-5 & 7.6e-5 & 6.8e-5 & 7.6e-5 & 7.9e-5 & 8.6e-5 & \textcolor{red}{\textbf{9.7e-5}} & 1.0e-4 & \textcolor{ForestGreen}{\textbf{100/100}} & \textcolor{red}{\textbf{269.7}} & \textcolor{ForestGreen}{\textbf{100/100}} \\
\midrule
\multirow{4}{*}{\makecell{\textbf{(\ref{prob:mishra_bird})}\\\textbf{Mishra's}\\\textbf{Bird}}} & \textbf{PSO-AL} & 3.9e-6 & 3.9e-6 & 3.9e-6 & 4.6e-4 & 4.6e-4 & 4.6e-4 & 0 & \textcolor{ForestGreen}{\textbf{0}} & 0 & \textcolor{ForestGreen}{\textbf{100/100}} & 8.3 & \textcolor{ForestGreen}{\textbf{100/100}} \\
 & \textbf{PSO-PEN} & 3.9e-6 & 3.9e-6 & 3.9e-6 & 4.6e-4 & 4.6e-4 & 4.6e-4 & 0 & \textcolor{ForestGreen}{\textbf{0}} & 0 & \textcolor{ForestGreen}{\textbf{100/100}} & 8.8 & \textcolor{ForestGreen}{\textbf{100/100}} \\
 & \textbf{IPOPT} & 2.3e-8 & \textcolor{red}{\textbf{4.619}} & 7.899 & 4.9e-8 & \textcolor{red}{\textbf{85.246}} & 108.252 & 0 & \textcolor{ForestGreen}{\textbf{0}} & 1.0e-8 & \textcolor{red}{\textbf{46/100}} & \textcolor{ForestGreen}{\textbf{3.2}} & \textcolor{red}{\textbf{46/100}} \\
 & \textbf{CMA-ES} & 1.6e-8 & \textcolor{ForestGreen}{\textbf{2.3e-8}} & 3.1e-8 & 4.9e-8 & \textcolor{ForestGreen}{\textbf{4.9e-8}} & 4.9e-8 & 0 & \textcolor{ForestGreen}{\textbf{0}} & 0 & 99/100 & \textcolor{red}{\textbf{251.4}} & 99/100 \\
\midrule
\multirow{4}{*}{\makecell{\textbf{(\ref{prob:gomez_levy})}\\\textbf{Constrained}\\\textbf{Gomez-Levy}}} & \textbf{PSO-AL} & 4.0e-7 & 4.0e-7 & 4.0e-7 & 1.5e-6 & 1.5e-6 & 1.5e-6 & 0 & \textcolor{ForestGreen}{\textbf{0}} & 0 & \textcolor{ForestGreen}{\textbf{100/100}} & 8.5 & \textcolor{ForestGreen}{\textbf{100/100}} \\
 & \textbf{PSO-PEN} & 4.0e-7 & 6.3e-7 & 1.0e-6 & 1.5e-6 & 1.5e-6 & 1.5e-6 & 0 & \textcolor{ForestGreen}{\textbf{0}} & 0 & \textcolor{ForestGreen}{\textbf{100/100}} & 8.2 & \textcolor{ForestGreen}{\textbf{100/100}} \\
 & \textbf{IPOPT} & 1.1e-8 & \textcolor{red}{\textbf{1.383}} & 1.413 & 4.7e-8 & \textcolor{red}{\textbf{0.044}} & 0.667 & 0 & \textcolor{red}{\textbf{9.3e-9}} & 9.9e-9 & \textcolor{red}{\textbf{43/100}} & \textcolor{ForestGreen}{\textbf{2.7}} & \textcolor{red}{\textbf{43/100}} \\
 & \textbf{CMA-ES} & 9.3e-9 & \textcolor{ForestGreen}{\textbf{1.3e-8}} & 2.3e-8 & 4.7e-8 & \textcolor{ForestGreen}{\textbf{4.7e-8}} & 4.7e-8 & 0 & \textcolor{ForestGreen}{\textbf{0}} & 0 & 98/100 & \textcolor{red}{\textbf{282.7}} & 98/100 \\
\midrule
\multirow{4}{*}{\makecell{\textbf{(\ref{prob:drop_wave})}\\\textbf{Drop-Wave}}} & \textbf{PSO-AL} & 0 & \textcolor{ForestGreen}{\textbf{0}} & 0 & 2.0e-6 & \textcolor{ForestGreen}{\textbf{2.0e-6}} & 2.0e-6 & 2.7e-10 & 5.3e-9 & 2.2e-8 & \textcolor{ForestGreen}{\textbf{100/100}} & \textcolor{ForestGreen}{\textbf{8.4}} & \textcolor{ForestGreen}{\textbf{100/100}} \\
 & \textbf{PSO-PEN} & 0 & 1.0e-5 & 3.3e-5 & 2.0e-6 & \textcolor{ForestGreen}{\textbf{2.0e-6}} & 2.0e-6 & 2.2e-8 & 1.4e-7 & 5.6e-7 & 96/100 & 11.8 & 96/100 \\
 & \textbf{IPOPT} & 4.4e-4 & \textcolor{red}{\textbf{1.629}} & 6.977 & 3.8e-6 & \textcolor{red}{\textbf{0.193}} & 0.332 & 0 & \textcolor{ForestGreen}{\textbf{0}} & 46.8162 & \textcolor{red}{\textbf{8/100}} & 81.6 & \textcolor{red}{\textbf{7/100}} \\
 & \textbf{CMA-ES} & 1.7e-6 & 0.613 & 2.119 & 2.3e-6 & 0.08 & 0.216 & 0 & \textcolor{red}{\textbf{1.6e-7}} & 9.4e-5 & 37/100 & \textcolor{red}{\textbf{772.2}} & 37/100 \\
\midrule
\multirow{4}{*}{\makecell{\textbf{(\ref{prob:egg_holder})}\\\textbf{Egg-Holder}}} & \textbf{PSO-AL} & 2.0e-4 & 2.0e-4 & 2.0e-4 & 3.4e-4 & 3.4e-4 & 3.4e-4 & 0 & \textcolor{ForestGreen}{\textbf{0}} & 0 & \textcolor{ForestGreen}{\textbf{100/100}} & \textcolor{ForestGreen}{\textbf{11.3}} & \textcolor{ForestGreen}{\textbf{100/100}} \\
 & \textbf{PSO-PEN} & 2.0e-4 & 2.0e-4 & 41.226 & 3.4e-4 & 3.4e-4 & 2.723 & 0 & \textcolor{ForestGreen}{\textbf{0}} & 0 & 81/100 & 12.1 & 81/100 \\
 & \textbf{IPOPT} & 5.2e-6 & \textcolor{red}{\textbf{44.749}} & 344.504 & 2.0e-5 & \textcolor{red}{\textbf{37.539}} & 873.608 & 0 & \textcolor{ForestGreen}{\textbf{0}} & 1.0e-8 & \textcolor{red}{\textbf{43/100}} & 255.5 & \textcolor{red}{\textbf{43/100}} \\
 & \textbf{CMA-ES} & 5.0e-6 & \textcolor{ForestGreen}{\textbf{5.1e-6}} & 41.227 & 2.7e-6 & \textcolor{ForestGreen}{\textbf{2.7e-6}} & 6.396 & 0 & \textcolor{ForestGreen}{\textbf{0}} & 0 & 81/100 & \textcolor{red}{\textbf{648.3}} & 81/100 \\
\midrule
\multirow{4}{*}{\makecell{\textbf{(\ref{prob:griewank})}\\\textbf{Griewank}\\\textbf{2D}}} & \textbf{PSO-AL} & 3.8e-9 & \textcolor{ForestGreen}{\textbf{1.2e-8}} & 1.7e-8 & 0 & \textcolor{ForestGreen}{\textbf{0}} & 0 & 0 & \textcolor{ForestGreen}{\textbf{0}} & 0 & \textcolor{ForestGreen}{\textbf{100/100}} & \textcolor{ForestGreen}{\textbf{8.5}} & \textcolor{ForestGreen}{\textbf{100/100}} \\
 & \textbf{PSO-PEN} & 2.7e-7 & 1.0e-6 & 2.3e-6 & 0 & \textcolor{ForestGreen}{\textbf{0}} & 0 & 0 & \textcolor{ForestGreen}{\textbf{0}} & 0 & \textcolor{ForestGreen}{\textbf{100/100}} & 9.5 & \textcolor{ForestGreen}{\textbf{100/100}} \\
 & \textbf{IPOPT} & 2.8e-4 & \textcolor{red}{\textbf{15.629}} & 27.184 & 2.3e-8 & \textcolor{red}{\textbf{0.089}} & 0.736 & 0 & \textcolor{ForestGreen}{\textbf{0}} & 1.0e-8 & 13/100 & 28.6 & 13/100 \\
 & \textbf{CMA-ES} & 5.3e-8 & 5.496 & 10.874 & 0 & 0.01 & 0.03 & 0 & \textcolor{ForestGreen}{\textbf{0}} & 0 & \textcolor{red}{\textbf{9/100}} & \textcolor{red}{\textbf{262.2}} & \textcolor{red}{\textbf{9/100}} \\
\midrule
\multirow{4}{*}{\makecell{\textbf{(\ref{prob:double_integrator})}\\\textbf{Double}\\\textbf{Integrator}}} & \textbf{PSO-AL} & 0.011 & 0.016 & 0.024 & 1.0e-4 & 0.001 & 0.004 & 2.8e-5 & 4.4e-5 & 7.0e-5 & \textcolor{ForestGreen}{\textbf{25/25}} & \textcolor{red}{\textbf{4.7e+4}} & 11/25 \\
 & \textbf{PSO-PEN} & 11.099 & 14.688 & 21.146 & 6.345 & 10.508 & 24.522 & 0.0034 & \textcolor{red}{\textbf{0.0232}} & 0.7571 & \textcolor{red}{\textbf{0/25}} & - & \textcolor{red}{\textbf{0/25}} \\
 & \textbf{IPOPT} & 2.4e-5 & \textcolor{ForestGreen}{\textbf{2.4e-5}} & 2.5e-5 & 4.4e-6 & \textcolor{ForestGreen}{\textbf{4.4e-6}} & 4.4e-6 & 0 & \textcolor{ForestGreen}{\textbf{0}} & 0 & \textcolor{ForestGreen}{\textbf{25/25}} & \textcolor{ForestGreen}{\textbf{16.4}} & \textcolor{ForestGreen}{\textbf{25/25}} \\
 & \textbf{CMA-ES} & 22.379 & \textcolor{red}{\textbf{27.816}} & 32.419 & 25.62 & \textcolor{red}{\textbf{39.585}} & 55.008 & 9.4e-5 & 9.8e-5 & 9.9e-5 & \textcolor{red}{\textbf{0/25}} & - & \textcolor{red}{\textbf{0/25}} \\
\bottomrule
\end{tabular}
\label{tab:metrics_stats_diff_optimizers}
\end{table*}

The new comparison metrics are gathered in Tab.~\ref{tab:metrics_stats_diff_optimizers}, and the time data in Tab.~\ref{tab:running_times_diff_optimizers}. Initially, we notice that PSO-PEN performs at best comparably to, and more often worse than PSO-AL, across all 10 COPs, regarding both solution convergence and constraint satisfaction. Moreover, PSO-PEN is unable to handle higher-dimensional COPs, such as Prob.~\ref{prob:quadratic} and~\ref{prob:double_integrator}, where it totally fails. This showcases that the AL framework is crucial in ensuring constraint satisfaction. We also tested a second PSO-PEN variant, obtained by keeping identical the PSO-AL framework but removing the Lagrange multiplier updates \eqref{eq:lagrange_multipliers_update}. On the COPs for which the optimal vector is $(\boldsymbol{\lambda}_{\star}, \boldsymbol{\mu}_{\star}) = \boldsymbol{0}$, this simple quadratic penalty method retains the same high success rate as PSO-AL, since the multipliers are initialized at zero and remain at their optimal values throughout the optimization. However, on prob.~\ref{prob:quadratic},~\ref{prob:power},~\ref{prob:linear_polynomial}, and~\ref{prob:double_integrator}, where $(\boldsymbol{\lambda}_{\star}, \boldsymbol{\mu}_{\star}) \neq \boldsymbol{0}$, this approach neither achieves solution convergence nor ensures constraint satisfaction, and it completely fails in every run.

\begin{table*}
\centering
\caption{\textbf{Aggregated time comparison of optimizers PSO-AL (best of the AL-EAs), PSO-PEN, IPOPT, and CMA-ES}, on the whole benchmark set (10 COPs), reported in a similar way as in Tab.~\ref{tab:running_times_AL_EAs}.}
\footnotesize
\setlength{\tabcolsep}{4pt}
\renewcommand{\arraystretch}{1.0}
\resizebox{\textwidth}{!}{%
\begin{tabular}{l|l|cccccccccc}
\toprule
\multirow{5}{*}{\rotatebox[origin=c]{90}{\makecell{\textbf{Total Running}\\\textbf{Times [msec]}\\\textbf{(Median)}}}}
& \diagbox{\textbf{Optimizer}}{\textbf{Problem}} & \makecell{\textbf{(\ref{prob:rosenbrock})}\\\textbf{Rosenbrock}\\\textbf{2D}} & \makecell{\textbf{(\ref{prob:quadratic})}\\\textbf{Quadratic}} & \makecell{\textbf{(\ref{prob:power})}\\\textbf{Power-Terms}\\\textbf{Objective}} & \makecell{\textbf{(\ref{prob:linear_polynomial})}\\\textbf{Linear}\\\textbf{Objective}} & \makecell{\textbf{(\ref{prob:mishra_bird})}\\\textbf{Mishra's}\\\textbf{Bird}} & \makecell{\textbf{(\ref{prob:gomez_levy})}\\\textbf{Constrained}\\\textbf{Gomez-Levy}} & \makecell{\textbf{(\ref{prob:drop_wave})}\\\textbf{Drop-Wave}} & \makecell{\textbf{(\ref{prob:egg_holder})}\\\textbf{Egg-Holder}} & \makecell{\textbf{(\ref{prob:griewank})}\\\textbf{Griewank}\\\textbf{2D}} & \makecell{\textbf{(\ref{prob:double_integrator})}\\\textbf{Double}\\\textbf{Integrator}} \\
\cmidrule(l){2-12}
& \makecell{\textbf{PSO-AL}} & 158.2 & 208.7 & 200.1 & 199.8 & 184.5 & 200.5 & 184.1 & 205.4 & 184.9 & \textcolor{red}{5.4e+4} \\
& \makecell{\textbf{PSO-PEN}} & 9.2 & 300 & \textcolor{ForestGreen}{\textbf{63.3}} & 29 & 8.8 & 8.2 & \textcolor{ForestGreen}{\textbf{11.8}} & \textcolor{ForestGreen}{\textbf{11.7}} & \textcolor{ForestGreen}{\textbf{9.5}} & 4.1e+3 \\
& \makecell{\textbf{IPOPT}} & \textcolor{ForestGreen}{\textbf{5.4}} & \textcolor{ForestGreen}{\textbf{2.7}} & 76.6 & \textcolor{ForestGreen}{\textbf{1.9}} & \textcolor{ForestGreen}{\textbf{3.5}} & \textcolor{ForestGreen}{\textbf{3.3}} & 195.5 & \textcolor{red}{6.3e+3} & 56.5 & \textcolor{ForestGreen}{\textbf{23.6}} \\
& \makecell{\textbf{CMA-ES}} & \textcolor{red}{1.2e+3} & \textcolor{red}{2.7e+3} & \textcolor{red}{2.0e+3} & \textcolor{red}{1.4e+3} & \textcolor{red}{1.2e+3} & \textcolor{red}{1.2e+3} & \textcolor{red}{1.6e+3} & 1.3e+3 & \textcolor{red}{1.2e+3} & 3.5e+4 \\
\bottomrule
\end{tabular}
}
\label{tab:running_times_diff_optimizers}
\end{table*}

On the other hand, IPOPT appears to fail in reliably reaching the global minimum in almost all COPs (less than $50\%$ success, except Prob.~\ref{prob:power} and~\ref{prob:double_integrator}), displaying worse performance than PSO-PEN, and high variability in solution and objective errors. Although IPOPT converges very rapidly, in a matter of milliseconds (Tab.~\ref{tab:running_times_diff_optimizers}) due to its gradient-based updates, its inability to consistently locate the global minimum highlights the advantage of the AL-based population framework in addressing nonlinear, nonconvex, and multimodal problems. The comparatively weak performance of IPOPT can be intuitively explained by the structural characteristics of the benchmark landscapes considered in this study, that pose significant challenges to gradient-based or Newton-based local search methods, which are inherently prone to converging to suboptimal local stationary points. Classic examples such as the flat, banana-shaped Rosenbrock valley (Prob. ~\ref{prob:rosenbrock}) and the highly multimodal structures of Drop-Wave (Prob. ~\ref{prob:drop_wave}) and Griewank (Prob. ~\ref{prob:griewank}) illustrate these difficulties. As shown in the convergence contour plots of Fig.~\ref{fig:convergence_contour_maps}, IPOPT often gets trapped in local minima.
For the Rosenbrock problem, it converges near $(0,0)$ rather than to the global minimum at $(1,1)$, located at the intersection of the two active nonlinear inequality constraints, due to the narrow and curved geometry of the valley. For the Griewank problem, IPOPT often terminates at interior points within the feasible region instead of reaching the boundary global minimum on the active circular inequality constraint, primarily because of the large number of local minima. Similarly, for the Drop-Wave problem, it converges to local minima along the circular equality constraint, as these stationary points are difficult to distinguish from the global minimum using local information alone.
In contrast, population-based EAs leverage exploration and diversity to navigate such complex, nonconvex topographies more effectively, maintaining progress toward the true global optima (for example, PSO-AL converges to the global minima from any initial point, for all the cases described above).

Up to this point, we have shown that our \methodacronym\ approach handles the benchmark black-box COPs substantially more effectively than the state-of-the-art numerical optimizer IPOPT. The last benchmark optimizer is the state-of-the-art evolution strategy AL-$(\mu/\mu_w,\lambda)$-CMA-ES, which in general converges better than IPOPT. But it exhibits much weaker performance relative to \methodacronym\, on the higher-dimensional Prob.~\ref{prob:quadratic}, and~\ref{prob:double_integrator}, and on the multimodal Prob.~\ref{prob:drop_wave}, and~\ref{prob:griewank}. Also, it seems that it struggles in these specific COPs more than the others, which is consistent with the constrained-CMA-ES literature: \cite{dufosse2021augmented} restricts its evaluation to unimodal test problems, explicitly noting that ''...The performance of the CMA-ES on multimodal test problems has not been investigated...'', while \cite{girardin2025augmented} evaluates its proposed variant mainly on simple sphere and ellipsoid problems with linear constraints, rather than on the more complex, multiply constrained, high-dimensional and multimodal search spaces considered here. Overall, bringing CMA-ES to a level competitive with \methodacronym\ required considerable tuning, whereas our approach requires much less. CMA-ES was also noticeably slower in wall-clock time than the other optimizers (Tab.~\ref{tab:running_times_diff_optimizers}), likely due to the additional restarts and because \texttt{pycma} is implemented in pure Python, unlike the AL-EAs and IPOPT, which rely on compiled C++ backends.

\begin{figure}
    \centering
    \begin{subfigure}{0.49\columnwidth}
        \centering
        \includegraphics[width=\linewidth]{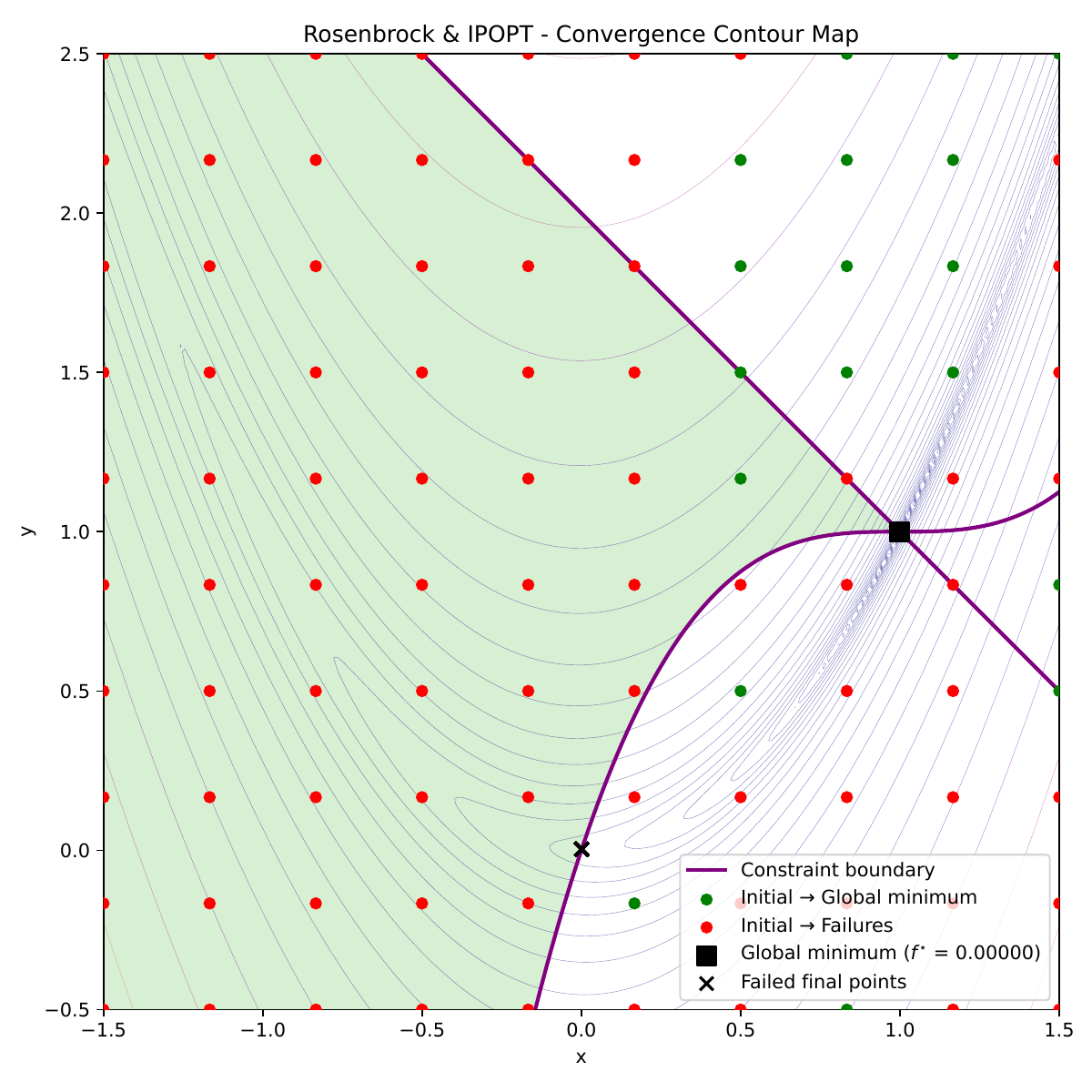}
        \caption{Rosenbrock 2D}
    \end{subfigure}
    \begin{subfigure}{0.49\columnwidth}
        \centering
        \includegraphics[width=\linewidth]{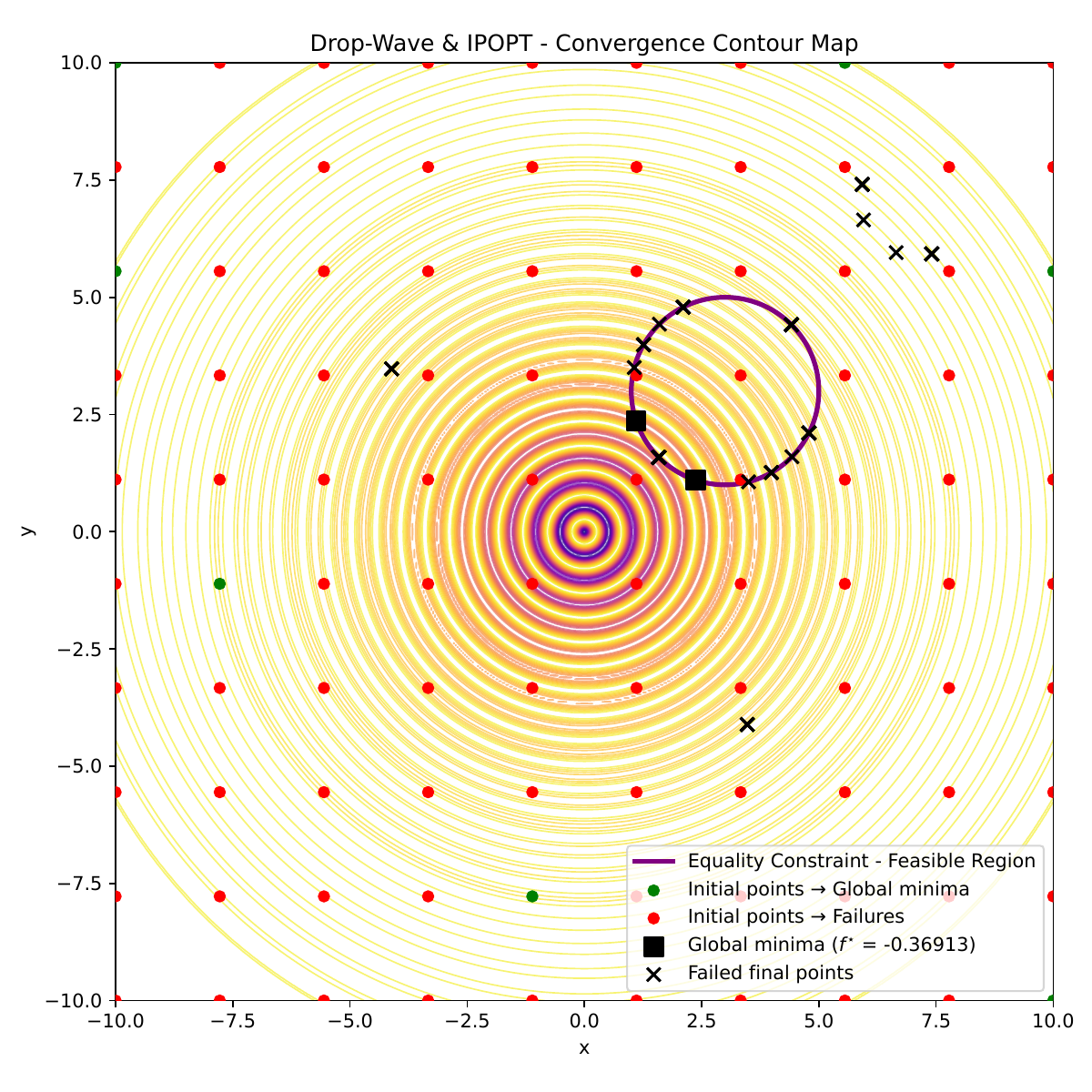}
        \caption{Drop-Wave}
    \end{subfigure}
    \begin{subfigure}{\columnwidth}
        \centering
        \includegraphics[width=\linewidth]{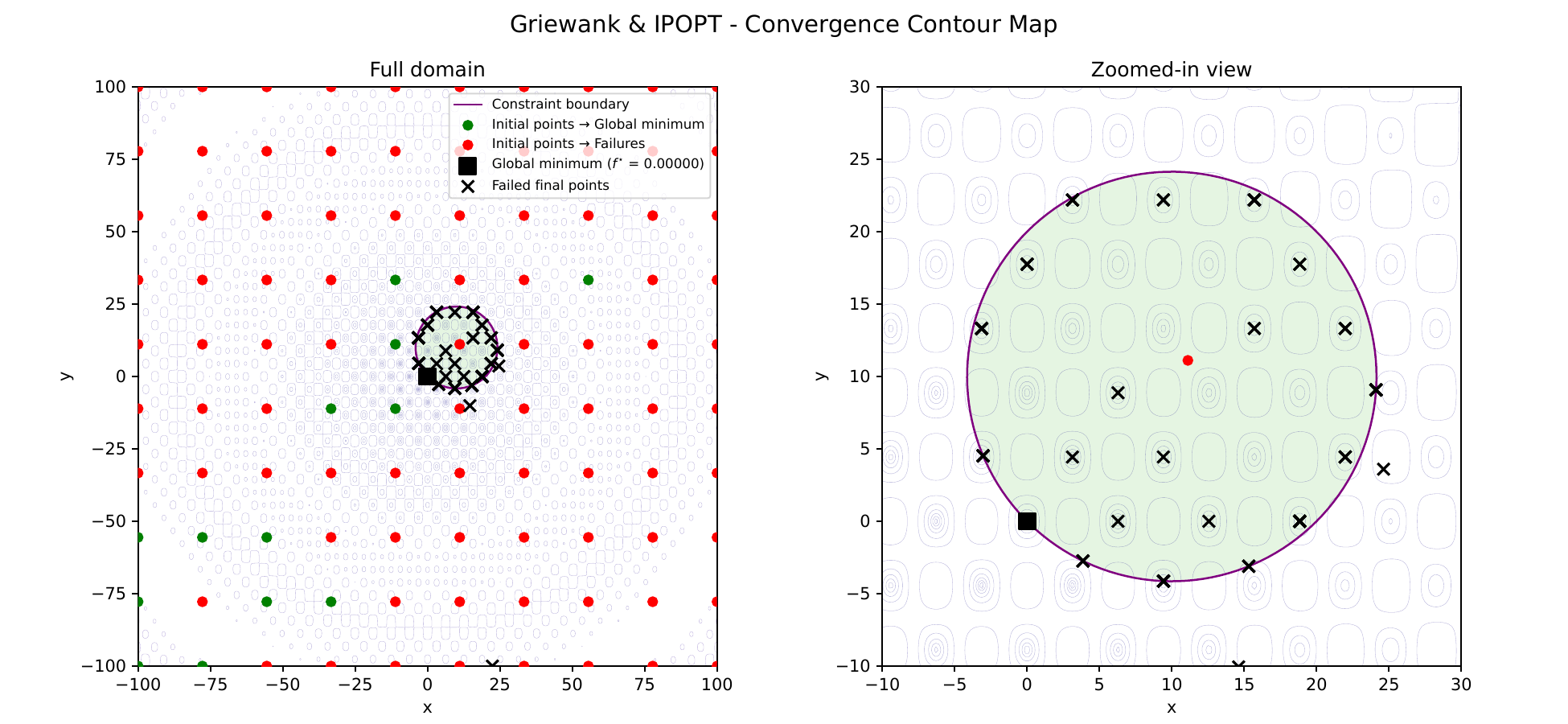}
        \caption{Griewank 2D}
    \end{subfigure}
    \caption{Convergence contour maps for three 2D COPs using IPOPT. Top left: Rosenbrock 2D (Prob.~\ref{prob:rosenbrock}). Top right: Drop-Wave (Prob.~\ref{prob:drop_wave}). Bottom: Griewank 2D (full domain + zoomed in view) (Prob.~\ref{prob:griewank}). Green dots show initial points that reached the global minimum, red dots show those that did not, black crosses mark final points of failed runs, and black squares mark the global minima positions. Constraint boundaries are in purple, with feasible regions shaded light green.}
    \label{fig:convergence_contour_maps}
\end{figure}

\subsection{Comparing all Optimizers together} \label{sec:comparing_all_optimizers}

\begin{figure}
    \centering
    
    \begin{subfigure}{.49\columnwidth}
        \centering
        \includegraphics[width=\linewidth]{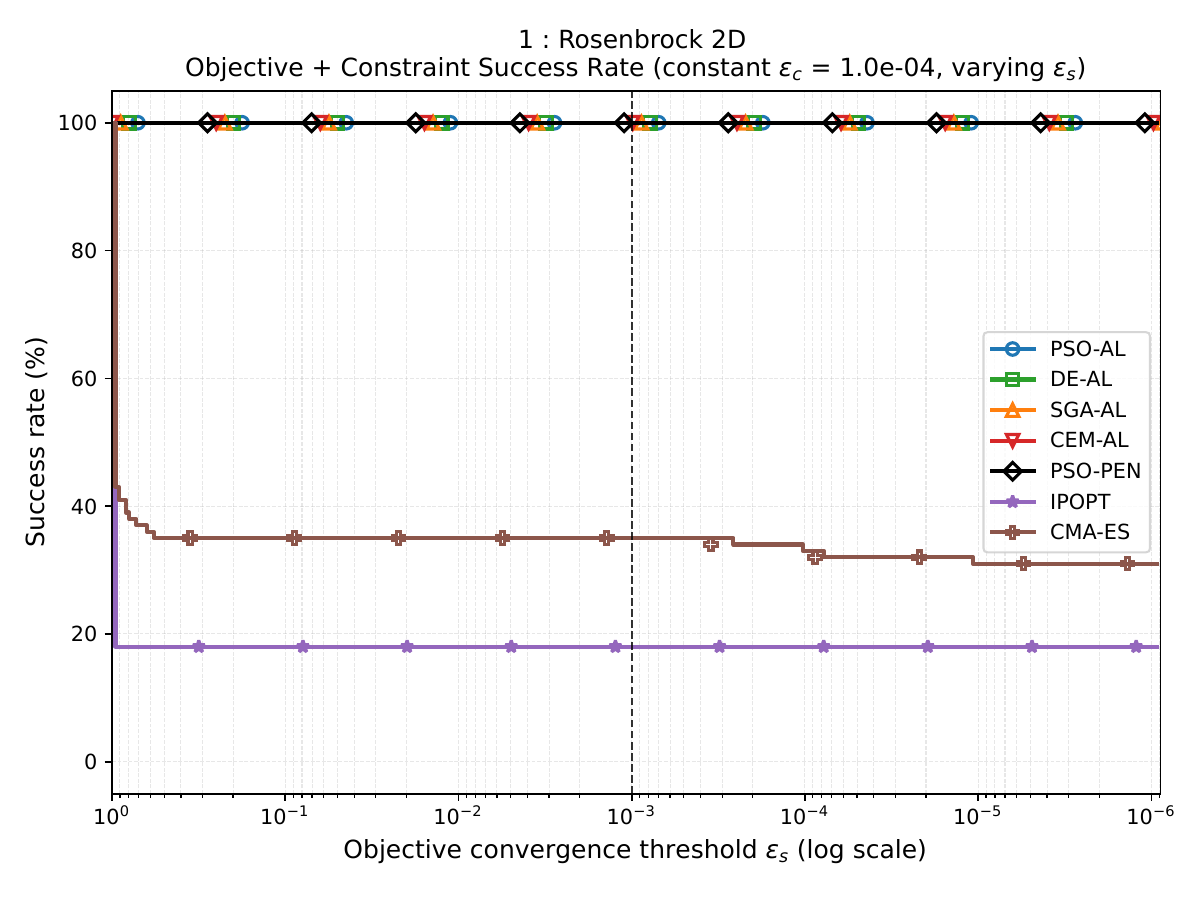}
    \end{subfigure}
    \begin{subfigure}{.49\columnwidth}
        \centering
        \includegraphics[width=\linewidth]{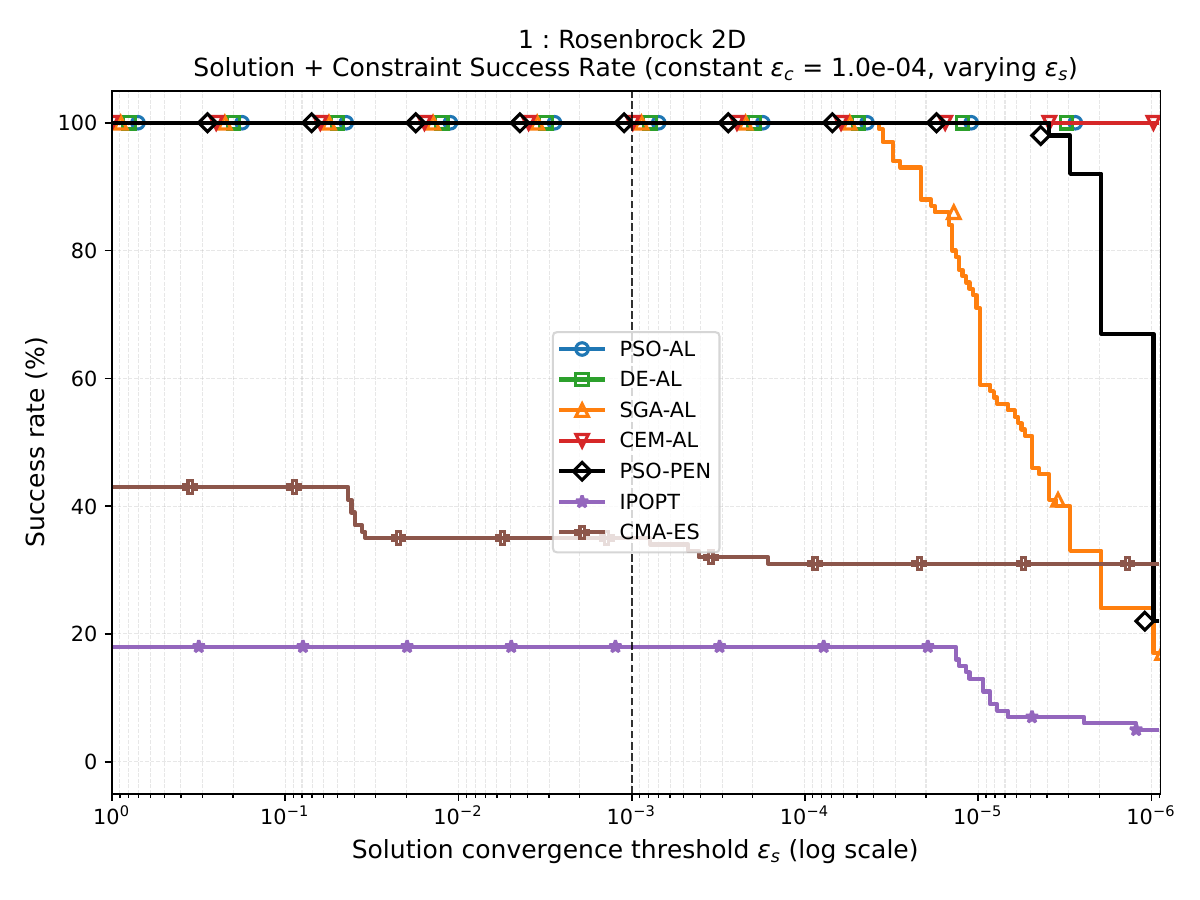}
    \end{subfigure}
    
    \begin{subfigure}{.49\columnwidth}
        \centering
        \includegraphics[width=\linewidth]{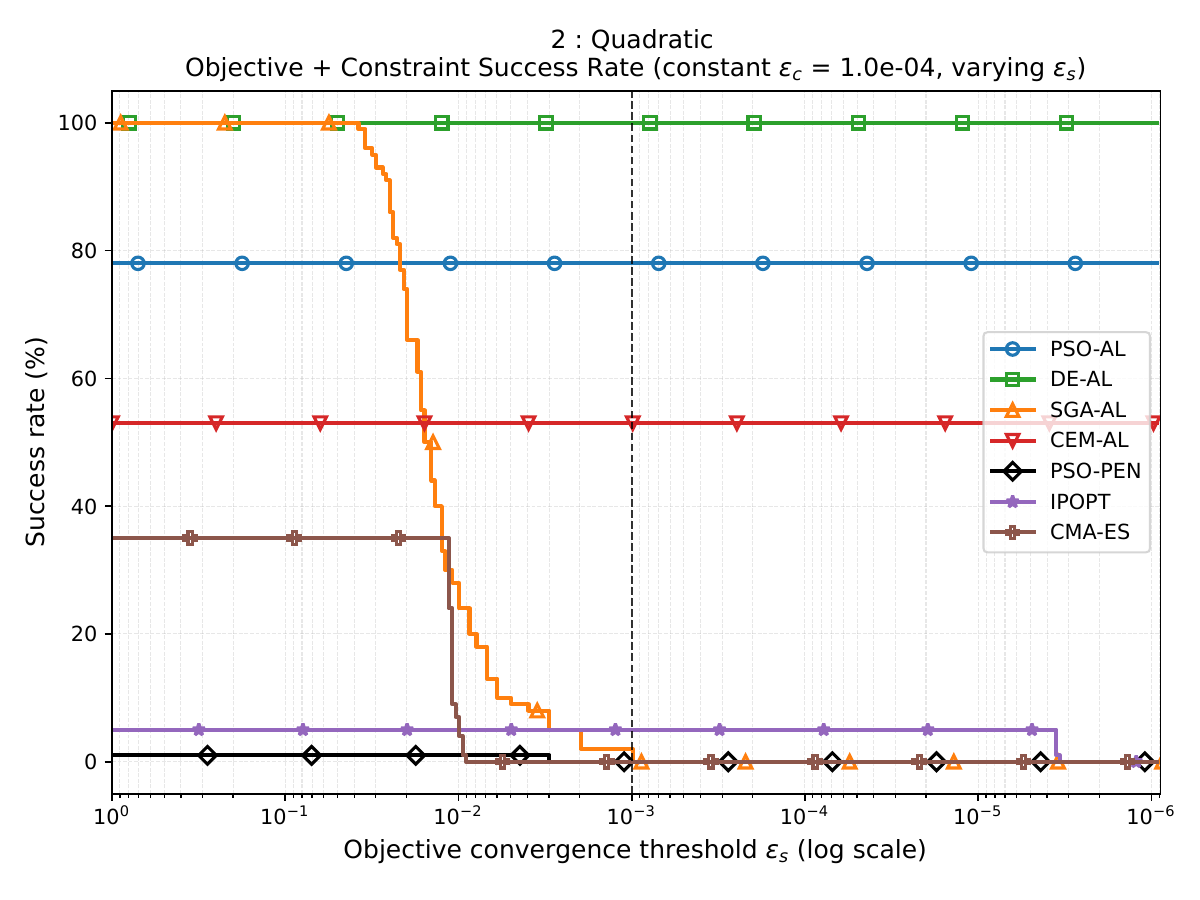}
    \end{subfigure}
    \begin{subfigure}{.49\columnwidth}
        \centering
        \includegraphics[width=\linewidth]{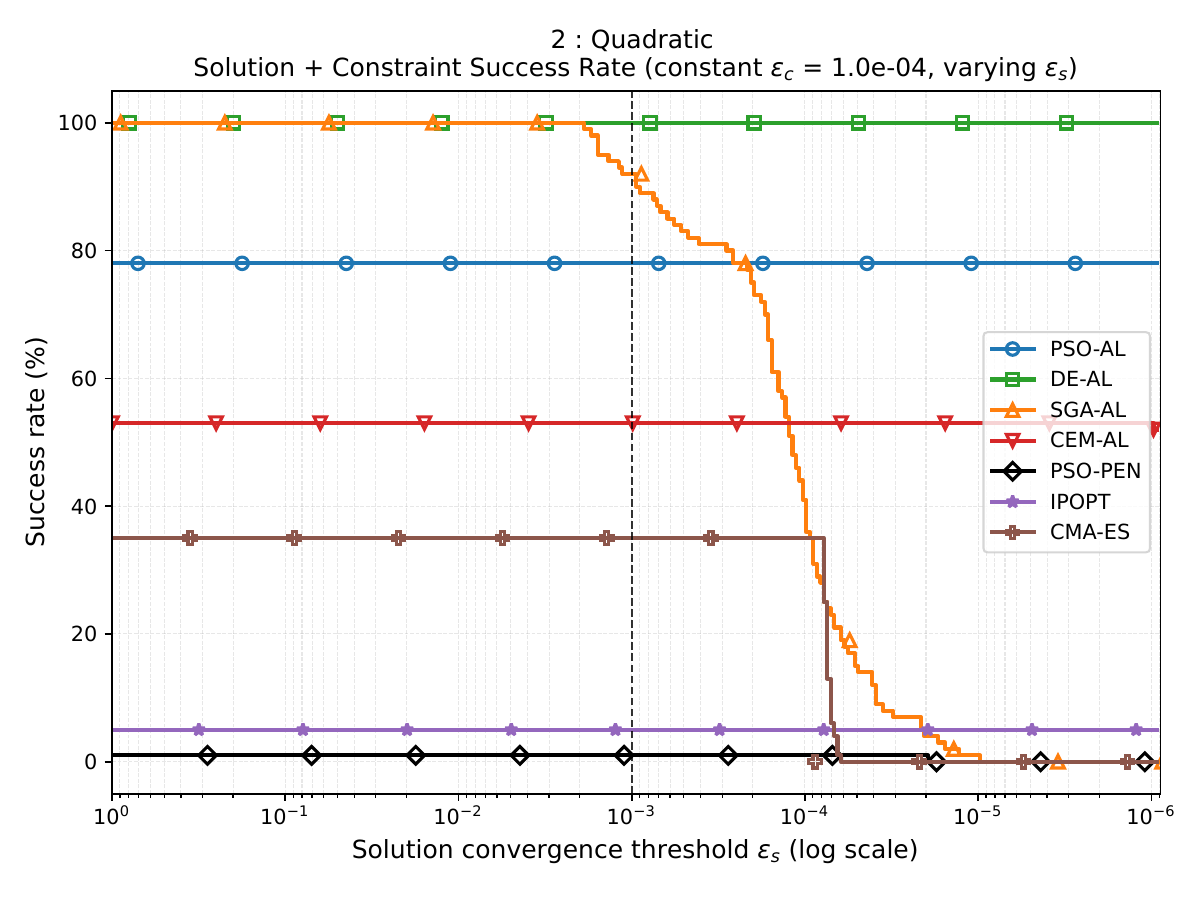}
    \end{subfigure}
    
    \begin{subfigure}{.49\columnwidth}
        \centering
        \includegraphics[width=\linewidth]{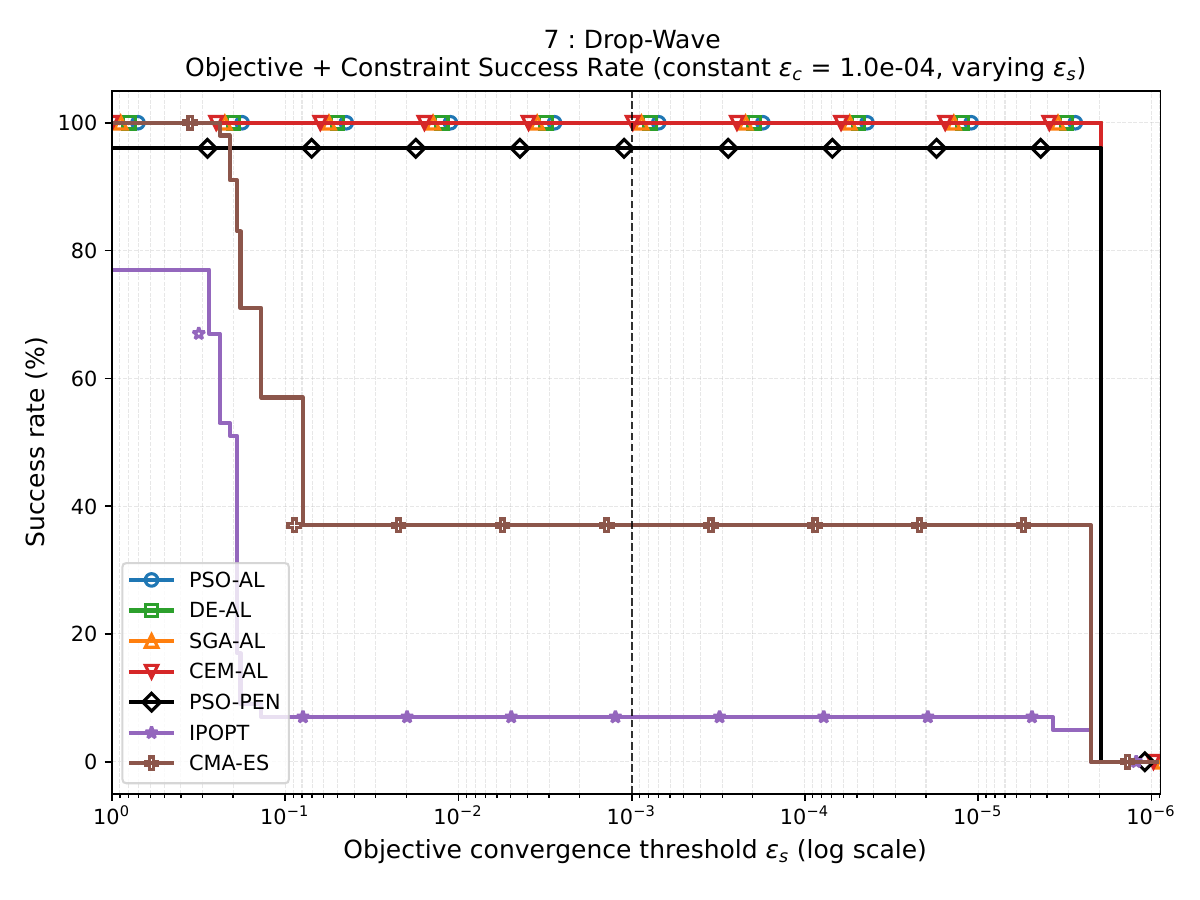}
    \end{subfigure}
    \begin{subfigure}{.49\columnwidth}
        \centering
        \includegraphics[width=\linewidth]{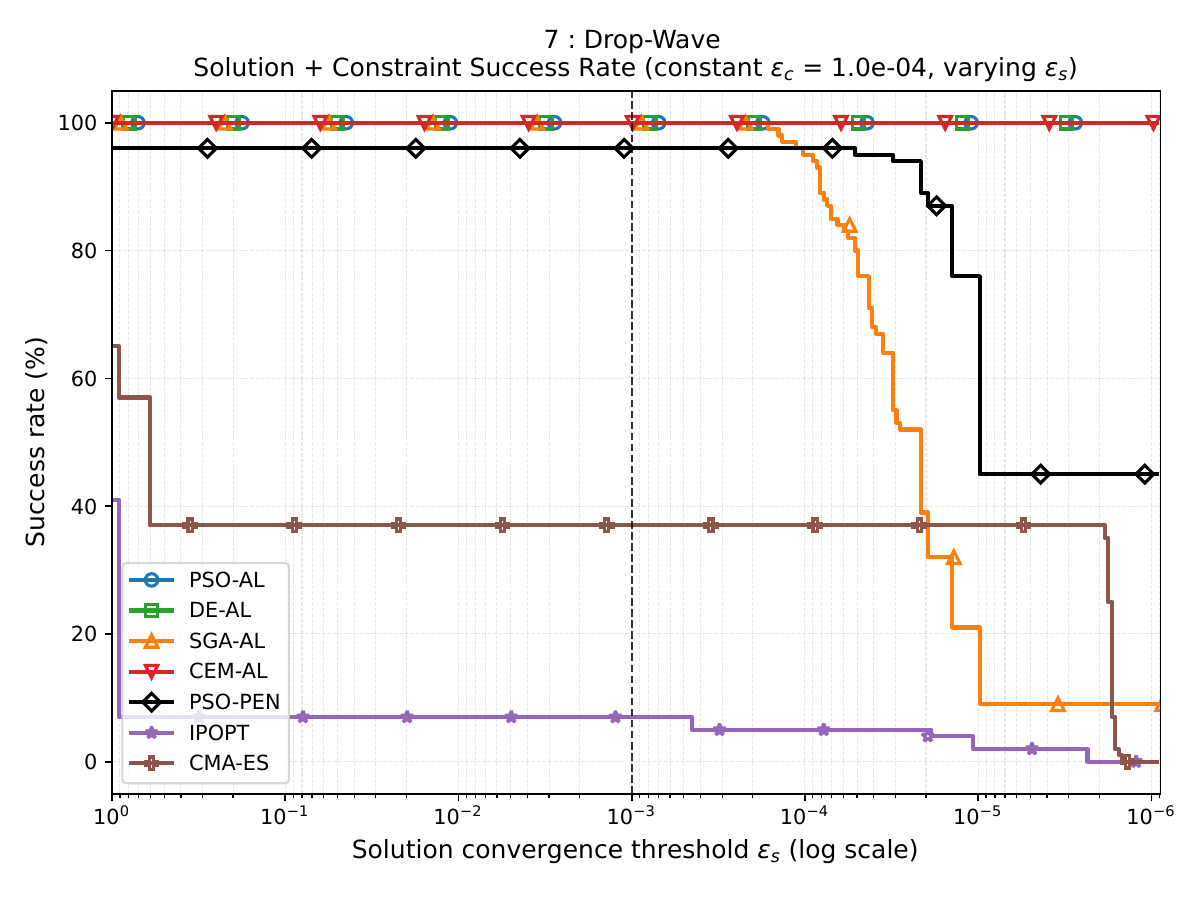}
    \end{subfigure}
    
    \begin{subfigure}{.49\columnwidth}
        \centering
        \includegraphics[width=\linewidth]{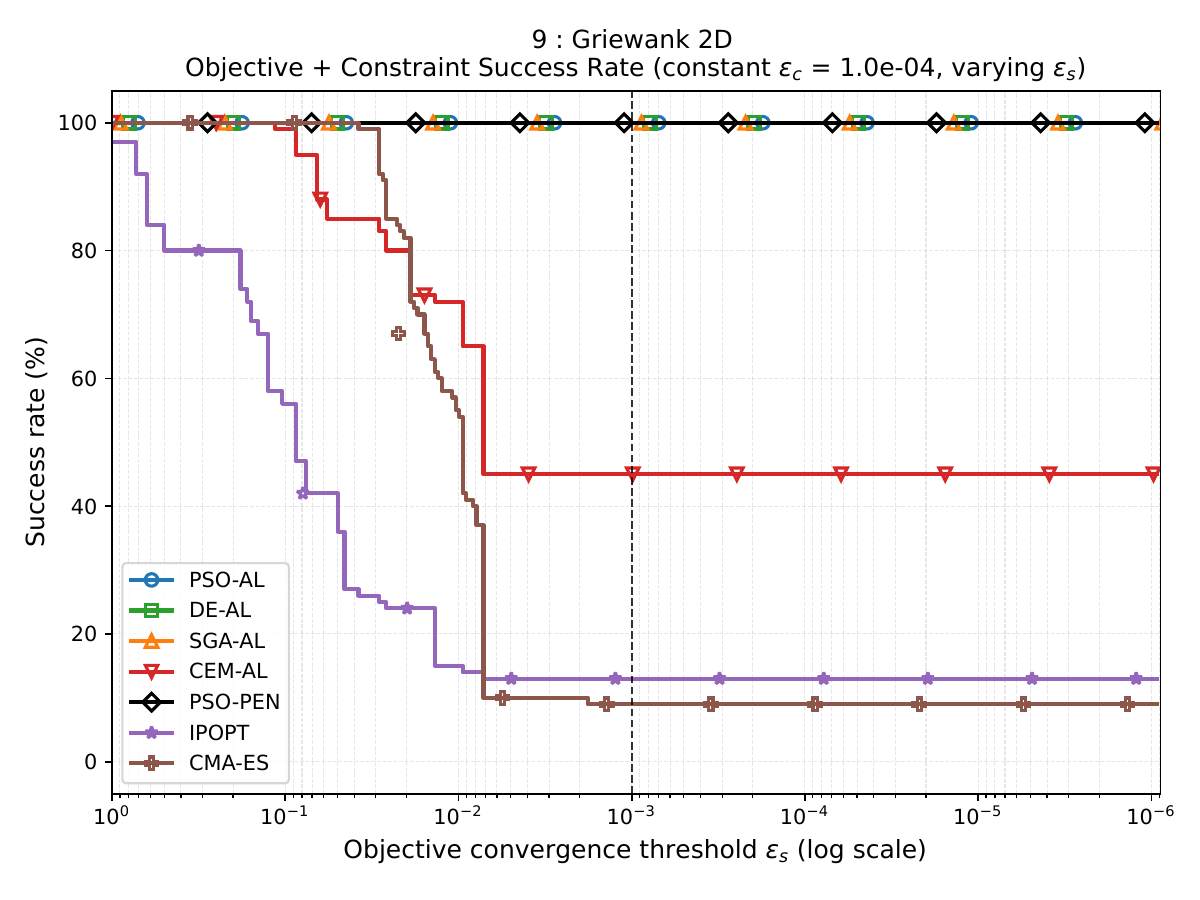}
    \end{subfigure}
    \begin{subfigure}{.49\columnwidth}
        \centering
        \includegraphics[width=\linewidth]{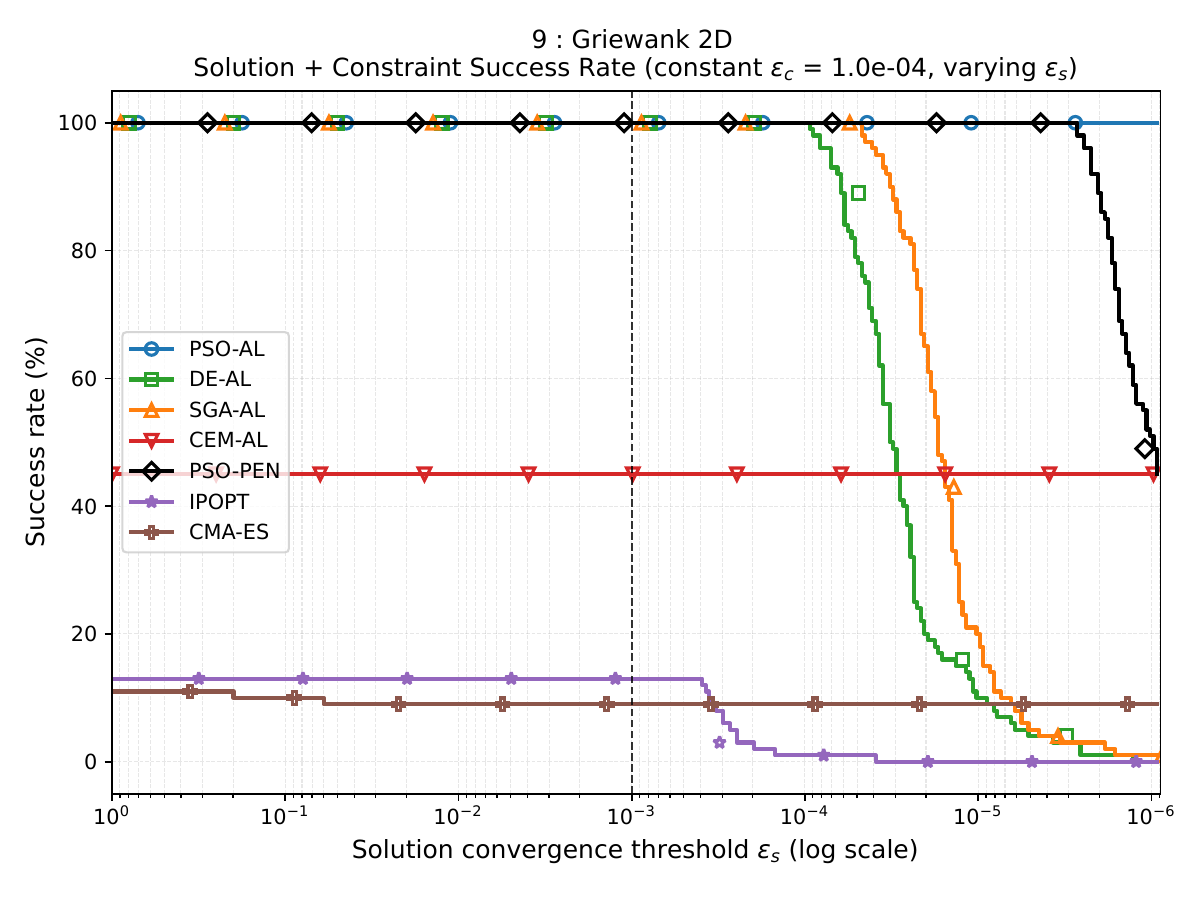}
    \end{subfigure}
    
    \begin{subfigure}{.49\columnwidth}
        \centering
        \includegraphics[width=\linewidth]{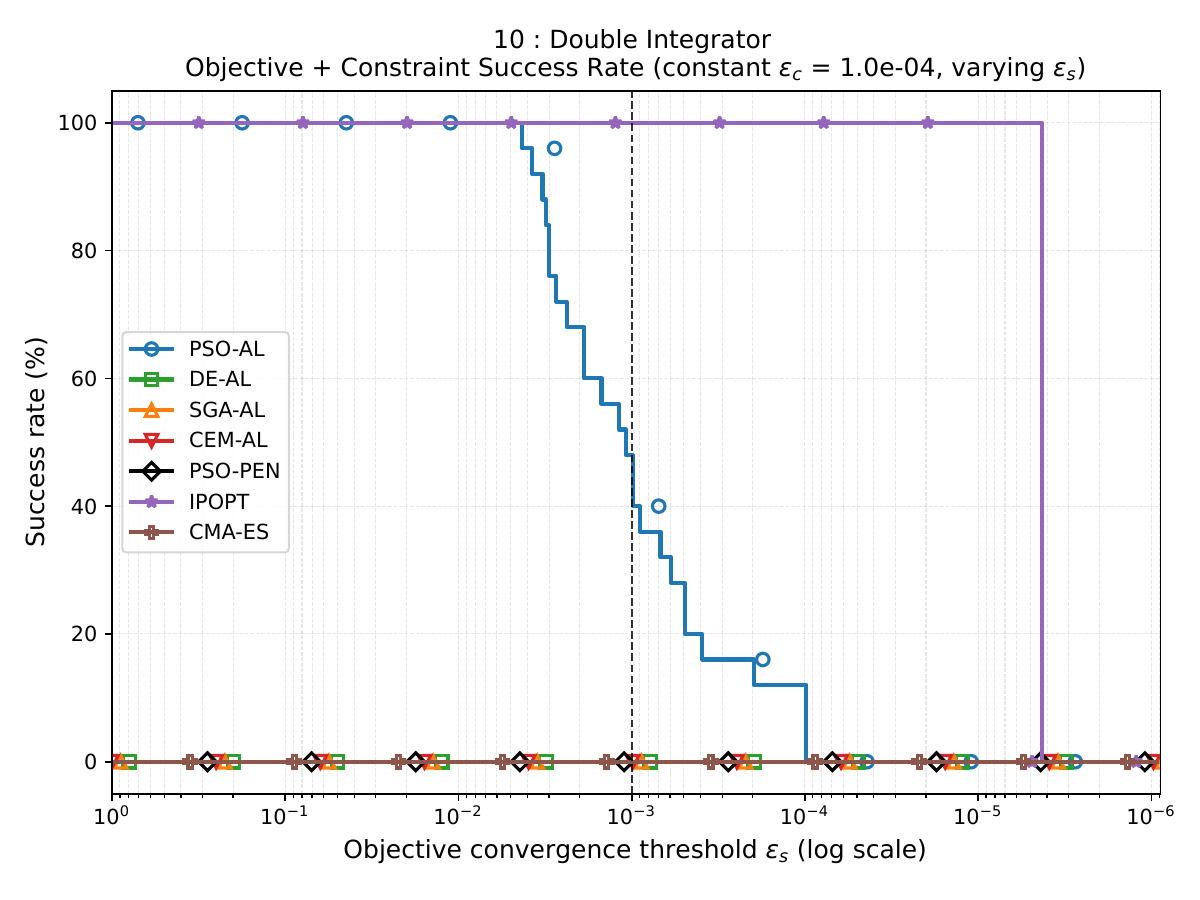}
    \end{subfigure}
    \begin{subfigure}{.49\columnwidth}
        \centering
        \includegraphics[width=\linewidth]{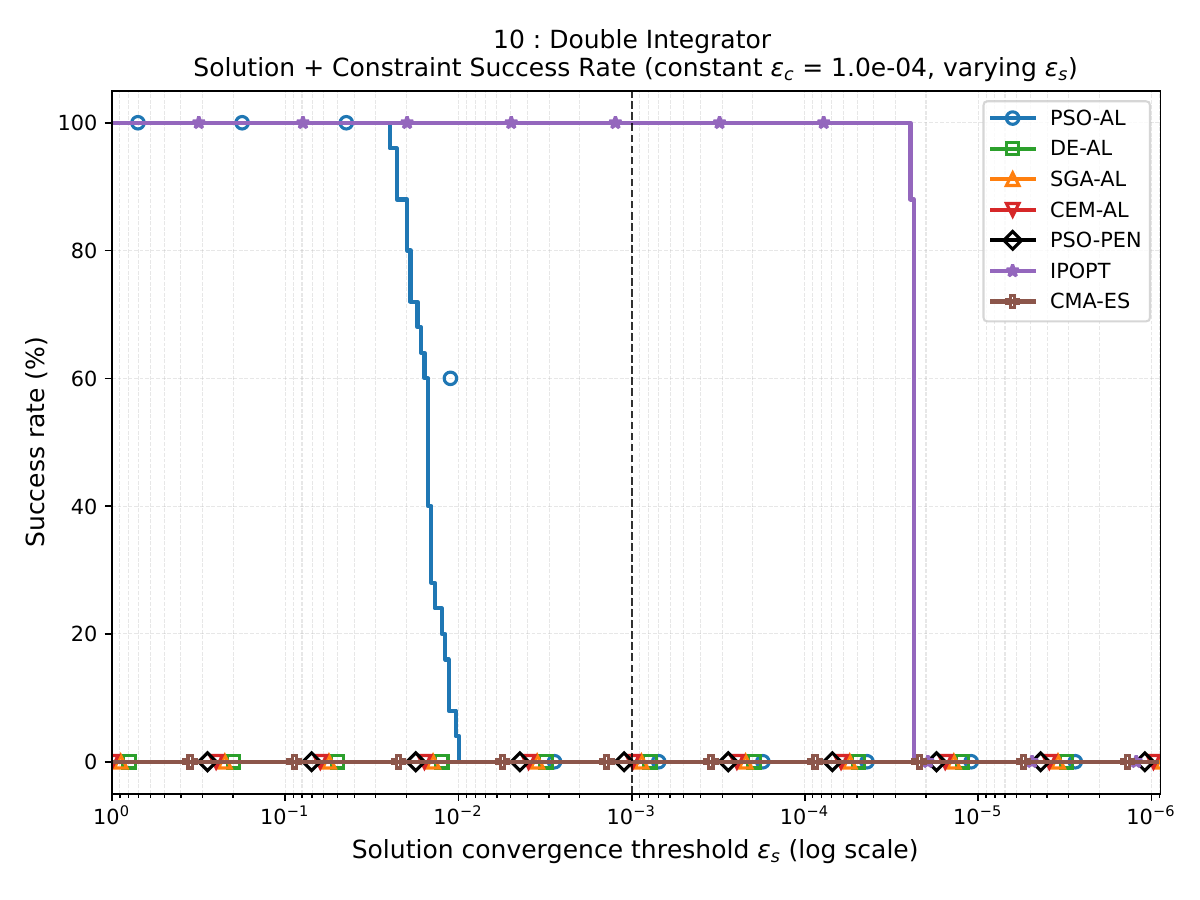}
    \end{subfigure}
    
    \caption{Total comparison success rate curves for Prob.~\ref{prob:rosenbrock},~\ref{prob:quadratic},~\ref{prob:drop_wave},~\ref{prob:griewank},~\ref{prob:double_integrator} (from top to bottom). Left / Right panels show the objective / solution - based success rates, respectively, with fixed constraint violation $\mathcal{V}_{c}(\boldsymbol{x_{f}}) \leq 10^{-4}$. A run is considered successful if the objective / solution - based convergence error is $\lvert f(\boldsymbol{x}_\star) - f(\boldsymbol{x}_f) \rvert \leq \varepsilon_s$ / $\lVert \boldsymbol{x}_\star - \boldsymbol{x}_f \rVert_2 \leq \varepsilon_s$. Each plot presents the success rate curve for each optimizer as a function of the solution threshold $\varepsilon_s$.}
    \label{fig:success_rates_selected_plots}
\end{figure}

Finally, we conduct a more visual total comparison between all tested optimizers, by creating objective and solution - based success rate plots. Fig.~\ref{fig:success_rates_selected_plots} presents the success rates for Prob.~\ref{prob:rosenbrock},~\ref{prob:quadratic},~\ref{prob:drop_wave},~\ref{prob:griewank},~\ref{prob:double_integrator}, while the remaining plots are shown in Appendix~\ref{appendix:success_rate_plots_all} in Fig.~\ref{fig:success_rates_remaining_plots_appendix}. We highlight these particular problems because, taken together, they represent all the major categories of challenging search spaces described in Sec.~\ref{sec:benchmark_problems}, including nonconvex objectives and feasible regions, multimodality, equality-constrained manifolds, and high-dimensionality. The plots summarize optimizer performance across all runs for each COP, showing success rates over a range of solution convergence thresholds $\varepsilon_s$, while maintaining a fixed constraint threshold $\varepsilon_c = 10^{-4}$. Interestingly, despite relying on an objective-based success criterion, the proposed methods attain solution-based convergence in most COPs, indicating that objective convergence typically accompanies convergence of the iterates toward the true solution. The vertical dashed line marks the nominal solution threshold $\varepsilon_s = 10^{-3}$ (the last column of Tab.~\ref{tab:metrics_stats_AL_EAs},~\ref{tab:metrics_stats_diff_optimizers} contains the success rates on this line), considered sufficient for practical convergence, while further improvements beyond $\varepsilon_s \approx 10^{-6}$ are not expected due to the limited precision of the known minima solutions on most benchmark problems (Sec.~\ref{sec:benchmark_problems}).

From the success rate plots, we observe that PSO-AL and DE-AL consistently achieve perfect $100\%$ success rates at the nominal solution threshold $\varepsilon_s = 10^{-3}$ (with the exception of Prob.~\ref{prob:double_integrator}), and remain robust as the tolerance $\varepsilon_s$ decreases. In contrast, SGA-AL and more so CEM-AL exhibit less stable performance, with noticeably lower average success rates, particularly on Prob.~\ref{prob:quadratic} and~\ref{prob:double_integrator}. Regarding the other optimizers, we verify our previous findings that PSO-PEN mainly struggles with high-dimensional problems (e.g. Prob.~\ref{prob:quadratic},~\ref{prob:double_integrator}), just like CMA-ES, while the latter cannot easily converge to the global minima of multimodal problems (such as Prob.~\ref{prob:drop_wave},~\ref{prob:griewank}). IPOPT has generally the worst performance, with success rates below $50\%$ on almost all benchmarks (except for Prob.~\ref{prob:power}, and especially~\ref{prob:double_integrator}, where it outperforms all the other optimizers, since the Double Integrator COP is convex, thus well suited to IPOPT). Interestingly, constraint satisfaction is mostly reliable across all methods and benchmarks (with only PSO-PEN having difficulties satisfying the constraints on Prob.~\ref{prob:quadratic} and~\ref{prob:double_integrator}), indicating that feasibility is not the main limiting factor in these experiments. Instead, the primary differences, as demonstrated in the previous comparisons, lie in solution convergence.

\subsection{Verdict}
Overall, our experimental analysis effectively answers the three experimental questions that we set out to address (Sec.~\ref{sec:experiments}): \textbf{(1)} \methodacronym~is able to solve a wide range of problems with different characteristics while also being able to scale to high-dimensional problems; \textbf{(2)} PSO and DE are the most effective EAs for this AL setting; and \textbf{(3)} \methodacronym~is able to outperform the state-of-the-art IPOPT and CMA-ES optimizers in the complex Prob.~\ref{prob:rosenbrock}-\ref{prob:griewank}, while being competitive, albeit with some limitations, in high-dimensional Prob.~\ref{prob:double_integrator}.

\section{Conclusion} \label{conclusion}
This work demonstrates that the proposed \methodacronym\ framework is a robust and effective approach for constrained, nonlinear, nonconvex, continuous optimization. Across the 10 benchmark COPs, it consistently finds high-quality feasible solutions and, in the AL-EA comparison, PSO-AL and DE-AL emerge as the strongest variants, achieving the smallest errors and near-perfect success rates on almost all problems, with the only notable exception of Prob.~\ref{prob:double_integrator}. By embedding population-based evolutionary algorithms within an Augmented Lagrangian scheme, \methodacronym\ successfully balances feasibility and optimality, even in the presence of multimodality and complex constraint structures.

Among the evolutionary strategies considered, PSO-AL and DE-AL are clearly the most effective unconstrained subproblem solvers, delivering superior solution accuracy, higher success rates, and faster convergence than other evolutionary variants such as SGA-AL and CEM-AL. In particular, these methods typically reach the target precision in a matter of milliseconds, while also reducing constraint violations to near machine precision, despite relying exclusively on derivative-free optimization. The framework remains effective in higher-dimensional settings (Prob.~\ref{prob:double_integrator}), where evolutionary methods alone, without gradient information, are known to fail \cite{chatzilygeroudis2023fast}, nevertheless PSO-AL, functioning inside an AL framework, preserves strong performance. At the same time, the results confirm that reaching machine-precision accuracy without gradient information is especially difficult in high-dimensional problems.

Overall, our findings highlight the advantages of evolutionary methods in navigating nonlinearities, multimodal landscapes, and constrained search spaces, where gradient-based approaches often converge to suboptimal stationary points. The comparison against other optimizers further shows that \methodacronym\ substantially outperforms pure penalty methods like PSO-PEN, which fails on the harder high-dimensional problems, and also outperforms IPOPT and AL-CMA-ES on most benchmarks. IPOPT is fast but unreliable at finding the global minimum, while CMA-ES is more competitive yet still weaker than \methodacronym\ on the multimodal and higher-dimensional problems. Our study not only validates the robustness of \methodacronym\, but also highlights its broader potential for real-world constrained optimization problems. 

Promising directions for future work include exploiting \emph{GPU-based parallelization to improve computational efficiency} \cite{chalumeau2024qdax,lim2023accelerated,hu2019taichi,jakob2022dr} and \emph{combining evolutionary methods with gradient-based methods} \cite{plevris2011hybrid,han2017adaptive,chatzilygeroudis2023fast} to further accelerate convergence and enhance solution accuracy in high-dimensional settings.

\section*{Acknowledgments}
This work was supported by the Hellenic Foundation for Research and Innovation (H.F.R.I.) under the “3rd Call for H.F.R.I. Research Projects to support Post-Doctoral Researchers” (Project Acronym: NOSALRO, Project Number: 7541). The authors have no competing interests.

\bibliographystyle{cas-model2-names}
\bibliography{bibliography}

\appendix

\section{Benchmark Problems Set} \label{appendix:benchmark_problems}

\begin{problem}[: Rosenbrock Function (2D) with Cubic \& Linear Inequality Constraints]
\label{prob:rosenbrock}
\small
\[ \min_{x,y}\ (1 - x)^2 + 100(y - x^2)^2 \]
\[ \text{s.t.:} \quad  g_{1}: (x - 1)^3 - y + 1 \leq 0, \quad g_{2}: x + y - 2 \leq 0, \]
\[ x \in [-1.5, 1.5], \quad y \in [-0.5, 2.5] \]
Global minimum: $f(1, 1) = 0$. \\
Optimal multipliers: $(\mu^{g_{1}}_{\star}, \mu^{g_{2}}_{\star}) = (0, 0)$.
\end{problem}

\begin{problem}[: Quadratic Objective with Quadratic \& Linear Inequality Constraints]
\label{prob:quadratic}
\small
\[ \min_{\boldsymbol{x}}\ -25(x_1 - 2)^2 - (x_2 - 2)^2 - (x_3 - 1)^2 - (x_4 - 4)^2 - (x_5 - 1)^2 - (x_6 - 4)^2 \]
\[ \text{s.t.:} \quad g_{1}: (x_3 - 3)^2 + x_4 \geq 4, \quad g_{2}: (x_5 - 3)^2 + x_6 \geq 4, \]
\[ g_{3}: x_1 - 3x_2 \leq 2, \quad g_{4}: -x_1 + x_2 \leq 2, \]
\[ g_{5}: x_1 + x_2 \leq 6, \quad g_{6}: x_1 + x_2 \geq 2, \]
\[ x_1, x_2 \in [0,100], x_3 \in [1,5], x_4 \in [0,6], x_5 \in [1,5], x_6 \in [0,10] \]
Global minimum: $f(5, 1, 5, 0, 5, 10) = -310$. \\
Optimal multipliers: $(\mu^{g_{1}}_{\star}, \mu^{g_{2}}_{\star}, \mu^{g_{3}}_{\star}, \mu^{g_{4}}_{\star}, \mu^{g_{5}}_{\star}, \mu^{g_{6}}_{\star}) = (0, 0, 38, 0, 112, 0)$.
\end{problem}

\begin{problem}[: Power-Terms Objective with Linear Equality \& Inequality Constraints]
\label{prob:power}
\small
\[ \min_{\boldsymbol{x}}\ x_1^{0.6} + x_2^{0.6} - 6x_1 - 4x_3 + 3x_4 \]
\[ \text{s.t.: } \ h_{1}: x_2 - 3x_1 - 3x_3 = 0, \quad g_{1}: x_1 + 2x_3 \leq 4, \quad g_{2}: x_2 + 2x_4 \leq 4, \]
\[ x_1 \in [0,3], \quad x_2 \in [0,100], \quad x_3 \in [0,100], \quad x_4 \in [0,1] \]
Global minimum: $f(\tfrac{4}{3}, 4, 0, 0) \approx -4.51420$. \\
Optimal multipliers: $(\lambda^{h_{1}}_{\star}, \mu^{g_{1}}_{\star}, \mu^{g_{2}}_{\star}) = (-1.82174, 0, 1.47713)$.
\end{problem}

\begin{problem}[: Linear Objective with Polynomial Inequality Constraints]
\label{prob:linear_polynomial}
\small
\[ \min_{x,y}\ -x - y \]
\[ \text{s.t.: } \ g_{1}: y \leq 2x^4 - 8x^3 + 8x^2 + 2,\ g_{2}: y \leq 4x^4 - 32x^3 + 88x^2 - 96x + 36, \]
\[ x \in [0, 3], \quad y \in [0, 4] \]
Global minimum: $f(2.32952, 3.17849) \approx -5.50801$. \\
Optimal multipliers: $(\mu^{g_{1}}_{\star}, \mu^{g_{2}}_{\star}) \approx (0.28760, 0.71240)$.
\end{problem}

\begin{problem}[: Mishra's Bird Function with Quadratic Inequality Constraint]
\label{prob:mishra_bird}
\small
\[ \min_{x,y}\ \sin(y)e^{(1-\cos x)^2} + \cos(x)e^{(1-\sin y)^2} + (x-y)^2 \]
\[ \text{s.t.: } \quad g_{1}: (x+5)^2+(y+5)^2 \leq 25, \]
\[ x \in [-10, 0], \quad y \in [-6.5, 0] \]
Global minimum: $f(-3.13025, -1.58214) \approx -106.76454$. \\
Optimal multipliers: $\mu^{g_{1}}_{\star} = 0$.
\end{problem}

\begin{problem}[: Constrained Gomez and Levy Function with Trigonometric Inequality Constraint]
\label{prob:gomez_levy}
\small
\[ \min_{x,y}\ 4x^2 - 2.1x^4 + \tfrac{1}{3}x^6 + xy - 4y^2 + 4y^4 \]
\[ \text{s.t.: } \quad g_{1}: -\sin(4\pi x) + 2\sin^2(2\pi y) \leq 1.5, \]
\[ x \in [-1, 0.75], \quad y \in [-1, 1] \]
Global minimum: $f(0.08984, -0.71266) \approx -1.03163$. \\
Optimal multipliers: $\mu^{g_{1}}_{\star} = 0$.
\end{problem}

\begin{problem}[: Drop-Wave Function with Quadratic Equality Constraint]
\label{prob:drop_wave}
\small
\[ \min_{x,y}\ -\frac{1+\cos(12\sqrt{x^2+y^2})}{0.5(x^2+y^2)+2} \]
\[ \text{s.t.: } \quad h_{1}: (x-3)^2+(y-3)^2 = 4, \]
\[ x \in [-10, 10], \quad y \in [-10, 10] \]
Global minimum: $f(2.36699, 1.10282) = f(1.10282, 2.36699) \approx -0.36913$. \\
Optimal multipliers: $\lambda^{h_{1}}_{\star} = 0$.
\end{problem}

\begin{problem}[: Egg-Holder Function with Linear Inequality Constraints]
\label{prob:egg_holder}
\small
\[ \min_{x,y}\ -(y+47)\sin\left(\sqrt{\left|\tfrac{x}{2}+y+47\right|}\right) - x\sin\left(\sqrt{|x-y-47|}\right) \]
\[ \text{s.t.: } \quad g_{1}: 2x + y \geq 700, \quad g_{2}: x - 3y \leq -300, \]
\[ x \in [-512, 512], \quad y \in [-512, 512] \]
Global minimum: $f(512, 404.23180) \approx -959.64066$. \\
Optimal multipliers: $(\mu^{g_{1}}_{\star}, \mu^{g_{2}}_{\star}) = (0, 0)$.
\end{problem}


\begin{problem}[: Griewank Function (2D) with Quadratic Spherical Inequality Constraint]
\label{prob:griewank}
\small
\[ \min_{x,y}\ 1 + \frac{x^2 + y^2}{4000} - \cos\left(x\right) \cos\left(\frac{y}{\sqrt{2}}\right) \]
\[ \text{s.t.: } \quad g_{1}: (x - 10)^2 + (y - 10)^2 \leq 200, \]
\[ x \in [-100, 100], \quad y \in [-100, 100] \]
Global minimum: $f(0, 0) = 0$. \\
Optimal multipliers: $\mu^{g_{1}}_{\star} = 0$.
\end{problem}


\begin{problem}[: 1D Double Integrator Optimal Control Problem (box mass accelerating on a line, under variable controlled force)]
\label{prob:double_integrator}
\small
\[ \min_{\boldsymbol{\mathbf{x}}_{1:K-1},\; \boldsymbol{u}_{0:K-1}}
  \mathcal{J} = \tfrac{1}{2}\sum_{k=0}^{K-1} (\boldsymbol{\mathbf{x}}_k - \boldsymbol{\mathbf{x}}_{target})^\top(\boldsymbol{\mathbf{x}}_k - \boldsymbol{\mathbf{x}}_{target}) + \tfrac{1}{2}\sum_{k=0}^{K-1} 0.1 u_k^2, \]
\[\text{s.t.: } \quad \boldsymbol{\mathbf{x}}_k = \begin{bmatrix} x_k \\ v_k \end{bmatrix}, \quad \boldsymbol{x}_{k+1} = \boldsymbol{x}_k + dt \begin{bmatrix} v_k \\ \tfrac{1}{m} u_k \end{bmatrix}, \quad k = 0,\ldots,K-1,\]
\[\boldsymbol{\mathbf{x}}_{K} = \boldsymbol{\mathbf{x}}_{target}, \quad x_k,\; v_k,\; u_k \in [-10,10].\]
{
\centering
\includegraphics[width=\linewidth]{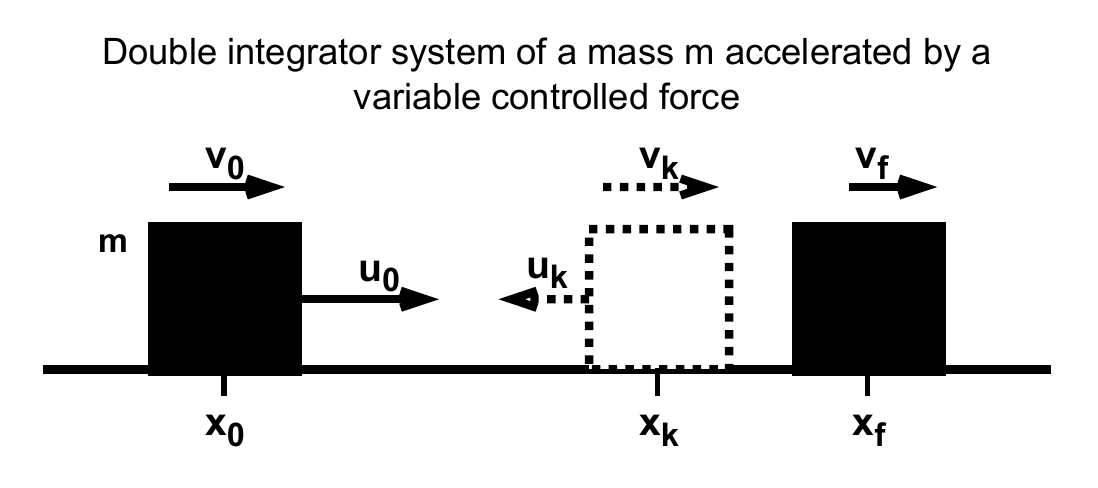}
\par
}
\noindent Here, $\boldsymbol{\mathbf{x}}_k = [x_k, v_k]^\top \in \mathbb{R}^2$ is the state and $\boldsymbol{u}_k \in \mathbb{R}$ is the control input at node k, with known initial state $\boldsymbol{\mathbf{x}}_0 = \boldsymbol{\mathbf{x}}_{start}$ and fixed terminal state $\boldsymbol{\mathbf{x}}_K = \boldsymbol{\mathbf{x}}_{target}$. We set $K=21$ nodes, so we get $61$ decision variables, and $42$ equality constraints. This is a finite LQR problem (with bound constraints), it has one unique global minimizer, and we can compute it via iterative methods like multiple shooting DDP \cite{wensing2023optimization,jallet2025proxddp}. \\
Global minimum: $f(\boldsymbol{x}_{\star}) \approx 23.87240$ ($\boldsymbol{\mathbf{x}}_{start}=[0, 0]^\top$, $\boldsymbol{\mathbf{x}}_{target}=[1, 2]^\top$, the full optimal $\boldsymbol{x}_{\star} \in \mathbb{R}^{61}$ is omitted for brevity) \\
$\boldsymbol{x}_\star \approx \begin{bmatrix} 0,\ 0.18254,\ 0.00913,\ \ldots,\ 2.55968,\ 3.00173,\ 3.52111 \end{bmatrix}^\top$. \\
Optimal multipliers: the full optimal $\boldsymbol{\lambda}_{\star} \in \mathbb{R}^{42}$ is similarly omitted \\
$\boldsymbol{\lambda}_\star \approx \begin{bmatrix} 11.33964,\ -7.30164,\ \ldots,\ 24.29632,\ -7.04222 \end{bmatrix}^\top$.
\end{problem}

\section{Regularity Conditions for the Benchmark Constrained Optimization Problems} \label{appendix:regularity_conditions}

We can establish that every COP in our benchmark set satisfies at least one standard regularity condition, also referred to as a constraint qualification. Specifically, LICQ (Linear Independence Constraint Qualification) holds for all problems, with the exception of the Quadratic problem~\ref{prob:quadratic}, for which we instead verify MFCQ (Mangasarian-Fromovitz Constraint Qualification) in the case of inequality constraints only. For illustration, we provide here the LICQ verification for the Rosenbrock problem~\ref{prob:rosenbrock} and the MFCQ verfication for the Quadratic problem~\ref{prob:quadratic}.

\begin{definition}[LICQ]
We consider the constrained optimization problem \eqref{eq:constrained_optimization_problem}. We let $\mathcal{A_I}(\boldsymbol{x}_\star) = \{i : g_i(\boldsymbol{x}_\star)=0 \}$ denote the active inequality set at the minimizer $\boldsymbol{x}_\star$. LICQ holds at $\boldsymbol{x}_\star$ if the set of vectors below is linearly independent:
\[
    \bigl\{ \nabla_{\boldsymbol{x}} g_i(\boldsymbol{x}_\star) : i \in \mathcal{I}(\boldsymbol{x}_\star) \bigr\}
    \cup
    \bigl\{ \nabla_{\boldsymbol{x}} h_j(\boldsymbol{x}_\star) : j=1,\dots,l \bigr\}
\]
\end{definition}

\begin{definition}[MFCQ for inequality constraints only]
We again consider problem \eqref{eq:constrained_optimization_problem}, but without equality constraints. We let $\mathcal{A_I}(\boldsymbol{x}_\star) = \{i : g_i(\boldsymbol{x}_\star)=0 \}$ be the active inequality set at the minimizer $\boldsymbol{x}_\star$.  
MFCQ holds at $\boldsymbol{x}_\star$ if there exists $\boldsymbol{d} \in \mathbb{R}^n$ such that the directional derivatives below are negative:
\[
    D_{\boldsymbol{d}}g_i(\boldsymbol{x}_\star) = \nabla^\top_{\boldsymbol{x}} g_i(\boldsymbol{x}_\star) \cdot \boldsymbol{d} < 0
    \qquad \text{for all } i \in \mathcal{I}(\boldsymbol{x}_\star)
\]
\end{definition}



\begin{proof} \textbf{\textit{LICQ check for Rosenbrock problem~\ref{prob:rosenbrock}}}

We write the inequality constraints in the standard form $g_i \leq 0$:
\[
    g_1(x,y)=(x-1)^3-y+1\le 0, \quad g_2(x,y)=x+y-2\le 0
\]
At the global minimizer $\boldsymbol{x}_{\star} = \begin{bmatrix} 1 & 1 \end{bmatrix}^\top$ (interior point of the search domain, meaning that the box bounds for the decision variables are inactive at $\boldsymbol{x}_{\star}$), both inequality constraints are active ($g_1(\boldsymbol{x}_{\star})=g_2(\boldsymbol{x}_{\star})=0$). Their gradients at $\boldsymbol{x}_{\star}$ are:
\[
    \nabla g_1(\boldsymbol{x}_{\star}) =
    \begin{bmatrix} 3(x-1)^2 \\ -1
    \end{bmatrix}_{\boldsymbol{x}_{\star}}
    =
    \begin{bmatrix}
    0 \\ -1
    \end{bmatrix},
    \quad
    \nabla g_2(\boldsymbol{x}_{\star})=
    \begin{bmatrix}
    1 \\ 1
    \end{bmatrix}
\]
The equality relation $a \cdot \nabla_{\boldsymbol{x}_{\star}} g_1(\boldsymbol{x}_{\star}) + b \cdot \nabla_{\boldsymbol{x}_{\star}} g_2(\boldsymbol{x}_{\star})=0$ is true only for $a=b=0$. Hence the active gradients are linearly independent, so LICQ holds at the minimizer $\boldsymbol{x}_{\star}$.
\end{proof}

\begin{proof} \textbf{\textit{MFCQ check for Quadratic problem~\ref{prob:quadratic}}}

The global minimizer is at $\boldsymbol{x}_\star = \begin{bmatrix} 5 & 1 & 5 & 0 & 5 & 10 \end{bmatrix}^\top$, where the following constraints $g_{a,1},g_{a,2},\ldots,g_{a,7}$ (written in standard form $g_{a,i} \leq 0$, along with their gradients at $\boldsymbol{x}_\star$) are active, while the box bounds on $x_1$ and $x_2$ are inactive:

\vspace{-10pt}
\[
    \begin{aligned}
        g_{a,1}(\boldsymbol{x}) &= 4-(x_3-3)^2-x_4, &\nabla_{\boldsymbol{x}} g_{a,1} (\boldsymbol{x}_{\star}) &= \begin{bmatrix} 0, 0, -4, -1, 0, 0 \end{bmatrix}^\top\\
        g_{a,2}(\boldsymbol{x}) &= x_1-3x_2-2, &\nabla_{\boldsymbol{x}} g_{a,2} (\boldsymbol{x}_{\star}) &= \begin{bmatrix} 1, -3, 0, 0, 0, 0 \end{bmatrix}^\top\\
        g_{a,3}(\boldsymbol{x}) &= x_1+x_2-6, &\nabla_{\boldsymbol{x}} g_{a,3} (\boldsymbol{x}_{\star}) &= \begin{bmatrix} 1, 1, 0, 0, 0, 0 \end{bmatrix}^\top\\
        g_{a,4}(\boldsymbol{x}) &= x_3-5, &\nabla_{\boldsymbol{x}} g_{a,4} (\boldsymbol{x}_{\star}) &= \begin{bmatrix} 0, 0, 1, 0, 0, 0 \end{bmatrix}^\top\\
        g_{a,5}(\boldsymbol{x}) &= -x_4, &\nabla_{\boldsymbol{x}} g_{a,5} (\boldsymbol{x}_{\star}) &= \begin{bmatrix} 0, 0, 0, -1, 0, 0 \end{bmatrix}^\top\\
        g_{a,6}(\boldsymbol{x}) &= x_5-5, &\nabla_{\boldsymbol{x}} g_{a,6} (\boldsymbol{x}_{\star}) &= \begin{bmatrix} 0, 0, 0, 0, 1, 0 \end{bmatrix}^\top\\
        g_{a,7}(\boldsymbol{x}) &= x_6-10, &\nabla_{\boldsymbol{x}} g_{a,7} (\boldsymbol{x}_{\star}) &= \begin{bmatrix} 0, 0, 0, 0, 0, 1 \end{bmatrix}^\top
    \end{aligned}
\]

\noindent We notice that the active gradients are linearly dependent:
\[
    \nabla_{\boldsymbol{x}} g_{a,1} (\boldsymbol{x}_\star) = -4\,\nabla_{\boldsymbol{x}} g_{a,4} (\boldsymbol{x}_\star) + \nabla_{\boldsymbol{x}} g_{a,5} (\boldsymbol{x}_\star)
\]
Therefore, LICQ does not hold at $\boldsymbol{x}_\star$.

However, we can show that the regularity condition MFCQ is satisfied. We construct the direction:
\[
    \boldsymbol{d} = \begin{bmatrix} -1, 0, -1, 5, -1, -1 \end{bmatrix}^\top
\]
Then, for every active inequality we can verify that:
\[
    \nabla^\top_{\boldsymbol{x}} g_{a,k}(\boldsymbol{x}_\star) \cdot \boldsymbol{d} < 0, \quad k=1,\ldots,7
\]
Hence, there exists a direction that strictly decreases all active inequalities, so MFCQ holds at the minimizer $\boldsymbol{x}_{\star}$.
\end{proof}

\section{Success Rate Plots} \label{appendix:success_rate_plots_all}
In Fig.~\ref{fig:success_rates_remaining_plots_appendix}, we present the remaining set of success rate plots that were not shown in Sec.~\ref{sec:results}, for Prob.~\ref{prob:power},~\ref{prob:linear_polynomial},~\ref{prob:mishra_bird},~\ref{prob:gomez_levy},~\ref{prob:egg_holder}.

\begin{figure}
    \centering
    
    \begin{subfigure}{.49\columnwidth}
        \centering
        \includegraphics[width=\linewidth]{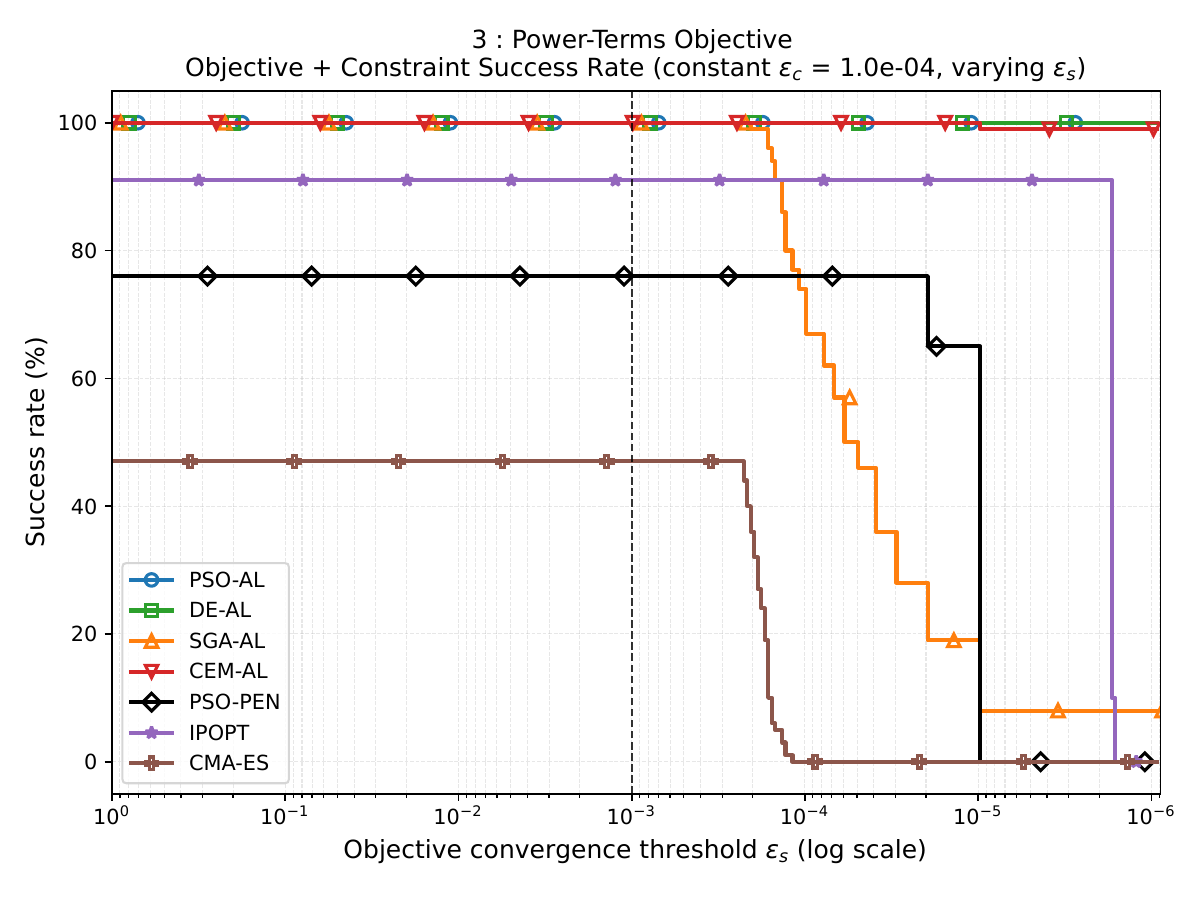}
    \end{subfigure}
    \begin{subfigure}{.49\columnwidth}
        \centering
        \includegraphics[width=\linewidth]{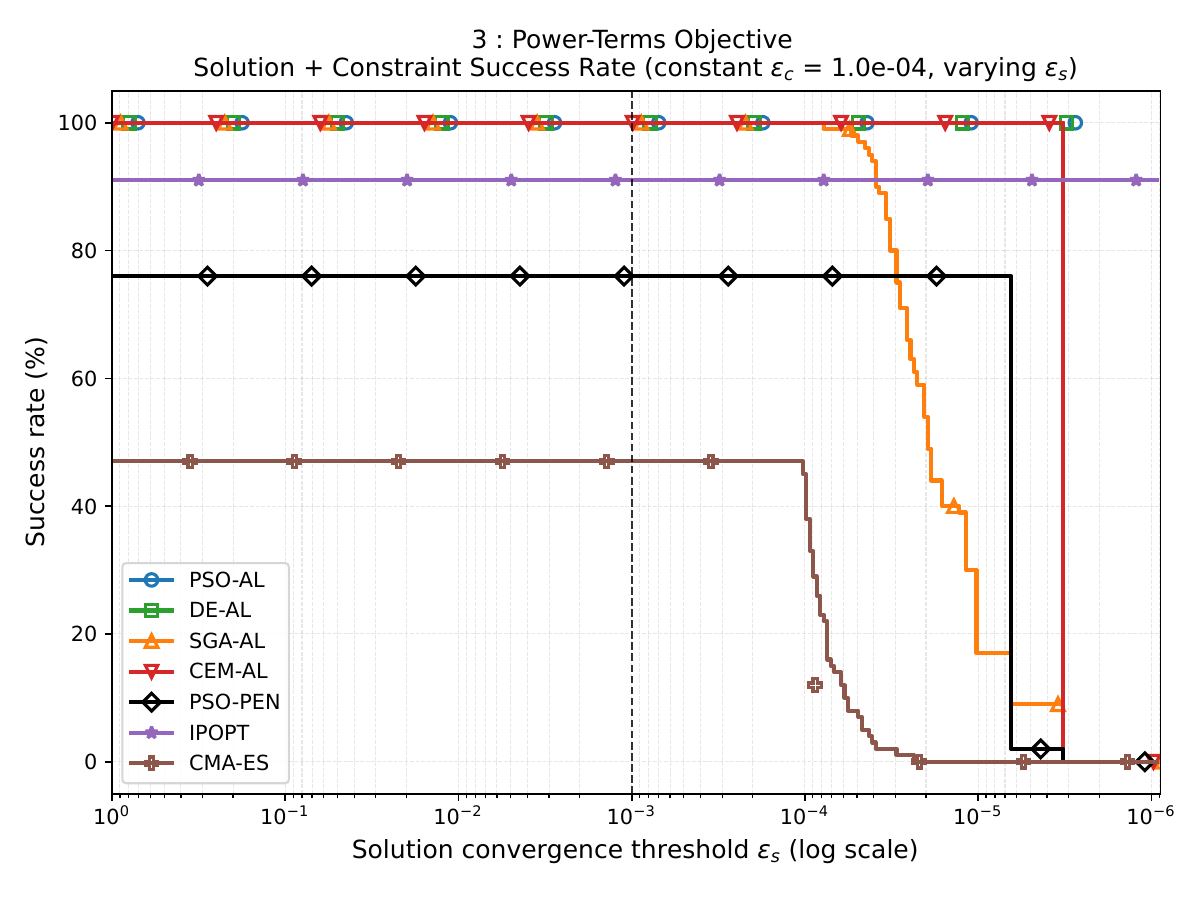}
    \end{subfigure}
    
    \begin{subfigure}{.49\columnwidth}
        \centering
        \includegraphics[width=\linewidth]{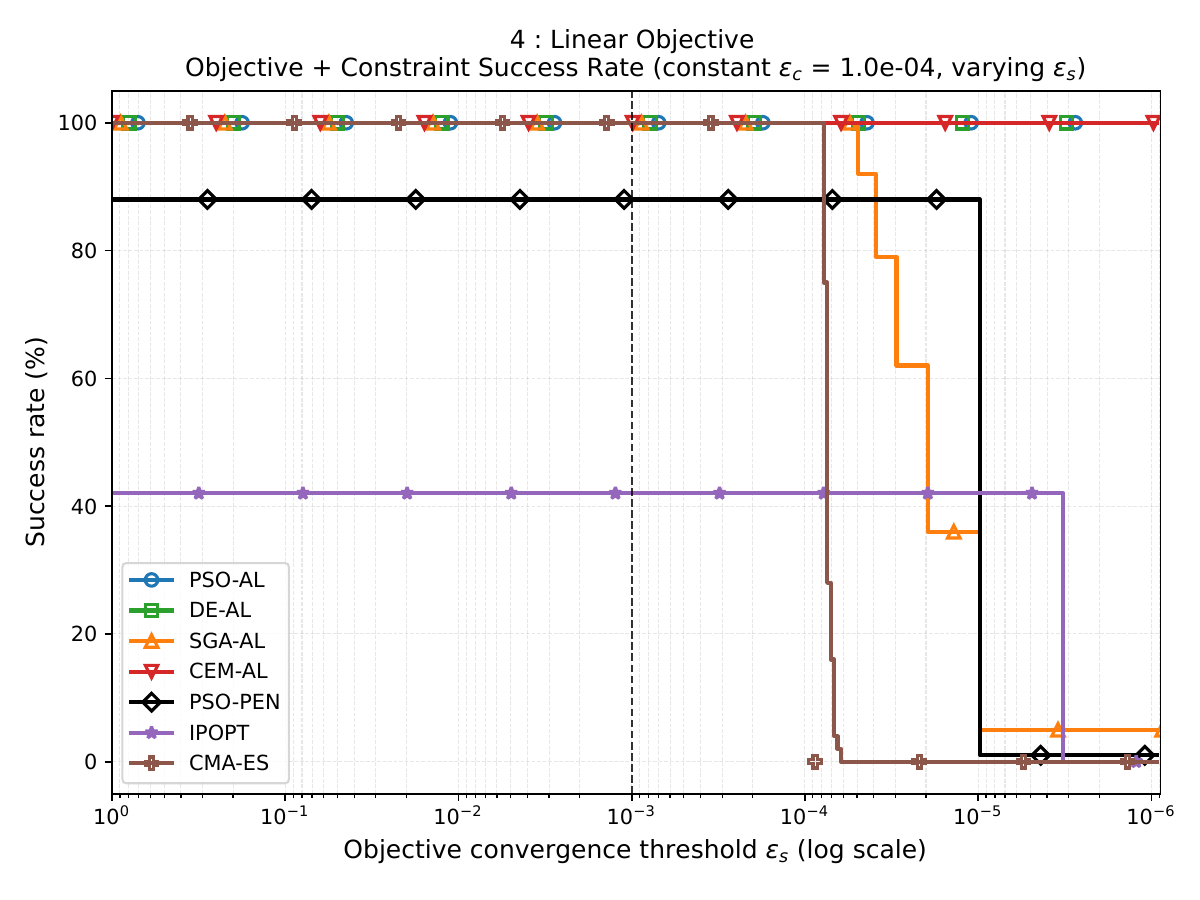}
    \end{subfigure}
    \begin{subfigure}{.49\columnwidth}
        \centering
        \includegraphics[width=\linewidth]{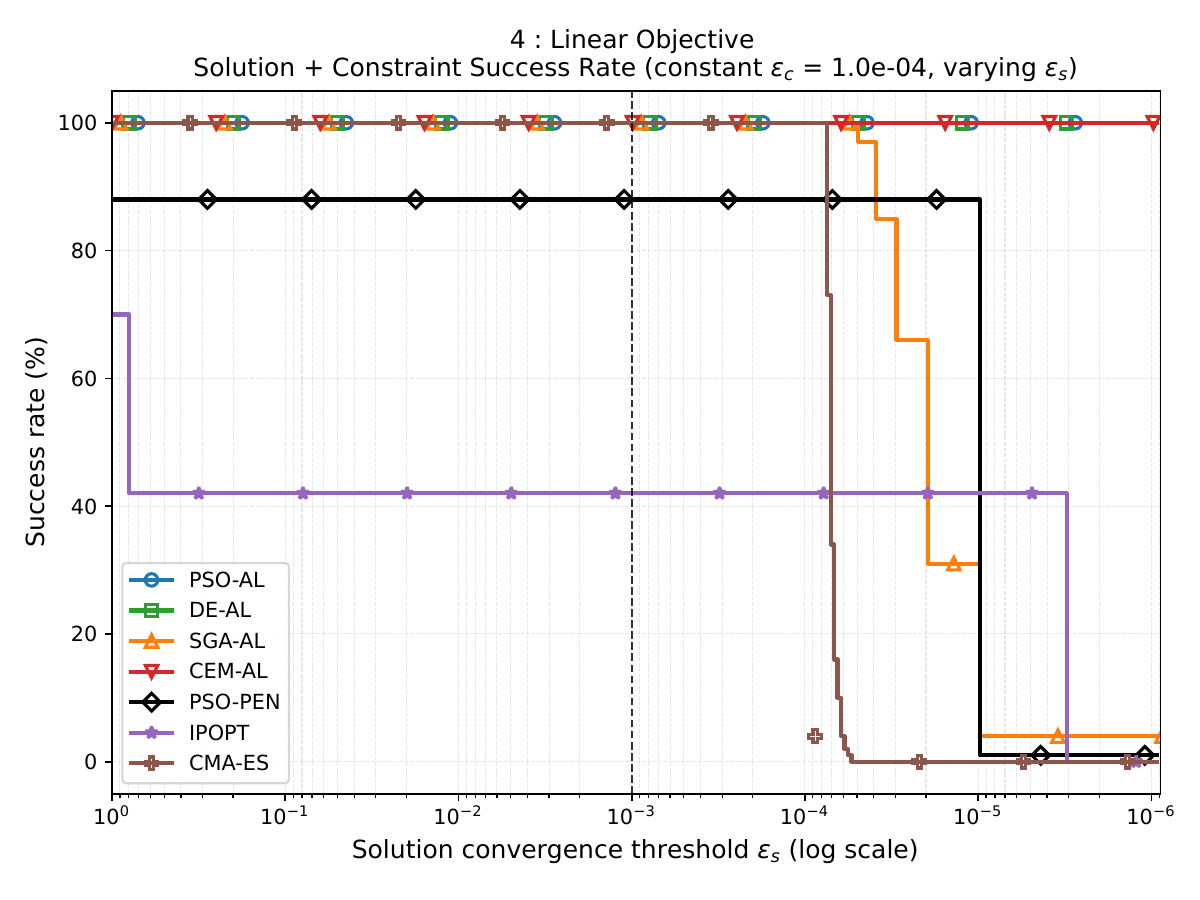}
    \end{subfigure}
    
    \begin{subfigure}{.49\columnwidth}
        \centering
        \includegraphics[width=\linewidth]{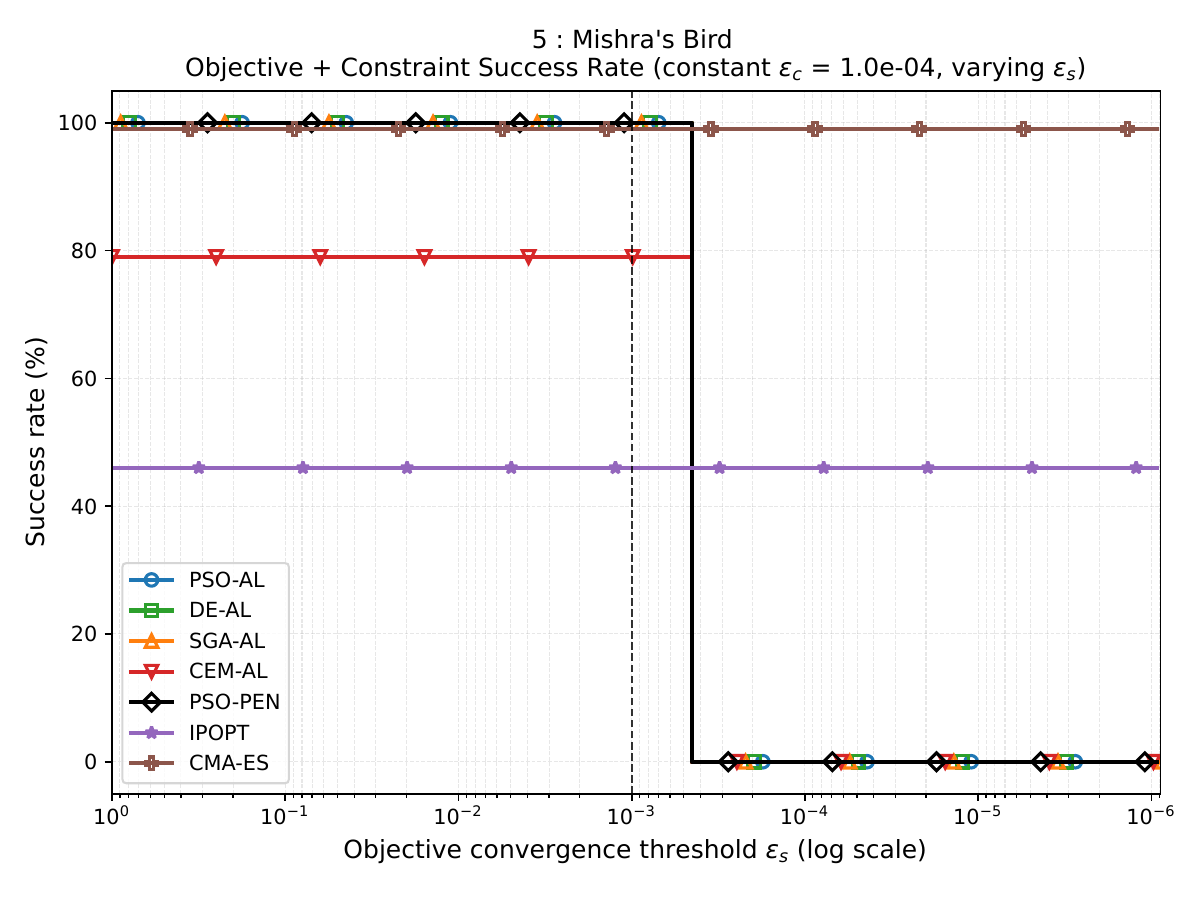}
    \end{subfigure}
    \begin{subfigure}{.49\columnwidth}
        \centering
        \includegraphics[width=\linewidth]{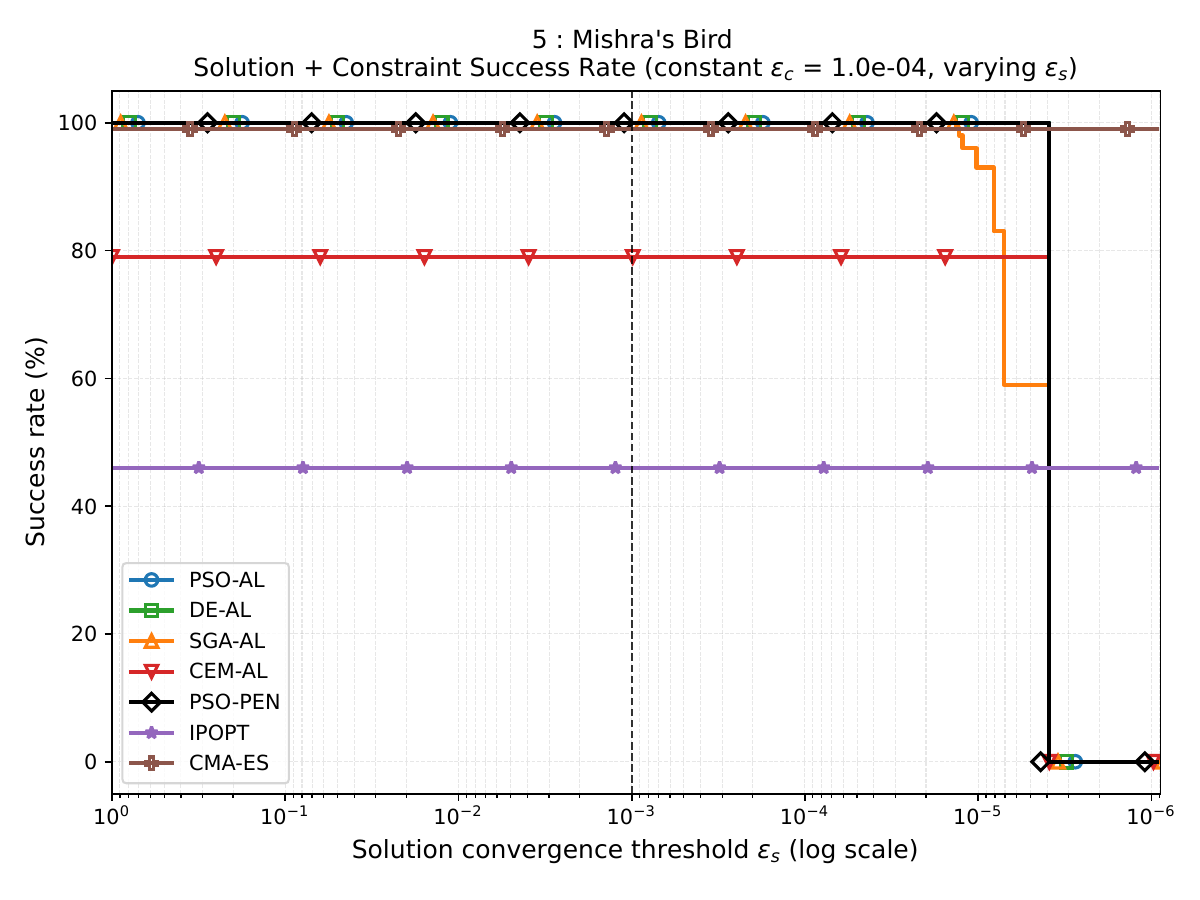}
    \end{subfigure}
    
    \begin{subfigure}{.49\columnwidth}
        \centering
        \includegraphics[width=\linewidth]{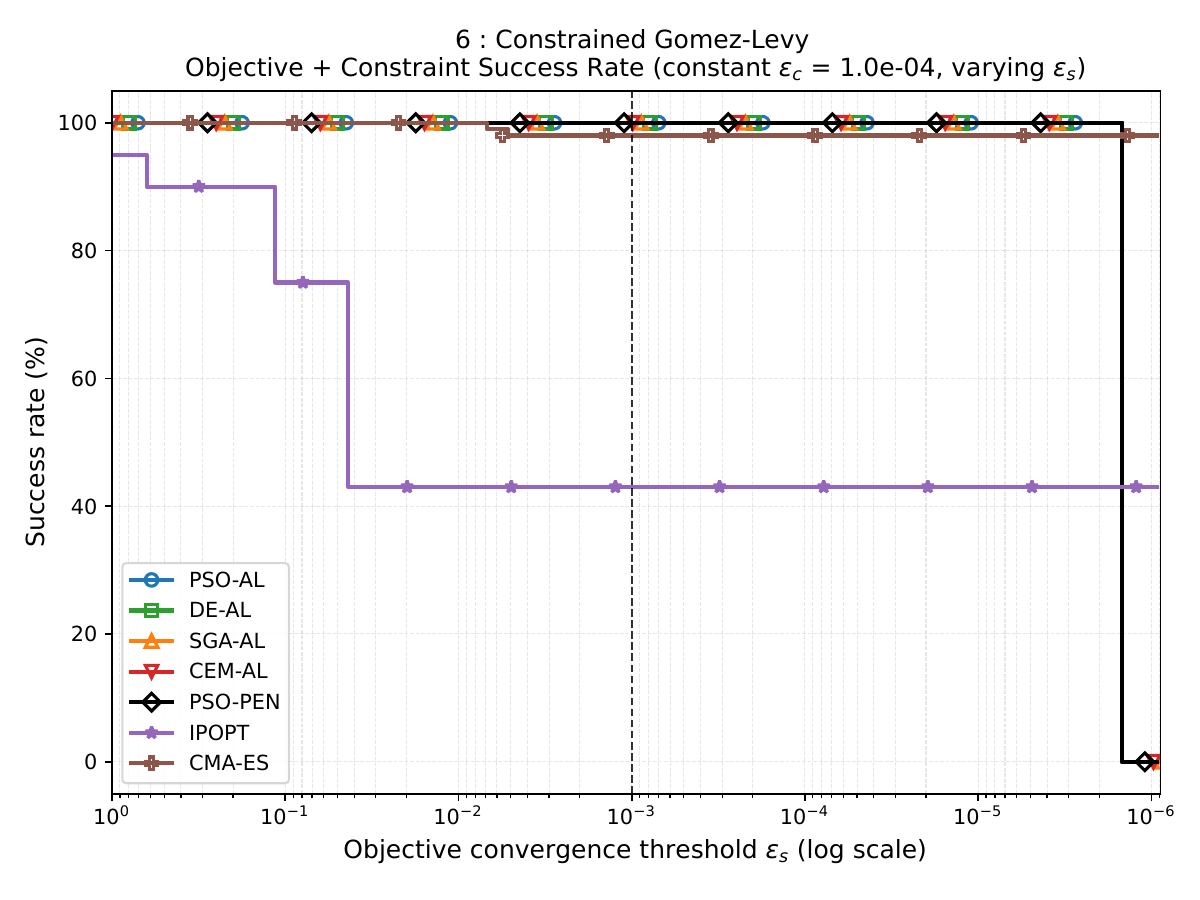}
    \end{subfigure}
    \begin{subfigure}{.49\columnwidth}
        \centering
        \includegraphics[width=\linewidth]{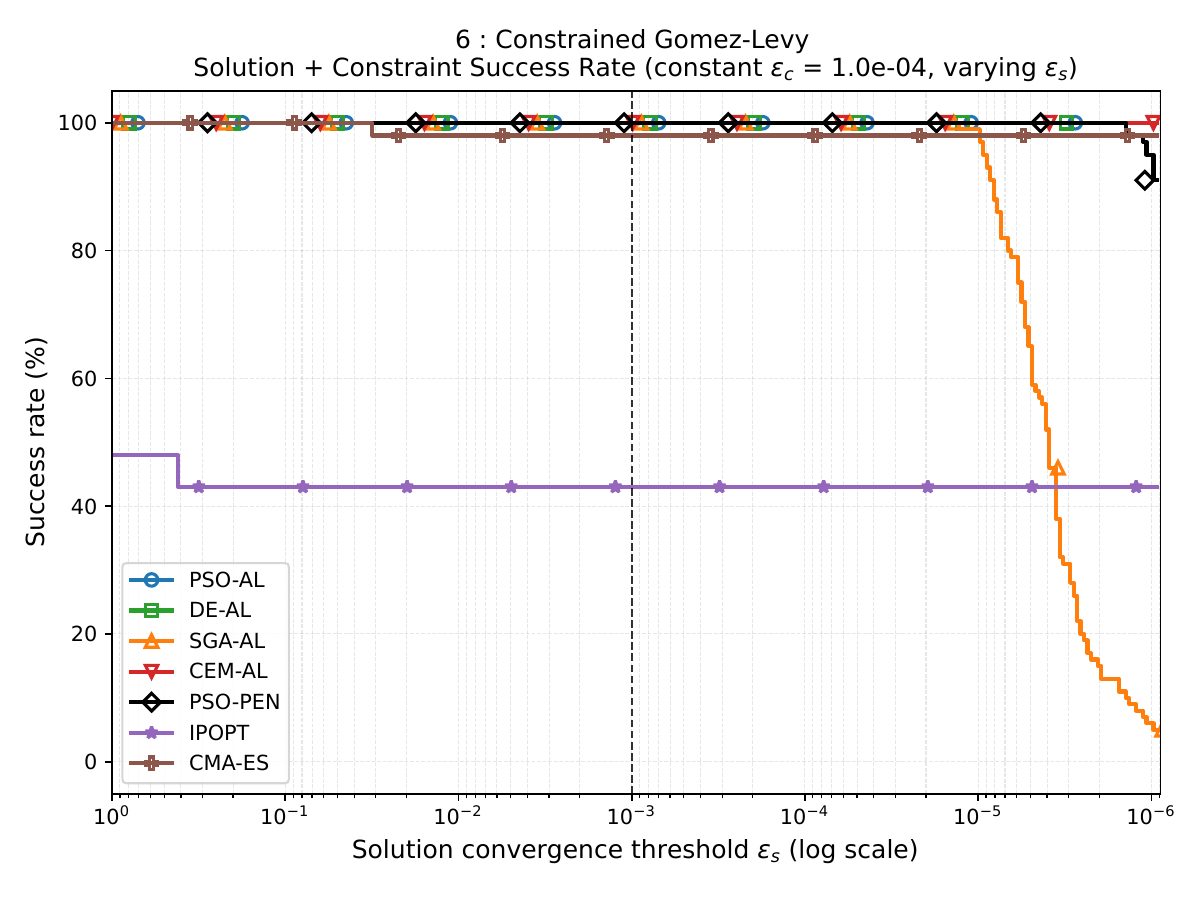}
    \end{subfigure}
    
    \begin{subfigure}{.49\columnwidth}
        \centering
        \includegraphics[width=\linewidth]{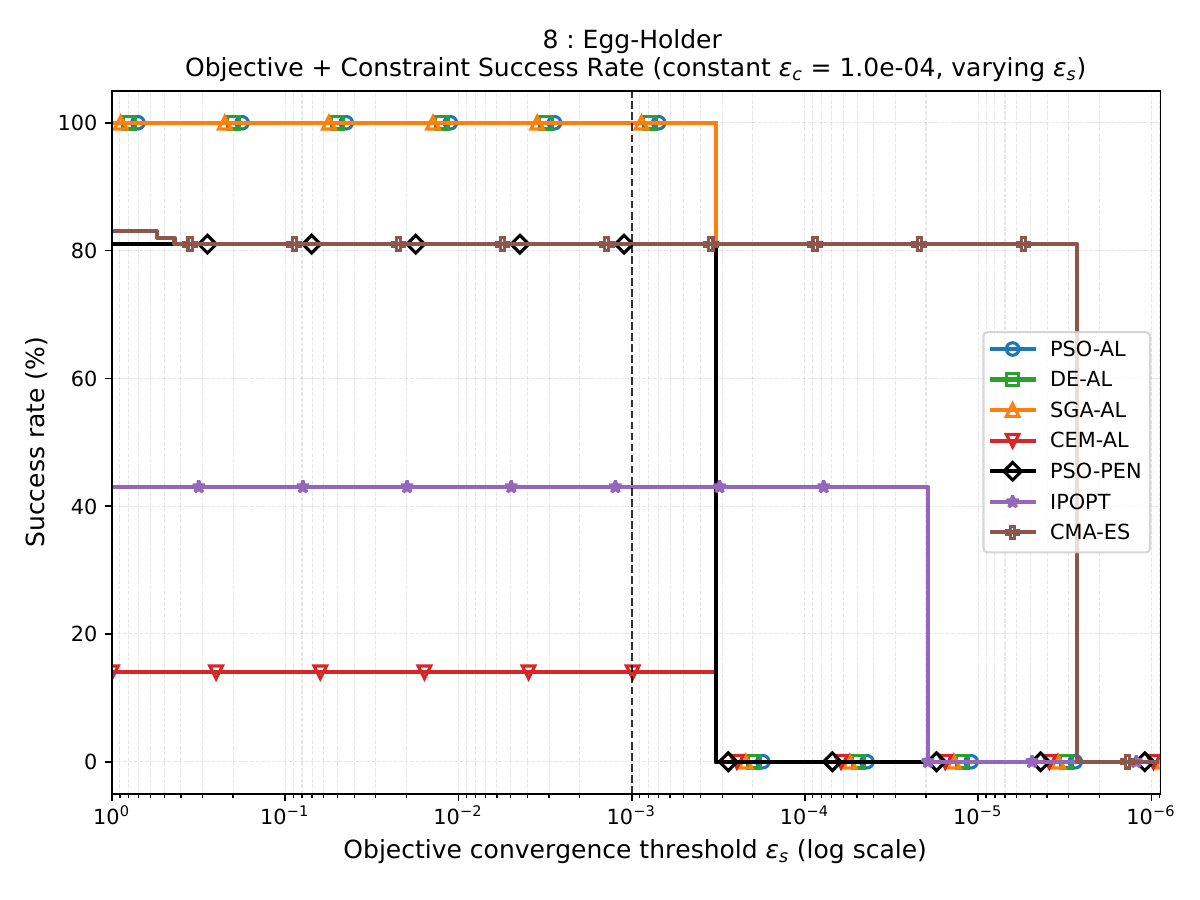}
    \end{subfigure}
    \begin{subfigure}{.49\columnwidth}
        \centering
        \includegraphics[width=\linewidth]{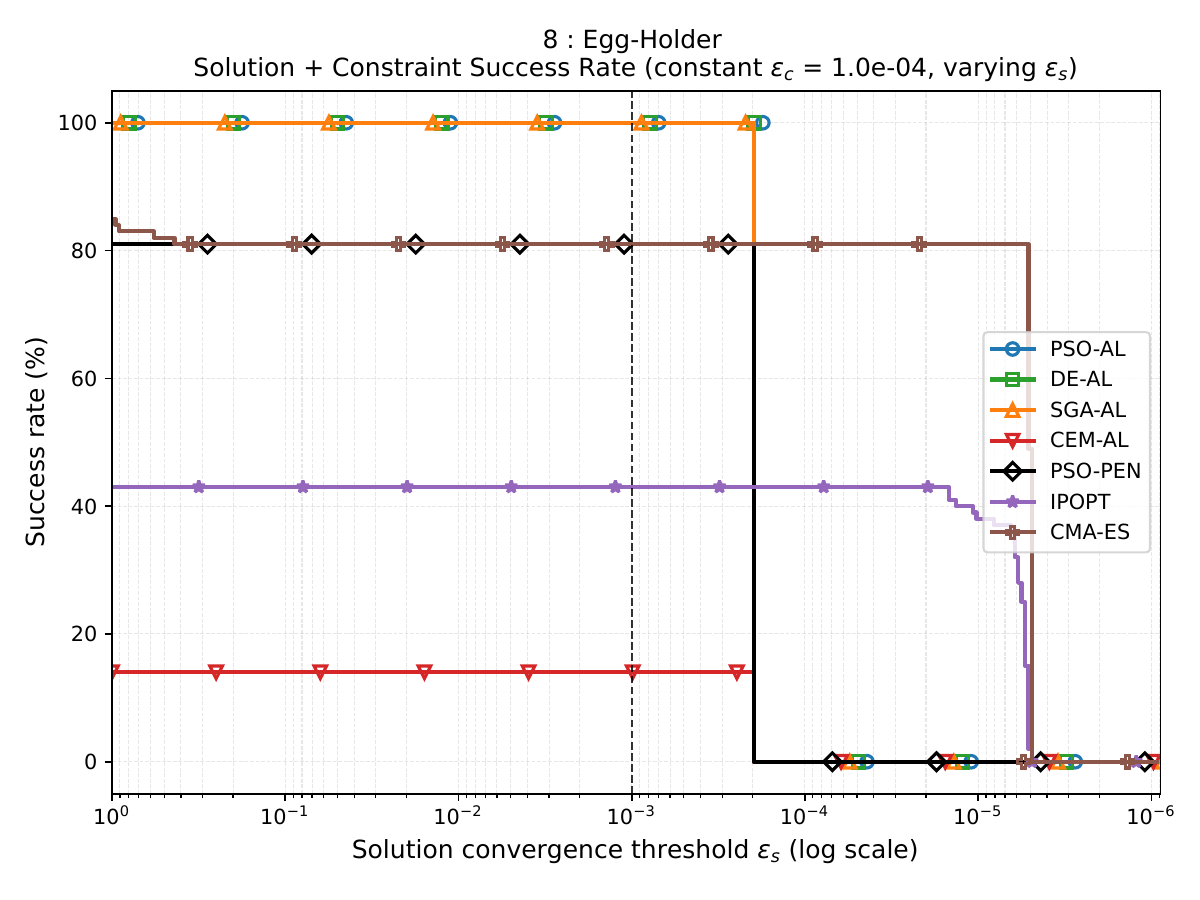}
    \end{subfigure}
    \caption{Total comparison success rate curves for the remaining Prob.~\ref{prob:power},~\ref{prob:linear_polynomial},~\ref{prob:mishra_bird},~\ref{prob:gomez_levy},~\ref{prob:egg_holder} (from top to bottom). Left / Right panels show the objective / solution - based success rates, respectively, with fixed constraint violation $\mathcal{V}_{c}(\boldsymbol{x_{f}}) \leq 10^{-4}$. A run is considered successful if the objective / solution - based convergence error is $\lvert f(\boldsymbol{x}_\star) - f(\boldsymbol{x}_f) \rvert \leq \varepsilon_s$ / $\lVert \boldsymbol{x}_\star - \boldsymbol{x}_f \rVert_2 \leq \varepsilon_s$. Each plot presents the success rate curve for each optimizer as a function of the solution threshold $\varepsilon_s$.}
    \label{fig:success_rates_remaining_plots_appendix}
\end{figure}

\end{document}